\documentclass{article}
\usepackage[preprint]{neurips_2026}
\usepackage[T1]{fontenc}
\usepackage{amsmath}
\usepackage{amsthm}
\usepackage{libertinus}
\usepackage{libertinust1math}
\usepackage{microtype}
\usepackage{graphicx}
\usepackage{booktabs}
\usepackage{tabularx}
\usepackage{titletoc}
\usepackage[font=small,labelfont=bf]{caption}
\usepackage{thmtools}
\usepackage{mathrsfs}
\usepackage{bm}
\usepackage{comment}
\usepackage{float}
\usepackage{placeins}
\usepackage{algorithm}
\usepackage{algpseudocode}
\usepackage{xcolor}
\definecolor{preprintblue}{RGB}{18,61,120}
\usepackage{hyperref}
\hypersetup{
  pdftitle={A Splitting Method for SDE Terminal-Law Estimation},
  pdfauthor={Rushil Gupta and Sandeep Juneja},
  colorlinks=true,
  linkcolor=preprintblue,
  citecolor=preprintblue,
  urlcolor=preprintblue
}

\title{A Splitting Method for SDE Terminal-Law Estimation}
\author{%
  \begin{tabular}[t]{c@{\hspace{1.2in}}c}
    \textbf{Rushil Gupta}\textsuperscript{1} &
    \textbf{Sandeep Juneja}\textsuperscript{1}
  \end{tabular} \\[0.6em]
  \normalfont\textsuperscript{1}Safexpress Centre for Data, Learning and Decision Sciences \\
  \normalfont Ashoka University
}
\date{}

\newcommand{\E}{\mathbb{E}}
\newcommand{\Pbb}{\mathbb{P}}
\newcommand{\R}{\mathbb{R}}
\newcommand{\G}{\mathbb{G}}
\newcommand{\Var}{\operatorname{Var}}
\newcommand{\Cov}{\operatorname{Cov}}

\newcommand{\KS}{\operatorname{KS}}
\newcommand{\PAVA}{\operatorname{PAVA}}

\newcommand{\one}{\mathbf{1}}
\newcommand{\cA}{\mathcal{A}}
\newcommand{\cF}{\mathcal{F}}
\newcommand{\sF}{\mathscr{F}}
\newcommand{\ResultCell}[2]{#1\,\(\pm #2\%\)}
\newcommand{\BestResultCell}[2]{{\bfseries\boldmath #1\,\(\pm #2\%\)}}
\newcommand{\SplitRule}{\specialrule{0.3pt}{2pt}{2pt}}

\declaretheoremstyle[
    spaceabove=1.0\baselineskip,
    spacebelow=0.5\baselineskip
]{spacedtheorem}
\declaretheorem[style=spacedtheorem,name=Theorem]{theorem}
\declaretheorem[style=spacedtheorem,name=Corollary]{corollary}
\declaretheorem[style=spacedtheorem,name=Lemma]{lemma}
\declaretheorem[style=spacedtheorem,name=Proposition]{proposition}
\declaretheorem[style=spacedtheorem,name=Remark]{remark}
\declaretheorem[style=spacedtheorem,name=Assumption]{assumption}

\begin{document}
\maketitle

\begin{abstract}
In many settings involving stochastic differential equations, including in diffusion based generative AI, our aim is to accurately generate samples from a terminal distribution. Typically, this is done by generating i.i.d. samples of diffusion paths. Given a fixed simulation budget, a reasonable way to gain efficiency may be to instead generate a tree of paths through appropriately split partial paths. This suggests improved performance, but one worries about the injected dependence. In this paper, we study this issue comprehensively. With Kolmogorov-Smirnov distance as a measure of accuracy, we identify the limiting errors of the associated empirical distributions as the simulation budget increases to infinity. We characterize a splitting strategy motivated by a corresponding asymptotic optimization problem. The theoretical results bring out the elegant underlying structure in the problem. Practical implementation involves two phases, an initial estimation phase and a final inference phase. Overall, we observe a 10-25\% improvement in mean error over i.i.d. samples in many settings. In an exploratory CIFAR-10 study, our method reduces the maximum mean discrepancy by 8-13\%.\footnote{Code is available at \url{https://github.com/RushilGupta4/splitting}.}

\end{abstract}

\section{Introduction}
\label{sec:introduction}

Many applications require samples from a terminal  distribution of  a stochastic differential equation (SDE). In diffusion-based generative AI, e.g., in DDPMs \citep{Ho2020DDPM} and in EDMs \citep{Karras2022edm}, the trained model runs an SDE, and samples from its terminal distribution are of interest. A popular application is a diffusion model called GenCast that generates samples of weather forecasts, which are used to estimate important performance measures such as likelihood of extreme weather and uncertainty in wind power \citep{PriceEtAl2025}.

SDEs are also standard for modeling the evolution of financial quantities such as stock and commodity prices and interest rates. Here, the underlying SDE parameters are typically learned from data and one may be interested in ascertaining the probability distribution at the terminal time. Recently, diffusion models have also been used to generate financial time series \citep{HuangEtAl2024}.

In practice, one may be interested in a variety of functionals of the SDE terminal distribution, such as expectations of functions of the terminal output or tail risk measures. However, in this paper, we focus on the fundamental question of accurately estimating the underlying distribution given a fixed simulation budget. This is often computationally expensive. We ask whether a well-optimized branching process that splits paths at intermediate times can estimate the terminal distribution with less computational effort than i.i.d. sampling.

Our theoretical results reveal the elegant dependence structure of the branching process that is completely captured by the covariance of the SDE across time segments. This leads to an allocation problem where the amount of splitting in the branching process is chosen to minimize the limiting error. We solve a relaxed optimization problem, and approximate the solution with a finite mixture of dyadic trees with a provable approximation guarantee. We implement a two-phase procedure that first learns the variance contributions and then spends the remaining budget according to the learned allocation. We focus on the Kolmogorov-Smirnov (KS) distance, since it measures the worst error in the distribution function and is closely related to weak convergence. Our method is readily extensible to most metrics that are continuous under the uniform norm. Experiments spanning SDEs and diffusion models show consistent improvements over independent full-path simulation.

The paper is structured as follows. Section~\ref{sec:related-work} reviews the closest work. Section~\ref{sec:discretization-splitting-estimator} develops theory for a finite-budget and fixed-threshold setting and identifies the optimal allocation in that setting. Section~\ref{sec:asymptotic-analysis} develops limit theorems by modeling the empirical process. Section~\ref{sec:ks-statistic} studies the Kolmogorov-Smirnov error and the corresponding relaxed allocation problem. Section~\ref{sec:rounding} converts the relaxed allocation into a mixture of exact dyadic trees. Finally, Section~\ref{sec:numerical-experiments} presents the experiments.

\section{Related Work}
\label{sec:related-work}

Splitting and branching methods reduce Monte Carlo error by reusing part of a simulated path. In classical multilevel splitting, paths are copied when they reach intermediate levels leading to a rare event. \citet{GlassermanEtAl1999} study this method through branching-process arguments and show how the number of copies affects both variance and computational cost. Their analysis supports keeping the expected particle population roughly stable across levels. This line of work is mainly concerned with estimating a rare-event probability rather than a complete terminal distribution.

Related allocation questions also appear outside rare-event simulation. \citet{Melas1993} studies branching estimators for functionals of Markov chains and allows the branching rate to depend on the current state. More simulation effort is assigned where the future contribution to the estimator is more uncertain, while total cost is controlled. This gives an early cost-variance view of branching, but the target remains a scalar Markov-chain functional.

Later work extends the scope and theory of splitting. \citet{BotevKroese2012} introduce generalized splitting based on nested score sets and Markov-chain moves, allowing the method to be used even when the original problem has no natural time evolution. \citet{CerouGuyader2016} study adaptive multilevel splitting, where intermediate levels are learned from the current particle population, and establish consistency and central limit results. These methods cover a broad range of rare-event and integral-estimation problems, while their main focus remains individual probabilities or functionals rather than the full terminal CDF generated by one time-branching tree.

Distributional estimation is considered more directly by \citet{AndersonEhlert2022}. They estimate the terminal probability mass function of a stochastic reaction network by simulating paths to one intermediate time and then generating several independent continuations from each partial path. They choose the split time and the number of continuations by balancing distributional error against simulation cost. However, their construction uses a discrete state space and a single split.

For discretized SDEs, \citet{GilesHajiAli2024} use repeated path branching for a digital payoff within multilevel Monte Carlo. Their analysis studies the dependence created by shared histories and gives rules for placing repeated binary split times near maturity. The target is one payoff correction, and the main design variable is the timing of the splits rather than a general allocation of particle counts across the path.

\section{Discretization and Splitting Estimator}
\label{sec:discretization-splitting-estimator}
\begin{figure}[!htbp]
    \centering
    \begingroup
    \setlength{\fboxsep}{4pt}
    \fbox{\includegraphics[width=0.9\textwidth]{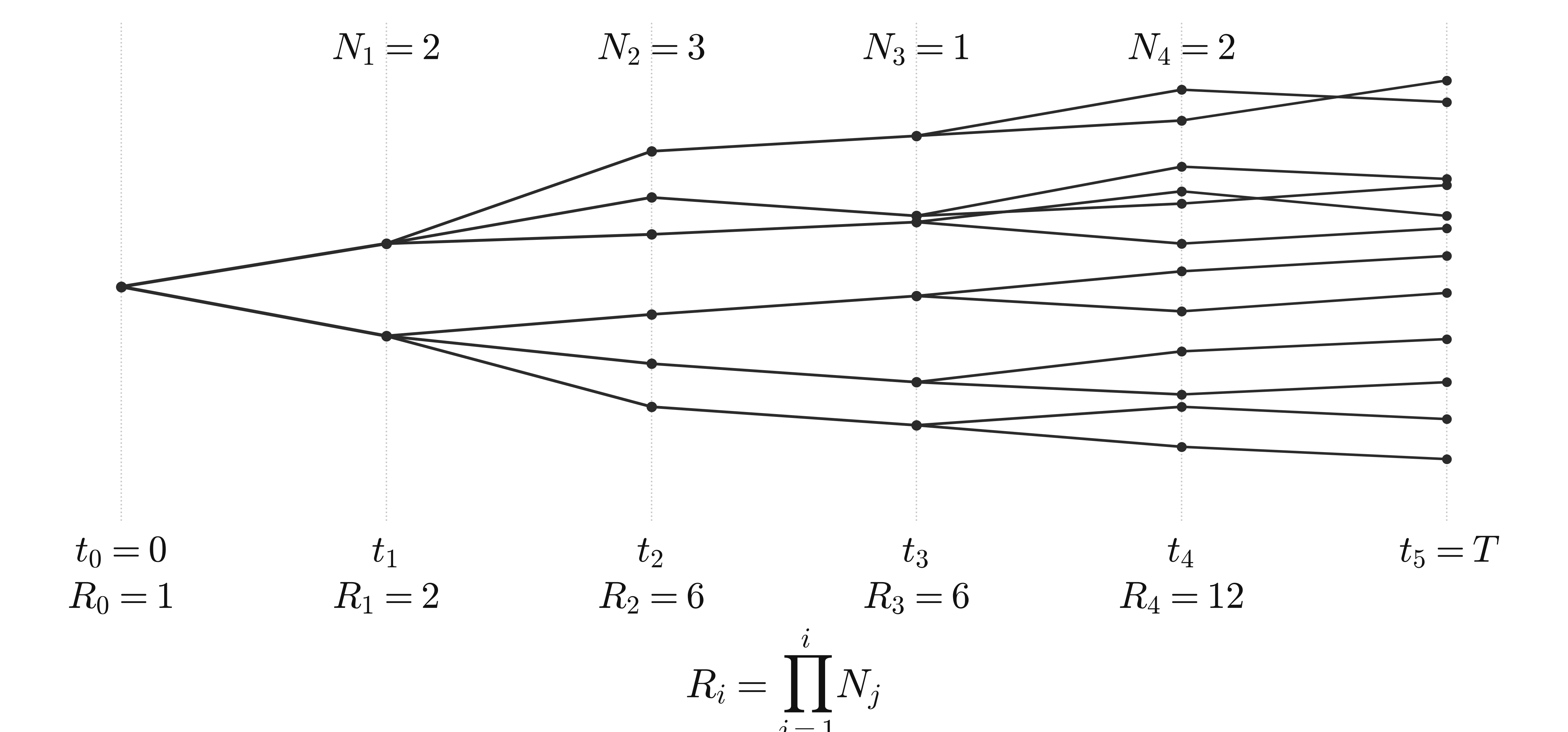}}
    \endgroup
    \caption{Illustration of splitting from a single root. At time \(t_{i,B}\), every particle is split into \(N_{i,B}\) copies.}
    \label{fig:splitting-schematic}
\end{figure}

Let \(X=(X_t)_{0\le t\le T}\) be a solution to
\[
    dX_t=b(t,X_t)\,dt+\sigma(t,X_t)\,dW_t,
    \qquad X_0\sim\mu_0,
    \qquad X_0\perp W.
\]
We aim to estimate its terminal distribution function \(F(x)=\Pbb(X_T\le x)\) for all \(x \in\R^d\). For each budget \(B\), choose a time partition and the corresponding partition size
\[
    0=t_{0,B}<t_{1,B}<\cdots<t_{K_B,B}=T, \qquad h_B=\max_{0\le i<K_B}(t_{i+1,B}-t_{i,B}).
\]
We assume throughout that \(h_B\to0\) as \(B\to\infty\). Let \(X_0^B,X_1^B,\ldots,X_{K_B}^B\) be a Markov-chain discretization of the SDE along this partition, with \(X_0^B\sim\mu_0\).  Its terminal distribution function is
\[
    F_B(x)=\Pbb\left(X_{K_B}^B\le x\right).
\]
There are \(K_B\) transitions. Transition \(i\) propagates particles from \(t_{i,B}\) to \(t_{i+1,B}\). At \(t_{0,B}\), there are \(N_{0,B}\) independent root particles. For \(1\le i\le K_B-1\), split \(i\) occurs after particles arrive at \(t_{i,B}\) and before transition \(i\). Each arriving particle is copied \(N_{i,B}\ge1\) times, and the copies are propagated independently through transition \(i\). An illustration is shown in Figure~\ref{fig:splitting-schematic}. Define
\[
    R_{0,B}=1,
    \qquad
    R_{i,B}=\prod_{j=1}^i N_{j,B},
    \quad 1\le i\le K_{B}-1.
\]
At the terminal time, there are \(R_{K_B-1,B}\) descendants per root. If \(Z_{B,j}^{(\ell)}\) denotes terminal descendant \(\ell\) from root \(j\), the splitting estimator is
\[
    \widehat F_B(x)
    =
    \frac1{N_{0,B}R_{K_B-1,B}}
    \sum_{j=1}^{N_{0,B}}
    \sum_{\ell=1}^{R_{K_B-1,B}}
    \one\!\left(Z_{B,j}^{(\ell)}\le x\right),
    \qquad x\in\R^d.
\]
It is unbiased for \(F_B(x)\). For simplicity, we assume that each transition has the same unit cost. However, this can be relaxed, and our results extend to settings where each transition has a different cost. Refer to Appendix~\ref{app:cost-weighted} for more details. In the unit cost setting, a fully used budget satisfies
\[
    B=N_{0,B}\sum_{i=0}^{K_B-1}R_{i,B}.
\]
For \(i=0,\ldots,K_B\), define the discrete conditional probability value
\[
    u^B_{i,x}(z)=\E[\one(X_{K_B}^B\le x)\mid X_i^B=z].
\]
Particles that share a root remain dependent through their common history. Once they split, their transitions are conditionally independent. The next result shows exactly how this structure changes the covariance of the empirical CDF.

\begin{lemma}[Splitting covariance]\label{lem:splitting-cov}
For every \(x,y\in\R^d\),
\[
    \Cov(\widehat F_B(x),\widehat F_B(y))
    =
    \frac1{N_{0,B}}
    \left[
        \Gamma_{0,B}(x,y)
        +
        \sum_{i=0}^{K_B-1}\frac{v_{i,B}(x,y)}{R_{i,B}}
    \right],
\]
where
\begin{align*}
    \Gamma_{0,B}(x,y)
    &=\Cov\!\left(u^B_{0,x}(X_0^B),u^B_{0,y}(X_0^B)\right),\\
    v_{i,B}(x,y)
    &=\E\!\left[
        \Cov\!\left(
            u^B_{i+1,x}(X_{i+1}^B),
            u^B_{i+1,y}(X_{i+1}^B)
            \mid X_i^B
        \right)
    \right].
\end{align*}
\end{lemma}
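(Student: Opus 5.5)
Since roots are i.i.d. and each root generates an identically distributed subtree, I will first reduce to a single root: writing \(\widehat F_B(x)=N_{0,B}^{-1}\sum_j Y_j(x)\) with \(Y_j(x)=R_{K_B-1,B}^{-1}\sum_\ell \one(Z^{(\ell)}_{B,j}\le x)\), independence across roots gives \(\Cov(\widehat F_B(x),\widehat F_B(y))=N_{0,B}^{-1}\Cov(Y_1(x),Y_1(y))\). It then remains to show that the covariance for one root's tree equals \(\Gamma_{0,B}(x,y)+\sum_{i=0}^{K_B-1}v_{i,B}(x,y)/R_{i,B}\).

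For a single tree I will use a martingale (Doob) decomposition along the levels of the tree. Let \(\mathcal G_i\) be the sigma-field generated by all particle positions in the tree up to time \(t_{i,B}\), i.e.\ the \(R_{i-1,B}\) distinct particles at level \(i\) (with \(R_{-1,B}:=1\)); after splitting these become \(R_{i,B}\) particles entering transition \(i\). Define \(M_i(x)=R_{i-1,B}^{-1}\sum_{p\text{ at level } i} u^B_{i,x}(\text{position of }p)\). By the Markov property and conditional independence of copies after a split, together with \(u^B_{i,x}(z)=\E[u^B_{i+1,x}(X^B_{i+1})\mid X^B_i=z]\), each terminal descendant of a level-\(i\) particle at \(z\) has conditional probability \(u^B_{i,x}(z)\) of being \(\le x\), and every level-\(i\) particle has the same number of terminal descendants. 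Hence \(M_i(x)=\E[Y_1(x)\mid\mathcal G_i]\), with \(M_0(x)=u^B_{0,x}(X^B_0)\) and \(M_{K_B}(x)=Y_1(x)\). The increments \(D_{i+1}=M_{i+1}-M_i\) are then orthogonal across \(i\) and orthogonal to \(M_0\), so
\[
\Cov(Y_1(x),Y_1(y))=\Cov(M_0(x),M_0(y))+\sum_{i=0}^{K_B-1}\E[D_{i+1}(x)D_{i+1}(y)].
\]
The first term is \(\Gamma_{0,B}(x,y)\).

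For each increment, I group the \(R_{i,B}\) particles propagated through transition \(i\): each has parent position \(Z_p\) at level \(i\) and an independent child \(Z'_q\) at level \(i+1\). Then \(D_{i+1}(x)=R_{i,B}^{-1}\sum_q\bigl(u^B_{i+1,x}(Z'_q)-u^B_{i,x}(Z_{p(q)})\bigr)\). Conditionally on \(\mathcal G_i\), the summands are independent, mean zero, and each has conditional cross-covariance \(c_i(Z_{p(q)})\), where \(c_i(z)=\Cov(u^B_{i+1,x}(X^B_{i+1}),u^B_{i+1,y}(X^B_{i+1})\mid X^B_i=z)\). Cross terms therefore vanish, and \(\E[D_{i+1}(x)D_{i+1}(y)\mid\mathcal G_i]=R_{i,B}^{-2}\sum_q c_i(Z_{p(q)})\). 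Since each level-\(i\) particle has marginal law equal to that of \(X^B_i\) (each path in the tree is a copy of the chain), taking expectations gives \(R_{i,B}^{-1}\E[c_i(X^B_i)]=v_{i,B}(x,y)/R_{i,B}\).

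The main obstacle is the careful bookkeeping: indexing splits versus transitions so that exactly \(R_{i,B}\) independent propagations occur in transition \(i\), and verifying that the identity \(M_i=\E[Y_1\mid\mathcal G_i]\) holds for the entire tree rather than just along a single path. Both follow from the equal descendant counts per level and from conditional independence of copies given the tree's history.
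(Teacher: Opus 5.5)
Your proof is correct and follows essentially the same route as the paper: a Doob martingale decomposition along the levels of the tree, where conditional independence of the propagated copies turns each increment's second moment into $v_{i,B}(x,y)/R_{i,B}$ after averaging over the $R_{i,B}$ particles crossing transition $i$. The only difference is bookkeeping. You first reduce to a single root by independence, whereas the paper runs the martingale over the whole forest and adds a trivial level $\sF_{-1}$ whose increment gives the $\Gamma_{0,B}/N_{0,B}$ term.
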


The proof is given in Appendix~\ref{app:proof-splitting-cov}. The formula separates uncertainty inherited from the initial state from uncertainty revealed later. The term \(v_{i,B}\) is the new covariance while crossing transition \(i\), and its contribution is divided by the population propagated through that transition. Increasing the population before one transition therefore reduces uncertainty created there and later.

To get insight into the optimal allocation, we first consider the problem of minimizing the variance at one threshold \(x\). Write
\[
    \widetilde v_k
    =
    v_{k,B}(x,x)+\one_{\{k=0\}}\Gamma_{0,B}(x,x),
    \qquad
    n_k=N_{0,B}R_{k,B}.
\]
Here, \(n_k\) is the number of particles propagated through transition \(k\). We consider a relaxation where the \(n_k\) can be positive reals. The fixed-budget variance minimization problem is then
\[
    \min_{n\in\R_{>0}^{K_B}}
    \sum_{k=0}^{K_B-1}\frac{\widetilde v_k}{n_k}
    \quad
    \text{s.t.}
    \quad
    \sum_{k=0}^{K_B-1}n_k=B,
    \qquad
    n_0\le n_1\le \cdots \le n_{K_B-1}.
\]
The monotonicity constraint is handled by using Pool-Adjacent-Violators Algorithm (PAVA) \citep[Chapter~1]{RobertsonWrightDykstra1988}. See Appendix~\ref{app:pava} for a detailed discussion and construction of PAVA.

\begin{lemma}[Variance Optimization with Monotone Allocation]\label{lem:var-opt}
    Fix \(x\in\R^d\), and assume \(\widetilde v_k>0\) for all \(k\). The unique minimizer is
    \[
        n_k^*
        =
        B
        \frac{\sqrt{\widehat v_k}}
        {\sum_{j=0}^{K_B-1}\sqrt{\widehat v_j}}, \qquad \widehat v = \PAVA(\widetilde v).
    \]
\end{lemma}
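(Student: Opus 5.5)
We treat this as a convex program and verify the KKT conditions for the proposed point. The objective \(f(n)=\sum_k \widetilde v_k/n_k\) is strictly convex on \(\R_{>0}^{K_B}\), since each \(\widetilde v_k>0\). The feasible set (the budget hyperplane intersected with the monotone cone) is convex and polyhedral. Hence any KKT point is the unique global minimizer. Uniqueness also follows directly from strict convexity.

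\textbf{Step 1: describe \(\widehat v\).} PAVA on \(\widetilde v\) with unit weights returns the isotonic (nondecreasing) least-squares fit. It partitions \(\{0,\dots,K_B-1\}\) into consecutive blocks \(I_1,\dots,I_m\). On each block \(\widehat v\) is constant and equal to the block mean \(\bar v_r=|I_r|^{-1}\sum_{k\in I_r}\widetilde v_k\). The block values are strictly increasing in \(r\). The standard characterization from Appendix~\ref{app:pava} gives the partial-sum property: for every block \(I_r=\{a,\dots,b\}\) and every \(a\le j\le b\),
\[
\sum_{k=a}^{j}\widetilde v_k\ \ge\ (j-a+1)\bar v_r,
\]
with equality at \(j=b\). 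This holds because PAVA merges only when adjacent violators occur, so no initial segment of a block has mean below the block mean.

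\textbf{Step 2: candidate and multipliers.} Set \(n_k^*=c\sqrt{\widehat v_k}\) with \(c=B/\sum_j\sqrt{\widehat v_j}\). This point is feasible, nondecreasing, and constant on blocks. Write the constraints as \(g_k(n)=n_k-n_{k+1}\le0\) for \(k=0,\dots,K_B-2\), with multipliers \(\mu_k\ge0\), and the budget constraint with multiplier \(\lambda\). Stationarity reads
\[
-\widetilde v_k/n_k^2+\lambda+\mu_k-\mu_{k-1}=0,
\qquad \mu_{-1}=\mu_{K_B-1}=0.
\]
Take \(\lambda=1/c^2\). On block \(I_r\) we have \(n_k^2=c^2\bar v_r\), so \(\mu_j-\mu_{j-1}=(\widetilde v_j-\bar v_r)/(c^2\bar v_r)\). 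Telescoping from the block's start, where \(\mu_{a-1}=0\) is imposed at block boundaries, gives
\[
\mu_j=\frac{1}{c^2\bar v_r}\sum_{k=a}^{j}(\widetilde v_k-\bar v_r).
\]
This is \(\ge0\) by the partial-sum property and vanishes at \(j=b\). That consistency check is exactly what allows \(\mu_{b}=0\) at every block boundary.

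\textbf{Step 3: complementary slackness.} Inside a block the constraints are active, since \(n_k=n_{k+1}\), so any \(\mu\ge0\) is allowed there. At block boundaries \(\mu=0\). Therefore all KKT conditions hold, and by convexity \(n^*\) is the global minimizer. The linear constraints satisfy a constraint qualification, although sufficiency of KKT under convexity needs no such qualification anyway.

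The main obstacle is Step 1's partial-sum inequality: it is the precise fact about PAVA output that makes the multipliers nonnegative. If Appendix~\ref{app:pava} does not state it, I would prove it by induction over PAVA's merge steps. When two adjacent blocks, each satisfying the property, are merged because the left mean exceeds the right mean, prefixes ending in the left block have mean \(\ge\) the left mean, which exceeds the merged mean. Prefixes extending into the right block equal the left block's sum plus a prefix of the right block. Both pieces are bounded below by their means, and a short weighted-average argument shows the result is \(\ge\) the merged mean.
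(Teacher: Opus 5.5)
Your proof is correct, but it certifies optimality from the dual side, whereas the paper argues on the primal side.

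The paper fixes an arbitrary feasible monotone $n$ and uses the same partial-sum property of the PAVA blocks. It applies summation by parts against the nonincreasing sequence $1/n_k$ to get $\sum_k \widehat v_k/n_k \le \sum_k \widetilde v_k/n_k$. It then applies Cauchy--Schwarz under the budget constraint, obtaining the lower bound $(\sum_k\sqrt{\widehat v_k})^2/B$. The point $n^*$ attains this bound because it is proportional to $\sqrt{\widehat v}$ and constant on blocks.

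Your multipliers $\mu_j$ are, up to the factor $1/(c^2\bar v_r)$, exactly the partial sums $\sum_{k=a}^{j}(\widetilde v_k-\bar v_r)$ that appear as Abel-summation weights in the paper. So the two arguments are dual views of a single certificate: nonnegativity of these partial sums is dual feasibility for you and the sign of the block inequality for the paper.

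The paper's route produces the optimal value and the inequality $\sum_k \widetilde v_k q_k \ge \sum_k \widehat v_k q_k$ (for nonincreasing $q$) as standalone facts. It reuses these directly in Theorem~\ref{thm:functional-minimax-opt}, via integration by parts against the minorant gap, and in Corollary~\ref{cor:minimax-var-opt}, with no KKT theory in function space. Your route is a more mechanical convex-optimization check, but it makes transparent why pooling is forced: it is exactly what permits $\mu\ge 0$.

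Two small remarks.
\begin{itemize}
\item The partial-sum property is not stated verbatim in Appendix~\ref{app:pava}. It follows from the greatest-convex-minorant description there, since on each block the cumulative sums lie above the affine piece of the minorant, and your merge induction also proves it.
\item Blocks produced by the merge rule need not have strictly increasing values; for example, $\widetilde v=(1,1)$ leaves two singleton blocks with equal means. This does not affect your argument. At such a boundary the constraint is active and $\mu=0$ is still admissible, or you can pass to maximal constant runs, which inherit the partial-sum property.
\end{itemize}
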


The proof is given in Appendix~\ref{app:proof-var-opt}. The key idea here is that \( n_k \) follow a Neyman allocation with respect to the variances, after accounting for the monotonicity constraint. This solves the problem for one threshold, whereas a terminal-law estimator has to account for all thresholds, and a single allocation must control the entire CDF together. We develop theory for this in the next section.

\section{Asymptotic Analysis}
\label{sec:asymptotic-analysis}

This section develops the asymptotic theory of the splitting estimator in three steps: a continuous-time covariance limit, a functional central limit theorem, and a bias-variance tradeoff.

\subsection{Covariance Convergence}
\label{sec:covariance-asymptotic-convergence}

In this subsection, our goal is to replace the covariance formula from Lemma~\ref{lem:splitting-cov} with a continuous-time counterpart. Under suitable assumptions, we identify the continuous-time limit of \(v_i\). We then provide an expression analogous to Lemma~\ref{lem:splitting-cov}. For \(x\in\R^d\), write \( a_x(z)=\one(z\le x)\) and let
\[
    \sF_t=\sigma(X_0)\vee\sigma(W_s:0\le s\le t),
    \qquad 0\le t\le T,
\]
be the natural filtration of the SDE. Define the cumulative splitting function
\[
    r_B(t)=R_{i,B},
    \qquad t\in[t_{i,B},t_{i+1,B}).
\]
\begin{assumption}[Coupled Markov approximation]\label{ass:terminal-strong-consistency}
The discretized process \(X^B\) and the SDE solution \(X\) are defined on the same filtered probability space and \(X_0^B = X_0\) almost surely. For each \(B\) and \(0\le i\le K_B\), \(X_i^B\) is \(\sF_{t_{i,B}}\)-measurable and
\[
    \E[a_x(X^B_{K_B})\mid\sF_{t_{i,B}}]=\E[a_x(X^B_{K_B})\mid X_i^B],
    \qquad
    \E[a_x(X_T)\mid\sF_t]=\E[a_x(X_T)\mid X_t],
    \qquad x\in\R^d.
\]
Moreover,
\[
    \left\|X^B_{K_B}-X_T\right\|_{L^2} \to 0.
\]
\end{assumption}
Assumption~\ref{ass:terminal-strong-consistency} ensures consistency of the discretization scheme and assumes the Markov property for \(X\) at the terminal time. It will be used to compare the discrete and continuous processes. It is satisfied by the Euler-Maruyama scheme under global Lipschitz and linear-growth conditions on the drift and diffusion coefficients; see \citet[Theorem 10.2.2]{KloedenPlaten1992}.

\begin{assumption}[Splitting-function limit]\label{ass:splitting-function}
The splitting functions satisfy \(r_B(0+)=1\), and each \(r_B\) is nondecreasing. Moreover,
\[
    r_B
    \longrightarrow
    r
    \quad\text{in }L^1([0,T]),
    \qquad r(0+)=1.
\]
\end{assumption}
Since the splitting function is a design choice, many irregular and feasible splitting functions can be constructed. Assumption~\ref{ass:splitting-function} is a regularity condition which restricts the class of functions permitted.

\begin{assumption}[Continuous terminal CDF]\label{ass:terminal-weak-conv}
The terminal CDF \(F(x) = \Pbb(X_T\le x)\) is continuous.
\end{assumption}

We begin by identifying the continuous-time limit of the covariance structure induced by the splitting estimator. As we saw in the discrete case, \(u^B_{i,x}\) was central to our arguments, and the continuous-time analog of it is
\[
    u_x(t,z)=\E\left[a_x(X_T)\mid X_t=z\right], \qquad u_x(t,X_t)=\E\left[a_x(X_T) \mid \sF_t\right].
\]
Since \(a_x(X_T)\) is bounded, \(u_x(t,X_t)\) is a bounded square-integrable martingale. Applying martingale representation conditionally on \(X_0\), there is a predictable process \(H_x\) such that
\[
    u_x(t,X_t)
    =
    u_x(0,X_0)+\int_0^t H_x(\tau)^\top dW_\tau,
    \qquad 0\le t\le T.
\]
The following lemma shows that the entire covariance structure of the SDE is captured by the predictable process \(H_x\) and identifies the continuous-time analog of the one-step covariances \(v_{i,B}\).

\begin{lemma}
\label{lem:continuous-step-covariance}
For \(x,y\in\R^d\), define, for almost every \(t\in[0,T]\),
\[
    g_{x,y}(t)
    :=
    \E\left[H_x(t)^\top H_y(t)\right].
\]
Fix \(B\) and \(0\le i\le K_B-1\), and set \(s=t_{i,B}\) and \(t=t_{i+1,B}\). Then the true SDE's covariance contribution over the interval \([s,t]\) is equal to the integral of \(g_{x,y}\) over that interval, i.e.,
\[
    v^{\mathrm{SDE}}_{i,B}(x,y)
    :=
    \E\left[
        \Cov\!\left(
            u_x(t,X_t),
            u_y(t,X_t)
            \mid \sF_s
        \right)
    \right]
    = \int_s^t g_{x,y}(\tau)\,d\tau.
\]
Moreover, \(g_{x,y}\in L^1([0,T])\).
\end{lemma}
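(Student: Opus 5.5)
We work directly with the martingale \(M_x(t):=u_x(t,X_t)=\E[a_x(X_T)\mid\sF_t]\) and its representation \(M_x(t)=M_x(0)+\int_0^t H_x(\tau)^\top dW_\tau\). The plan has three steps: rewrite the conditional covariance over \([s,t]\) as a product of martingale increments, evaluate that product with Itô's isometry, and then use boundedness of \(M_x\) to get integrability of \(g_{x,y}\).

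\textbf{Step 1: conditional covariance as a product of increments.} Since \(M_x(s)\) is \(\sF_s\)-measurable and \(\E[M_x(t)\mid\sF_s]=M_x(s)\), the conditional covariance is
\[
\Cov\!\left(M_x(t),M_y(t)\mid\sF_s\right)=\E\!\left[(M_x(t)-M_x(s))(M_y(t)-M_y(s))\mid\sF_s\right].
\]
Taking expectations and using the tower property gives
\[
v^{\mathrm{SDE}}_{i,B}(x,y)=\E\!\left[(M_x(t)-M_x(s))(M_y(t)-M_y(s))\right].
\]
Both increments are stochastic integrals \(\int_s^t H_x^\top dW\) and \(\int_s^t H_y^\top dW\).

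\textbf{Step 2: Itô's isometry.} Because \(|M_x|\le1\), each \(M_x\) is an \(L^2\)-bounded martingale, so
\[
\E\!\int_0^T|H_x(\tau)|^2\,d\tau=\Var(M_x(T)-M_x(0))\le 1.
\]
Hence \(H_x\) lies in the \(L^2\) space of predictable integrands. The polarized Itô isometry then gives
\[
\E\!\left[\int_s^t H_x^\top dW\int_s^t H_y^\top dW\right]=\E\!\int_s^t H_x(\tau)^\top H_y(\tau)\,d\tau.
\]
The integrand is bounded by \(\tfrac12(|H_x|^2+|H_y|^2)\), which is integrable on \(\Omega\times[0,T]\). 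So Fubini applies, and the right-hand side equals \(\int_s^t g_{x,y}(\tau)\,d\tau\). Here \(g_{x,y}(\tau)=\E[H_x(\tau)^\top H_y(\tau)]\) is defined and finite for almost every \(\tau\).

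\textbf{Step 3: integrability of \(g_{x,y}\).} By the same domination,
\[
\int_0^T|g_{x,y}(\tau)|\,d\tau\le\tfrac12\left(\E\!\int_0^T|H_x|^2\,d\tau+\E\!\int_0^T|H_y|^2\,d\tau\right)\le1.
\]
Therefore \(g_{x,y}\in L^1([0,T])\).

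\textbf{Main obstacle.} The delicate point is that the martingale representation is taken conditionally on \(X_0\), i.e.\ in the filtration \(\sF_t=\sigma(X_0)\vee\sigma(W_s:s\le t)\), which is initially enlarged. I would handle this by noting that \(X_0\perp W\) keeps \(W\) a Brownian motion with respect to \(\sF_t\). Representation in this enlarged filtration holds, either by applying the Brownian representation for each fixed value of \(X_0\) and checking joint measurability, or by citing the standard result for filtrations generated by an independent \(\sigma\)-field and a Brownian motion. Once \(H_x\) is \(\sF\)-predictable with \(\E\int|H_x|^2<\infty\), the Itô isometry in Step 2 applies with respect to \(\sF\), and Step 2 goes through as written. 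No Markov property is needed here, beyond the identification \(u_x(t,X_t)=M_x(t)\) already stated in the setup.
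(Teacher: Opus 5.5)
Your proof is correct and follows essentially the same route as the paper. You reduce the expected conditional covariance to $\E[(M_x(t)-M_x(s))(M_y(t)-M_y(s))]$ by the martingale property, evaluate it through the stochastic-integral representation, and get integrability from the $L^2$ bound on $H_x$; the only differences are cosmetic (polarized It\^o isometry instead of the paper's expansion to $\E[u_x(t,X_t)u_y(t,X_t)-u_x(s,X_s)u_y(s,X_s)]$ with the martingale $u_xu_y-\langle u_x,u_y\rangle$, and the bound $\tfrac12(|H_x|^2+|H_y|^2)$ giving constant $1$ instead of Cauchy--Schwarz with $\E\int_0^T\|H_x\|_2^2\,d\tau\le\E[\Var(a_x(X_T)\mid X_0)]\le 1/4$).
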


The proof is given in Appendix~\ref{app:proof-continuous-step-covariance}. Here, \(g_{x,y}(t)\) is interpreted as the covariance density at time \(t\) for the SDE, and represents the rate at which covariance is being changed at time \(t\). As in the discrete case, let the initial variance be
\[
    \Gamma_0(x,y)
    :=
    \Cov\!\left(
        u_x(0,X_0),
        u_y(0,X_0)
    \right).
\]
The martingale representation separates these two orthogonal sources of randomness, giving
\[
    \Cov(a_x(X_T),a_y(X_T))
    =
    \Gamma_0(x,y)
    +
    \int_0^T g_{x,y}(t)\,dt.
\]

\begin{remark}
If, in addition, \(u_x \in C^{1,2}\), and hence sufficiently smooth for It\^o's formula to apply, then
\[
    H_x(t)=\sigma(t,X_t)^\top \nabla u_x(t,X_t),
\]
and therefore we can identify
\[
    g_{x,y}(t)
    =
    \E\left[
        \nabla u_x(t,X_t)^\top
        \sigma(t,X_t)\sigma(t,X_t)^\top
        \nabla u_y(t,X_t)
    \right].
\]
\end{remark}

Having identified the covariance density, we now show that the discrete one-step covariances converge to it. Using the splitting function defined above, define the scaled covariance kernel
\[
    \Gamma_B(x,y)
    :=
    N_{0,B}\Cov(\widehat F_B(x),\widehat F_B(y))
    =
    \Gamma_{0,B}(x,y)
    +
    \sum_{i=0}^{K_B-1}\frac{v_{i,B}(x,y)}{R_{i,B}}.
\]

\begin{proposition}
\label{prop:covariance-convergence}
Suppose Assumptions~\ref{ass:terminal-strong-consistency} and~\ref{ass:terminal-weak-conv} hold. Then, for every
\(x,y\in\R^d\),
\[
    \Gamma_{0,B}(x,y)\to \Gamma_0(x,y),
    \quad
    \text{and}
    \quad
    \sum_{i=0}^{K_B-1}
    \left|
        v_{i,B}(x,y)-v^{\mathrm{SDE}}_{i,B}(x,y)
    \right|
    \to 0.
\]
\end{proposition}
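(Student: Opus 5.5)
The plan is to turn both claims into statements about $L^2$ distances between two martingales. Fix $x,y$ and write
\[
M^B_i(x)=u^B_{i,x}(X_i^B)=\E[a_x(X^B_{K_B})\mid\sF_{t_{i,B}}],
\]
which is a valid identity by the Markov part of Assumption~\ref{ass:terminal-strong-consistency}. Likewise write
\[
M_i(x)=u_x(t_{i,B},X_{t_{i,B}})=\E[a_x(X_T)\mid\sF_{t_{i,B}}].
\]
Both are martingales along the grid filtration $\mathcal G_i=\sF_{t_{i,B}}$, with terminal values $a_x(X^B_{K_B})$ and $a_x(X_T)$.

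The first step is to show
\[
\|a_x(X^B_{K_B})-a_x(X_T)\|_{L^2}\to0 .
\]
Since $X^B_{K_B}\to X_T$ in $L^2$, it also converges in probability. The indicator $a_x$ is discontinuous only on $\partial\{z\le x\}$, which is the set where some coordinate equals $x$ coordinatewise. Assumption~\ref{ass:terminal-weak-conv} gives continuity of $F$. I will argue that this makes $\Pbb(X_T\in\partial\{z\le x\})=0$, since a jump of $F$ at the boundary would be needed otherwise. For the multivariate CDF this is the delicate point, and I will spell it out.

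Once the boundary is null, $a_x(X^B_{K_B})\to a_x(X_T)$ in probability. For any $\delta>0$, the mismatch event is contained in $\{|X^B-X_T|>\delta\}\cup\{X_T \text{ within }\delta\text{ of the boundary}\}$, and both pieces have small probability. Bounded convergence then gives the $L^2$ statement.

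By conditional Jensen, for every $i$ we get
\[
\|M^B_i(x)-M_i(x)\|_{L^2}\le \varepsilon_B(x):=\|a_x(X^B_{K_B})-a_x(X_T)\|_{L^2}\to0 .
\]
Taking $i=0$, with $X_0^B=X_0$, the covariance of $(M_0^B(x),M_0^B(y))$ converges to that of $(M_0(x),M_0(y))$. This holds because covariance is a continuous bilinear map on $L^2$ with bounded entries. So $\Gamma_{0,B}\to\Gamma_0$.

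For the second claim, set $D^B_i(x)=M^B_{i+1}(x)-M^B_i(x)$ and $D_i(x)=M_{i+1}(x)-M_i(x)$. The one-step conditional covariances then take the form
\[
v_{i,B}(x,y)=\E[D^B_i(x)D^B_i(y)],\qquad v^{\mathrm{SDE}}_{i,B}(x,y)=\E[D_i(x)D_i(y)].
\]
For $v_{i,B}$, the conditioning on $X_i^B$ may be replaced by $\mathcal G_i$ via the Markov property. The identity uses that the conditional mean of $M_{i+1}$ given $\mathcal G_i$ is $M_i$.

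Write $E_i=D^B_i-D_i$. Then
\[
D^B_i(x)D^B_i(y)-D_i(x)D_i(y)=E_i(x)D^B_i(y)+D_i(x)E_i(y).
\]
Applying Cauchy–Schwarz over the sum in $i$ gives
\[
\sum_i|v_{i,B}-v^{\mathrm{SDE}}_{i,B}|\le \Big(\sum_i\|E_i(x)\|^2\Big)^{1/2}\Big(\sum_i\|D^B_i(y)\|^2\Big)^{1/2}+(\text{symmetric term}).
\]
Orthogonality of martingale increments turns each sum of squares into a single quantity. First,
\[
\sum_i\|D^B_i(y)\|^2=\Var(a_y(X^B_{K_B}))-\Var(M_0^B(y))\le1 ,
\]
and the same bound holds for $D_i$. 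Second, $E_i(x)$ are the increments of the martingale $M^B(x)-M(x)$, so
\[
\sum_i\|E_i(x)\|^2\le\|a_x(X^B_{K_B})-a_x(X_T)\|_{L^2}^2=\varepsilon_B(x)^2 .
\]
The total is therefore at most $\varepsilon_B(x)+\varepsilon_B(y)\to0$.

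The main obstacle is the first step: deducing $L^2$ convergence of the indicators from continuity of $F$ in $\R^d$. If the boundary-null argument resists, I will use a fallback. I will sandwich $a_x$ between continuous functions $\phi^-\le a_x\le\phi^+$ that differ only on a thin shell. The limit $\E[\phi^+-\phi^-](X_T)$ is then controlled by the modulus of continuity of $F$ near $x$. The rest of the argument is pure Hilbert-space martingale algebra and holds uniformly in the partition, which is why no assumption on $h_B$ is needed here.
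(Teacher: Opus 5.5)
Your proposal is correct and follows essentially the same route as the paper. Both arguments obtain $L^2$ convergence of the terminal indicators from convergence in probability plus a null boundary, then use conditional-expectation contraction for $\Gamma_{0,B}$. Both then apply the split $ab-cd=(a-c)b+c(b-d)$ with Cauchy--Schwarz over $i$ and orthogonality of martingale increments, which bounds the summed one-step discrepancies by $\varepsilon_B(x)+\varepsilon_B(y)$.

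The boundary-null step you flag as delicate is only asserted in the paper. It follows from continuity of $F$ at $x$ alone: for each coordinate $j$, $\Pbb(X_T\le x,\,X_{T,j}=x_j)=\lim_{\epsilon\downarrow0}[F(x)-F(x-\epsilon e_j)]=0$, and $\partial(-\infty,x]$ is the union over $j$ of the faces $\{z\le x:\,z_j=x_j\}$.
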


The proof is given in Appendix~\ref{app:proof-covariance-convergence}. This shows that each one-step covariance converges to its continuous-time analog, and the initial variance also converges. These convergences alone, however, do not establish a covariance limit for the splitting estimator.

\begin{proposition}[Covariance limit]\label{prop:covlimit}
Suppose Assumptions \ref{ass:terminal-strong-consistency}, \ref{ass:splitting-function}, and \ref{ass:terminal-weak-conv} hold.  Then, for every \(x,y\in\R^d\),
\[
    \Gamma_B(x,y)\to\Gamma(x,y)
    =
    \Gamma_0(x,y)
    +
    \int_0^T \frac{g_{x,y}(t)}{r(t)}\,dt.
\]
Moreover, \(\Gamma\) is positive semidefinite and
\[
    d_\Gamma(x,y)^2
    :=
    \Gamma(x,x)+\Gamma(y,y)-2\Gamma(x,y)
    \le \left\|a_x(X_T)-a_y(X_T)\right\|_{L^2}^2.
\]
\end{proposition}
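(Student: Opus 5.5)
We split \(\Gamma_B(x,y)\) into three parts:
\[
    \Gamma_B(x,y)=\Gamma_{0,B}(x,y)+\sum_{i=0}^{K_B-1}\frac{v_{i,B}(x,y)-v^{\mathrm{SDE}}_{i,B}(x,y)}{R_{i,B}}+\sum_{i=0}^{K_B-1}\frac{v^{\mathrm{SDE}}_{i,B}(x,y)}{R_{i,B}}.
\]
The first term converges to \(\Gamma_0(x,y)\) by Proposition~\ref{prop:covariance-convergence}. Since \(R_{i,B}\ge1\), the absolute value of the second term is at most \(\sum_i|v_{i,B}-v^{\mathrm{SDE}}_{i,B}|\), which tends to \(0\) by the same proposition. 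For the third term we use Lemma~\ref{lem:continuous-step-covariance}. Because \(r_B\equiv R_{i,B}\) on \([t_{i,B},t_{i+1,B})\), it can be written exactly as a single integral:
\[
    \sum_{i}\frac{1}{R_{i,B}}\int_{t_{i,B}}^{t_{i+1,B}}g_{x,y}(\tau)\,d\tau=\int_0^T\frac{g_{x,y}(t)}{r_B(t)}\,dt.
\]

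The main step is to show \(\int_0^T g_{x,y}/r_B\,dt\to\int_0^T g_{x,y}/r\,dt\). We use the dominated bound \(|g_{x,y}/r_B|\le|g_{x,y}|\in L^1\), which holds because \(r_B\ge1\). By Assumption~\ref{ass:splitting-function}, \(r_B\to r\) in \(L^1\). This gives convergence in measure, so along any subsequence there is a further subsequence with \(r_B\to r\) a.e.

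We also need \(r\ge1\) a.e. so that \(1/r\) is well defined and bounded by \(1\). This holds because each \(r_B\ge1\) and a.e. limits preserve the inequality. With \(r\ge 1\), the map \(s\mapsto1/s\) is continuous on \([1,\infty]\), so \(1/r_B\to1/r\) a.e. along the sub-subsequence. Dominated convergence then gives convergence of the integrals along it. Since every subsequence has such a further subsequence with the same limit, the full sequence converges. This proves \(\Gamma_B\to\Gamma\) pointwise.

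Positive semidefiniteness and the metric bound both follow by passing to the limit in finite-\(B\) identities. For any points \(x_1,\dots,x_m\) and coefficients \(c\in\R^m\),
\[
    \sum_{k,l}c_kc_l\Gamma_B(x_k,x_l)=N_{0,B}\Var\Big(\sum_k c_k\widehat F_B(x_k)\Big)\ge0.
\]
Letting \(B\to\infty\) gives \(\sum_{k,l} c_kc_l\Gamma(x_k,x_l)\ge0\).

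For the bound, introduce the integrand \(w(t):=g_{x,x}(t)+g_{y,y}(t)-2g_{x,y}(t)=\E\|H_x(t)-H_y(t)\|^2\ge0\). Also \(\Gamma_0(x,x)+\Gamma_0(y,y)-2\Gamma_0(x,y)=\Var(u_x(0,X_0)-u_y(0,X_0))\). By the limit just established,
\[
    d_\Gamma(x,y)^2=\Var\big(u_x(0,X_0)-u_y(0,X_0)\big)+\int_0^T\frac{w(t)}{r(t)}\,dt .
\]
Since \(w\ge 0\) and \(r\ge1\), the integral is at most \(\int_0^T w\,dt\). Apply the orthogonal decomposition stated before the remark, \(\Cov(a_x(X_T),a_y(X_T))=\Gamma_0(x,y)+\int_0^T g_{x,y}\,dt\), bilinearly to \(a_x-a_y\). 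This shows
\[
    \Var\big(u_x(0,X_0)-u_y(0,X_0)\big)+\int_0^T w\,dt=\Var\big(a_x(X_T)-a_y(X_T)\big)\le\|a_x(X_T)-a_y(X_T)\|_{L^2}^2,
\]
which is the claimed bound.

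We expect the main obstacle to be the convergence of \(\int_0^T g_{x,y}/r_B\,dt\). The assumption only gives \(L^1\) convergence of \(r_B\), and \(r_B\) may be unbounded, so the argument relies on passing to a.e.-convergent subsequences and on the lower bound \(r_B\ge1\), with \(r_B\) nondecreasing so that \(R_{i,B}\ge R_{0,B}=1\).
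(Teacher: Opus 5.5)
Your proposal is correct and follows the paper's proof essentially step for step. The matching steps are the three-term split using Proposition~\ref{prop:covariance-convergence}, $R_{i,B}\ge1$ and Lemma~\ref{lem:continuous-step-covariance}; positive semidefiniteness as a pointwise limit of covariance kernels; and the $d_\Gamma$ bound via $\E\|H_x(t)-H_y(t)\|_2^2\ge0$, $1/r\le1$ and the covariance decomposition. The only cosmetic difference is how you show $\int_0^T g_{x,y}/r_B\,dt\to\int_0^T g_{x,y}/r\,dt$: you use a.e.-convergent subsequences with dominated convergence (dominating function $|g_{x,y}|$), whereas the paper uses $\|1/r_B-1/r\|_{L^1}\to0$ from Lemma~\ref{lem:splitting-functions} together with truncating $g_{x,y}$ at a level $M$; both arguments are valid.
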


The proof is given in Appendix~\ref{app:proof-covlimit}. This is the continuous-time analog of Lemma~\ref{lem:splitting-cov} and shows that the covariance of the splitting estimator converges to a natural limit.

\subsection{Splitting Central Limit Theorem}
\label{sec:gaussian-limit-splitting-noise}

This subsection shows that the splitting estimator obeys a central limit theorem. In particular, we have a triangular-array empirical process, where the entire simulation depends on \(B\). Note that at budget \(B\), there are \(N_{0,B}\) independent root particles, and each root generates many dependent samples. Hence, the natural scaling occurs with \(\sqrt{N_{0,B}}\). Let
\[
    \cA=\left\{a_x:x\in\R^d\right\}
\]
be the lower-orthant class. Define
\[
    \rho(a_x,a_y)
    :=
    \left\|a_x(X_T)-a_y(X_T)\right\|_{L^2},
    \qquad a_x,a_y\in\cA.
\]
For notational convenience, let \(m_B=R_{K_B-1,B}\) be the number of terminal descendants generated by one root, and let
\[
    E_B=(\R^d)^{m_B},
    \qquad
    Z_B=\left(Z_B^{(1)},\ldots,Z_B^{(m_B)}\right)\in E_B,
\]
where, \(E_B\) is the sample space of the terminal descendants from one root, and \(Z_B\) is the corresponding random vector. For \(z\in E_B\), define
\[
    f_{B,x}(z)
    =
    \frac1{m_B}
    \sum_{\ell=1}^{m_B}a_x(z^{(\ell)}),
    \qquad
    \cF_B=\left\{f_{B,x}:x\in\R^d\right\}.
\]
For each fixed \(B\), the empirical process is naturally computed on \(\cF_B\) over the sample space \(E_B\). Although \(E_B\) and \(\cF_B\) change with \(B\), we label the coordinate corresponding to \(f_{B,x}\) by \(a_x\) and regard the process as indexed by the fixed class \(\cA\). We are now ready to state the central limit theorem.

\begin{theorem}[Splitting CLT]\label{thm:splitting-clt}
Under Assumptions \ref{ass:terminal-strong-consistency}, \ref{ass:splitting-function}, and \ref{ass:terminal-weak-conv}, for \(a_x\in\cA\), define
\[
    \G_B(a_x)
    =
    \sqrt{N_{0,B}}\left(\widehat F_B(x)-F_B(x)\right).
\]
If \(N_{0,B} \to \infty\), then
\[
    \G_B
    \Rightarrow
    \G
    \quad\text{in }\ell^\infty(\cA),
    \qquad \Cov\!\left(\G(a_x),\G(a_y)\right)=\Gamma(x,y).
\]
Here, \(\G\) is a centered Gaussian process indexed by \(\cA\) with covariance \(\Gamma\) defined in Proposition~\ref{prop:covlimit}.
\end{theorem}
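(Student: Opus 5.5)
The proof rests on viewing \(\G_B\) as an empirical process of a triangular array of row-wise i.i.d.\ summands. At budget \(B\) the \(N_{0,B}\) root subtrees are i.i.d.\ copies of \(Z_B\in E_B\). Hence
\[
    \G_B(a_x)=\frac{1}{\sqrt{N_{0,B}}}\sum_{j=1}^{N_{0,B}}\left(f_{B,x}(Z_{B,j})-\E f_{B,x}(Z_B)\right),
\]
with \(\E f_{B,x}(Z_B)=F_B(x)\). I would apply a uniform CLT for triangular arrays indexed by changing function classes, e.g.\ van der Vaart--Wellner, Theorem~2.11.22 or 2.11.23. Such results require three things: convergence of the covariance kernel, a Lindeberg-type envelope condition, asymptotic equicontinuity with respect to a semimetric on \(\cA\), and a uniform entropy (or bracketing) bound.

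\textbf{Finite-dimensional convergence.} By Lemma~\ref{lem:splitting-cov}, \(\Cov(f_{B,x}(Z_B),f_{B,y}(Z_B))=\Gamma_B(x,y)\). By Proposition~\ref{prop:covlimit}, this converges to \(\Gamma(x,y)\). The summands are bounded by \(1\) and \(N_{0,B}\to\infty\), so Lindeberg holds trivially and Lindeberg--Feller with Cramér--Wold gives convergence of the marginals to the Gaussian with covariance \(\Gamma\). The envelope \(1\) also makes every envelope condition immediate.

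\textbf{Entropy.} Each \(f_{B,x}\) is an average of \(m_B\) indicators of lower orthants, composed with coordinate projections. The class \(\{z\mapsto a_x(z^{(\ell)})\}\) is VC with dimension depending only on \(d\). Averages of \(m_B\) such functions indexed by the same \(x\) are not a VC class in general, but covering numbers transfer. For any probability measure \(Q\) on \(E_B\), let \(\bar Q=m_B^{-1}\sum_\ell Q\circ\pi_\ell^{-1}\), a measure on \(\R^d\). Then
\[
    \|f_{B,x}-f_{B,y}\|_{L^2(Q)}^2\le \|a_x-a_y\|_{L^2(\bar Q)}^2
\]
by Jensen. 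Hence \(N(\epsilon,\cF_B,L^2(Q))\le\sup_{\bar Q}N(\epsilon,\cA,L^2(\bar Q))\le (K/\epsilon)^{V}\), uniformly in \(B\). This gives the uniform entropy integral condition of Theorem~2.11.22. Measurability is handled by restricting \(x\) to rationals, using right-continuity of orthant indicators.

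\textbf{Equicontinuity — the main obstacle.} The theorem requires
\[
    \sup_{\rho(a_x,a_y)<\delta_B}\E\left(f_{B,x}-f_{B,y}\right)^2\to0
\]
whenever \(\delta_B\to0\), for a semimetric \(\rho\) under which \(\cA\) is totally bounded. I would use \(\rho\) from the statement. Jensen gives
\[
    \E(f_{B,x}-f_{B,y})^2\le \E\left|a_x(X^B_{K_B})-a_y(X^B_{K_B})\right|,
\]
since every descendant has the law of \(X^B_{K_B}\). I then need this to be uniformly close to \(\E|a_x(X_T)-a_y(X_T)|=\rho^2\), i.e.\ \(\sup_x|F_B(x)-F(x)|\)-type control of orthant probabilities. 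This follows from \(L^2\) convergence in Assumption~\ref{ass:terminal-strong-consistency}, which gives weak convergence, together with continuity of \(F\) (Assumption~\ref{ass:terminal-weak-conv}). For a continuous limit, a Pólya-type argument makes orthant and orthant-difference probabilities converge uniformly. Total boundedness of \(\cA\) under \(\rho\) follows from continuity of \(F\) and tightness.

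Theorem~2.11.22 then yields \(\G_B\Rightarrow\G\) in \(\ell^\infty(\cA)\). The limit \(\G\) is a tight centered Gaussian process with covariance \(\Gamma\), whose sample paths are uniformly \(\rho\)-continuous, consistent with the bound \(d_\Gamma\le\rho\) in Proposition~\ref{prop:covlimit}.
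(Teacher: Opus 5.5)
Your proposal is correct and follows essentially the same route as the paper. Both treat the root trees as a row-wise i.i.d.\ array, use Lindeberg--Feller plus Proposition~\ref{prop:covlimit} for the finite-dimensional limits, transfer the VC covering numbers by Jensen against the averaged coordinate marginals (exactly Lemma~\ref{lem:vc}), and control increments by \(\E(f_{B,x}-f_{B,y})^2\le\rho(a_x,a_y)^2+4\|F_B-F\|_\infty\) together with P\'olya's theorem. The only difference is packaging: you cite the triangular-array uniform CLT (van der Vaart--Wellner, Theorem~2.11.22), which delivers tightness and the Gaussian limit together, whereas the paper checks asymptotic equicontinuity by hand through a uniform-entropy maximal inequality (their Theorem~2.14.2) and constructs the tight limit separately via Dudley; your explicit handling of measurability through rational thresholds is a detail the paper omits.
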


The proof is given in Appendix~\ref{app:proof-splitting-clt}. This shows us that the covariance of the Gaussian process is dependent on the splitting function \(r\). Hence, we can control the limiting distribution by choosing the splitting function appropriately.

\subsection{Bias-Variance Tradeoff}
\label{sec:bias-variance-tradeoff}

The CLT above describes only the simulation noise around \(F_B\). To obtain the error around \(F\), we need to account for the bias introduced due to discretization. We follow the approach of \citet{DuffieGlynn1995} to balance the bias and variance, where the variance and bias are modeled as functions of the budget \(B\). For a given \(B\), \(N_{0,B}\) depends on the cost of one root tree as follows
\[
    C_B = B/N_{0,B} = \sum_{i=0}^{K_B-1}R_{i,B}.
\]
Consider an asymptotically uniform time partition, in the sense that
\[
    \max_{0\le i<K_B}
    \left|
        \frac{t_{i+1,B}-t_{i,B}}{h_B}-1
    \right|
    \longrightarrow0.
\]
By Assumption~\ref{ass:splitting-function},
\[
    h_BC_B
    =
    \sum_{i=0}^{K_B-1}h_BR_{i,B}
    \longrightarrow
    \int_0^T r(t)\,dt
    =:
    \gamma.
\]
Here, \(\gamma\) is finite by Assumption~\ref{ass:splitting-function} and is interpreted as the limiting cost of one root tree. Hence
\[
    \frac{B}{N_{0,B}}
    =
    \frac{\gamma}{h_B}
    +
    o(h_B^{-1}).
\]
The functional CLT concerns \(\widehat F_B-F_B\). To recover the error relative to \(F\), write
\[
    \widehat F_B-F
    =
    (\widehat F_B-F_B)+(F_B-F).
\]
Since \(N_{0,B}\sim Bh_B/\gamma\), the CLT identifies
\[
    \left(\frac{\gamma}{Bh_B}\right)^{1/2}
\]
as the natural scale of the centered simulation error. For the discretization error \(F_B-F\), we assume only the following polynomial bound to provide a rate for the total error.

\begin{assumption}[Weak discretization error]\label{ass:bias}
There is \(p>0\) such that
\[
    \|F_B-F\|_\infty=O(h_B^p).
\]
\end{assumption}
For the Euler scheme, \citet{BallyTalay1996} prove weak error of order one for a broad class of test functions, including indicator functions, under suitable regularity conditions. This motivates the important case \(p=1\).

\begin{corollary}\label{cor:dg-limit}
Suppose the assumptions of Theorem~\ref{thm:splitting-clt} hold, together with Assumption~\ref{ass:bias}. Assume also that \(N_{0,B}\to\infty\) and that the time partition is asymptotically uniform. Put
\[
    \lambda_B=h_BB^{1/(1+2p)}.
\]
If \(\lambda_B\to \lambda\in(0,\infty)\), then
\[
    B^{p/(1+2p)}(\widehat F_B-F_B)
    \Rightarrow
    \left(\frac{\gamma}{\lambda}\right)^{1/2}\G
    \qquad\text{in }\ell^\infty(\cA).
\]
Moreover,
\[
    B^{p/(1+2p)}
    \|\widehat F_B-F\|_\infty
\]
is bounded in probability.
\end{corollary}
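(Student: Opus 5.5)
The plan is to deduce the corollary from Theorem~\ref{thm:splitting-clt} by a deterministic rescaling, and then to handle the bias separately using Assumption~\ref{ass:bias}. First, relate the scaling \(B^{p/(1+2p)}\) to \(\sqrt{N_{0,B}}\). From the cost identity \(B=N_{0,B}C_B\) and the limit \(h_BC_B\to\gamma\), which holds under the asymptotically uniform partition and Assumption~\ref{ass:splitting-function}, we get \(N_{0,B}=Bh_B/(h_BC_B)\). Therefore
\[
    \frac{B^{2p/(1+2p)}}{N_{0,B}}
    =
    \frac{B^{2p/(1+2p)}\,h_BC_B}{Bh_B}
    =
    \frac{h_BC_B}{h_BB^{1/(1+2p)}}
    =
    \frac{h_BC_B}{\lambda_B}
    \longrightarrow
    \frac{\gamma}{\lambda}.
\]
Writing \(c_B=B^{p/(1+2p)}/\sqrt{N_{0,B}}\), this shows that \(c_B\to(\gamma/\lambda)^{1/2}\), which is a finite positive constant because \(\lambda\in(0,\infty)\) and \(\gamma\ge T>0\) (recall \(r\ge1\)).

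Second, note that
\[
    B^{p/(1+2p)}(\widehat F_B-F_B)=c_B\,\G_B
\]
as elements of \(\ell^\infty(\cA)\). Theorem~\ref{thm:splitting-clt} gives \(\G_B\Rightarrow\G\), since \(N_{0,B}\to\infty\) is assumed. Because \(c_B\) is a deterministic sequence converging to \(c=(\gamma/\lambda)^{1/2}\), Slutsky's lemma in \(\ell^\infty(\cA)\) yields \(c_B\G_B\Rightarrow c\,\G\). Concretely, \(\|c_B\G_B-c\G_B\|_\infty=|c_B-c|\,\|\G_B\|_\infty\), and this tends to zero in outer probability because \(\|\G_B\|_\infty\) is asymptotically tight, which in turn follows from the continuous mapping theorem applied to the norm. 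The scalar map \(z\mapsto cz\) is continuous on \(\ell^\infty(\cA)\), so \(c\G_B\Rightarrow c\G\). This establishes the first display of the corollary.

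Third, for boundedness in probability, decompose
\[
    B^{p/(1+2p)}\|\widehat F_B-F\|_\infty
    \le
    B^{p/(1+2p)}\|\widehat F_B-F_B\|_\infty
    +
    B^{p/(1+2p)}\|F_B-F\|_\infty .
\]
By the continuous mapping theorem, the first term converges in distribution to \(c\|\G\|_\infty\), so it is tight (it is \(O_P(1)\)). Finiteness of \(\|\G\|_\infty\) almost surely is part of the tight-limit conclusion in Theorem~\ref{thm:splitting-clt}. For the second term, Assumption~\ref{ass:bias} gives the bound \(O(B^{p/(1+2p)}h_B^p)=O(\lambda_B^p)\). This is \(O(1)\) since \(\lambda_B\to\lambda<\infty\), so the second term is a bounded deterministic sequence. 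The sum of an \(O_P(1)\) term and a bounded deterministic term is \(O_P(1)\).

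The main obstacle is the first step, namely justifying \(h_BC_B\to\gamma\) rigorously. The paper states this limit, but it requires combining the \(L^1\) convergence \(r_B\to r\) with the asymptotically uniform mesh. The key identity is \(\sum_i h_BR_{i,B}=\sum_i \frac{h_B}{t_{i+1,B}-t_{i,B}}\int_{t_{i,B}}^{t_{i+1,B}}r_B\), where the ratios converge to \(1\) uniformly in \(i\). Since \(\int_0^T r_B\to\gamma\) is bounded, the limit follows. I would include this short verification and otherwise rely on the paper's stated limit. The remaining steps are routine applications of Slutsky's lemma and the continuous mapping theorem.
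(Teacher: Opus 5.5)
Your proposal is correct and follows the paper's proof essentially step for step: the same rescaling identity \(B^{p/(1+2p)}N_{0,B}^{-1/2}=(h_BC_B/\lambda_B)^{1/2}\to(\gamma/\lambda)^{1/2}\), then Slutsky's lemma applied to Theorem~\ref{thm:splitting-clt}, then the triangle inequality with the \(O(\lambda_B^p)=O(1)\) bias bound. Your mesh-ratio verification of \(h_BC_B\to\gamma\) is also correct, and it fills in a limit that the paper only asserts in the main text.
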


The proof is given in Appendix~\ref{app:proof-dg-limit}. Assumption~\ref{ass:bias} controls only the magnitude of \(F_B-F\), and only gives a rate for the total error rather than a weak limit. For the remainder of the paper, we consider the non-degenerate regime \(\lambda_B\to \lambda\in(0,\infty)\), which happens when \( h_B \propto B^{-1/(1+2p)}\).

\begin{remark}
    We retain the asymptotic uniform time-partition requirement in the main text for simplicity. More generally, a nonuniform discretization can be handled by assigning different costs to different transitions. Appendix~\ref{app:cost-weighted} shows that this results in a limiting cost density \(\omega\), and that the corollary above then holds with \(\gamma\) replaced by the cost-weighted \(\gamma_\omega\).
\end{remark}

\section{KS Statistic and Minimax Allocation}
\label{sec:ks-statistic}

The Kolmogorov-Smirnov (KS) distance from the target law is
\[
    \KS(\widehat F_B,F)
    =
    \sup_{x\in\R^d}
    |\widehat F_B(x)-F(x)|.
\]
Since \(F_B-F\) is independent of the splitting function \(r\), the allocation can affect only \(\widehat F_B-F_B\). We therefore use the limiting KS loss of the centered simulation error as the allocation criterion. By the continuous mapping theorem,
\[
    B^{p/(1+2p)}
    \KS(\widehat F_B,F_B)
    \Rightarrow
    \left(\frac{\gamma}{\lambda}\right)^{1/2}
    \sup_{x\in\R^d}|\G(a_x)|.
\]
We are interested in minimizing this quantity. However, this is generally intractable. Fortunately, we can upper and lower bound its expected value by bounds that differ up to a logarithmic factor. These bounds then motivate a tractable allocation problem.

\begin{proposition}[Expected Limiting Centered KS Bounds]\label{prop:ks-bounds}
Let
\[
    \Gamma^*
    =
    \sup_{x\in\R^d}\sqrt{\Gamma(x,x)}.
\]
If \(\Gamma^*>0\), then there exists a constant \(C<\infty\), depending only on \(d\), such that
\[
    \sqrt{\frac{2}{\pi}}
    \left(\frac{\gamma}{\lambda}\right)^{1/2}
    \Gamma^*
    \le
    \left(\frac{\gamma}{\lambda}\right)^{1/2}
    \E\left[
        \sup_{x\in\R^d}|\G(a_x)|
    \right]
    \le
    C
    \left(\frac{\gamma}{\lambda}\right)^{1/2}
    \Gamma^*
    \sqrt{1-\log(\Gamma^*)}.
\]
\end{proposition}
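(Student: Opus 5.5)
The common factor \((\gamma/\lambda)^{1/2}\) cancels, so it suffices to show
\[
\sqrt{2/\pi}\,\Gamma^*\le \E\sup_x|\G(a_x)|\le C\,\Gamma^*\sqrt{1-\log\Gamma^*}.
\]

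\textbf{Lower bound.} For each fixed \(x\), \(\G(a_x)\sim N(0,\Gamma(x,x))\). Hence
\[
\E\sup_y|\G(a_y)|\ge \E|\G(a_x)|=\sqrt{2/\pi}\,\sqrt{\Gamma(x,x)}.
\]
Taking the supremum over \(x\) gives the lower bound. No separability issues arise, because Theorem~\ref{thm:splitting-clt} gives a tight Borel limit in \(\ell^\infty(\cA)\), so \(\sup_x|\G(a_x)|\) is a measurable, a.s.\ finite random variable.

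\textbf{Upper bound.} I would apply Dudley's entropy integral to \(\G\) with its intrinsic semimetric \(d_\Gamma\). A tight Gaussian limit admits a version with \(d_\Gamma\)-continuous paths, which justifies using a countable dense subclass. I would fix any \(x_0\) and write
\[
\E\sup_x|\G(a_x)|\le \E|\G(a_{x_0})|+C_0\int_0^{D}\sqrt{\log N(\varepsilon,\cA,d_\Gamma)}\,d\varepsilon ,
\]
where \(D\le 2\Gamma^*\) is the \(d_\Gamma\)-diameter. The first term is at most \(\Gamma^*\).

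\textbf{Covering numbers.} Proposition~\ref{prop:covlimit} gives \(d_\Gamma\le\rho\), so \(N(\varepsilon,\cA,d_\Gamma)\le N(\varepsilon,\cA,\rho)\). Under Assumption~\ref{ass:terminal-weak-conv}, \(\rho(a_x,a_y)^2=\Pbb(X_T\in x\triangle y\text{-orthant region})\). The lower-orthant class is VC with index depending only on \(d\), so the uniform entropy bound gives
\[
N(\varepsilon,\cA,L^2(P))\le (K/\varepsilon)^{V}, \qquad 0<\varepsilon<1,
\]
with \(K,V\) depending only on \(d\). Alternatively, an explicit bracketing argument in each coordinate via marginal quantiles works, using continuity of \(F\) to split at quantile levels. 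Therefore
\[
\int_0^{2\Gamma^*}\sqrt{V\log(K/\varepsilon)}\,d\varepsilon .
\]

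\textbf{The integral.} It remains to evaluate \(\int_0^{\delta}\sqrt{\log(K/\varepsilon)}\,d\varepsilon\lesssim\delta\sqrt{\log(K/\delta)}\) for small \(\delta\). This follows from a substitution, or from the standard estimate \(\int_0^\delta\sqrt{\log(1/\varepsilon)}\,d\varepsilon\le \delta(\sqrt{\log(1/\delta)}+1)\).

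Note that \(\Gamma(x,x)\le \Var(a_x(X_T))\le 1/4\). This holds because \(r\ge1\): \(r\) is nondecreasing with \(r(0+)=1\). Hence \(\Gamma^*\le 1/2\), and \(\log(K/\Gamma^*)\le C'(1-\log\Gamma^*)\), which also absorbs the constant \(\Gamma^*\) term. The result is \(C\,\Gamma^*\sqrt{1-\log\Gamma^*}\).

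\textbf{Main obstacle.} The delicate step is applying Dudley's bound on the entropy scale \(d_\Gamma\) while only having entropy control in \(\rho\). The domination \(d_\Gamma\le\rho\) handles the comparison. The care needed is that the VC entropy bound must be uniform and must not depend on the law of \(X_T\), so that \(C\) depends only on \(d\). I would invoke the uniform-entropy bound for VC classes (Haussler/van der Vaart–Wellner Theorem 2.6.7) to obtain this distribution-free constant. I would also truncate the Dudley integral at \(\min(D,1)\), so that the bound \(\varepsilon<1\) in the covering-number estimate applies.
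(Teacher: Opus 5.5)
Your proposal is correct and follows the paper's proof essentially step for step: the lower bound via $\E|\G(a_x)|=\sqrt{2/\pi}\sqrt{\Gamma(x,x)}$, and the upper bound via Dudley's bound in $d_\Gamma$, the domination $d_\Gamma\le\rho$ from Proposition~\ref{prop:covlimit}, the distribution-free VC covering bound of Lemma~\ref{lem:vc}, and the elementary evaluation of the entropy integral using $\Gamma^*\le 1/2$. The only cosmetic difference is the anchor term $\E|\G(a_{x_0})|$: you bound it by $\Gamma^*$ and absorb it into the constant, whereas the paper sends $x\to+\infty$ in every coordinate so that it vanishes; both are valid.
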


The proof is given in Appendix~\ref{app:proof-ks-bounds}. This proposition shows that for a fixed \( \lambda\), the bias term does not depend on the splitting design, and we can limit our attention to the variance term. Notice that the bounds agree up to a logarithmic factor. Dropping this factor gives the surrogate
\[
    \gamma(r)^{1/2}\Gamma^*.
\]
Since the surrogate is nonnegative, minimizing its square gives the proxy objective
\[
    \inf_{r\in\mathcal R}
    \gamma(r)
    \sup_{x\in\R^d}
    \left[
        \Gamma_0(x,x)+\int_0^T \frac{g_{x,x}(t)}{r(t)}\,dt
    \right],
\]
where \(\mathcal R\) is the set of nondecreasing functions \(r\in L^1([0,T])\) with \(r(0+) = 1\). This problem has an elegant interpretation: we are minimising the worst-case variance of the Gaussian process over all thresholds \(x\) by choosing the splitting function \(r\). To solve this optimization problem, let
\[
    \mathcal X=[-\infty,\infty]^d
\]
have the product topology. By Lemma~\ref{lem:threshold-continuity}, \(x\mapsto\Gamma_0(x,x)\) is continuous on \(\mathcal X\), and \(x\mapsto g_{x,x}\) is continuous from \(\mathcal X\) into \(L^1([0,T])\). For fixed \(r\in\mathcal R\), we have \(1/r\le1\). Thus
\[
    x\longmapsto
    \Gamma_0(x,x)
    +
    \int_0^T\frac{g_{x,x}(t)}{r(t)}\,dt
\]
is continuous on \(\mathcal X\). Since \(\R^d\) is dense in \(\mathcal X\), compactification does not change the supremum. Inspired by Lemma~\ref{lem:var-opt}, we first describe the continuous-time analog of PAVA. For any integrable function \(f: [0,T]\to\R\), let \(C[f]\) be the greatest continuous convex function such that
\[
    C[f](t)\le \int_0^t f(s)\,ds,
    \qquad 0\le t\le T.
\]
Then, the continuous-time PAVA of \(f\) is defined as the right derivative of \(C[f]\), \(\PAVA(f) = C[f]'_+\).

Before stating our main result, we note that the threshold with the largest loss can depend on the allocation. This can be thought of as a game between the designer and an adversary, where the adversary chooses weights on the thresholds to maximize the loss. We therefore introduce mixtures over thresholds. For \(\pi\in\mathcal P(\mathcal X)\), define
\[
    \Gamma_0^\pi
    =
    \int_{\mathcal X}\Gamma_0(x,x)\,\pi(dx),
    \qquad
    g^\pi
    =
    \int_{\mathcal X}g_{x,x}\,\pi(dx)
    \quad\text{in }L^1([0,T]).
\]
These integrals are well defined by
Lemma~\ref{lem:threshold-continuity}.
The PAVA construction above can be extended to include an atom \(\Gamma_0^\pi\) at \(t=0\). See Appendix~\ref{app:pava} for the complete construction. Applying this to the pair \((\Gamma_0^\pi,g^\pi)\), we get
\[
    h_\pi
    =
    \PAVA(\Gamma_0^\pi,g^\pi).
\]
We solve this problem by replacing the supremum over \(x\) with a maximization over probability distributions \(\pi\) on \(\mathcal X\), which leaves its value unchanged. For \(r\in\mathcal R\), set \(n=r/\gamma(r)\). The objective is convex in \(n\) and affine in \(\pi\). Under a suitable topology, \(\mathcal P(\mathcal X)\) is compact. We then use Sion's minimax theorem and the PAVA construction above to conclude. Appendix~\ref{app:proof-functional-minimax-opt} gives the details.

\begin{theorem}[Minimax Allocation]
\label{thm:functional-minimax-opt}
Suppose Assumption~\ref{ass:terminal-weak-conv} holds. Then
\[
    \inf_{r\in\mathcal R}
    \gamma(r)
    \sup_{x\in\R^d}
    \left[
        \Gamma_0(x,x)
        +
        \int_0^T \frac{g_{x,x}(t)}{r(t)}\,dt
    \right]
    =
    \max_{\pi\in\mathcal P(\mathcal X)}
    \left(
        \int_0^T\sqrt{h_\pi(t)}\,dt
    \right)^2.
\]
The maximum on the right is attained. For any maximizer \(\pi^*\), the infimum on the left is attained if and only if \(h_{\pi^*}(0+)>0\). In that case,
\[
    r^*(t)
    =
    \sqrt{
        \frac{h_{\pi^*}(t)}
             {h_{\pi^*}(0+)}
    },
    \qquad 0\le t<T,
\]
defines a minimizer.
\end{theorem}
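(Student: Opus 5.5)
We reformulate the problem as a convex–concave saddle-point problem and apply Sion's minimax theorem. For \(r\in\mathcal R\), write \(\gamma(r)=\int_0^T r\) and set \(n=r/\gamma(r)\). Then \(n\) is nondecreasing, has \(\int_0^T n=1\), and satisfies \(n(0+)=1/\gamma(r)\). Define
\[
    L(n,\pi)=\frac{\Gamma_0^\pi}{n(0+)}+\int_0^T\frac{g^\pi(t)}{n(t)}\,dt .
\]
The objective becomes \(\sup_x[\,\cdot\,]\cdot\gamma(r)=\sup_{\pi}L(n,\pi)\). The atom at \(0\) absorbs \(\gamma(r)\Gamma_0(x,x)=\Gamma_0(x,x)/n(0+)\), and \(\gamma(r)/r=1/n\). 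Replacing \(\sup_x\) by \(\sup_{\pi\in\mathcal P(\mathcal X)}\) does not change the value, because \(L\) is affine in \(\pi\) and the integrand is continuous on the compact space \(\mathcal X\) by Lemma~\ref{lem:threshold-continuity}. The primal problem is therefore \(\inf_{n\in\mathcal N}\sup_\pi L(n,\pi)\), where \(\mathcal N\) is the convex set of nondecreasing densities on \([0,T]\) with \(n(0+)>0\).

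To invoke Sion, I will check the following properties.
\begin{itemize}
\item \(\mathcal P(\mathcal X)\) is weakly compact, since \(\mathcal X\) is a compact metrizable space.
\item \(\pi\mapsto L(n,\pi)\) is affine and weakly upper semicontinuous. In fact it is continuous, because \(x\mapsto\Gamma_0(x,x)\) and \(x\mapsto \int g_{x,x}/n\) are continuous and bounded on \(\mathcal X\) (using \(1/n\le 1/n(0+)\)).
\item \(n\mapsto L(n,\pi)\) is convex, since \(1/n\) is convex and \(n\mapsto n(0+)\) is linear. It is lower semicontinuous under pointwise convergence on the monotone class, by Fatou's lemma.
\end{itemize}
Sion's theorem then gives \(\inf_n\sup_\pi L=\sup_\pi\inf_n L\), and upper semicontinuity on a compact set shows that the outer supremum is attained.

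Next I compute the inner infimum for fixed \(\pi\); this is the continuous analogue of Lemma~\ref{lem:var-opt}. The claim is
\[
    \inf_{n\in\mathcal N}L(n,\pi)=\Bigl(\int_0^T\sqrt{h_\pi}\Bigr)^2 .
\]
For the upper bound, take \(n=\sqrt{h_\pi}/\int\sqrt{h_\pi}\), which is nondecreasing because \(h_\pi\) is the right derivative of a convex minorant. On each interval where the convex minorant \(C\) lies strictly below the cumulative function \(G(t)=\Gamma_0^\pi+\int_0^t g^\pi\), the function \(h_\pi\) is constant. There \(n\) is constant as well, and \(\int g^\pi/n=\int h_\pi/n\) over that interval because \(C=G\) at its endpoints. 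The atom is handled the same way through the extended PAVA construction. This gives \(L=\int h_\pi/n=(\int\sqrt{h_\pi})^2\).

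For the lower bound, take any nondecreasing \(n\) and integrate by parts against the decreasing function \(1/n\), using \(G\ge C\) and the fact that \(G\) and \(C\) agree at \(0\) and \(T\):
\[
    L(n,\pi)=\int \frac{dG}{n}\ \ge\ \int\frac{dC}{n}=\int\frac{h_\pi}{n}.
\]
Cauchy--Schwarz with \(\int n=1\) then gives \(\int h_\pi/n\ge(\int\sqrt{h_\pi})^2\). Making this integration by parts rigorous, with a possible jump of \(n\) at \(0\) and the atom \(\Gamma_0^\pi\), is the technical core of the proof. I would carry it out on step functions first and then pass to the limit by monotone convergence.

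The attainment statement is, I expect, the main obstacle. Suppose \(h_{\pi^*}(0+)>0\). Set \(n^*=\sqrt{h_{\pi^*}}/\int\sqrt{h_{\pi^*}}\) and \(r^*=n^*/n^*(0+)=\sqrt{h_{\pi^*}/h_{\pi^*}(0+)}\); then \(r^*\in\mathcal R\). I must show that \(\sup_\pi L(n^*,\pi)\) equals the saddle value, i.e.\ that \(n^*\) is a best response to \(\pi^*\) and also an equilibrium strategy. Convex–concave saddle theory gives this if the minimizer of \(L(\cdot,\pi^*)\) is unique, which holds by strict convexity of \(1/n\). The saddle-point minimizer set is nonempty, because the primal infimum is attained by lower semicontinuity and a Helly selection argument on monotone functions, provided \(n(0+)\) stays bounded away from \(0\). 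That bound is exactly what \(h_{\pi^*}(0+)>0\) supplies. Every primal minimizer must therefore be a best response to \(\pi^*\), and by uniqueness it equals \(n^*\).

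Conversely, if \(h_{\pi^*}(0+)=0\), any minimizer \(n\) would have to coincide with \(n^*\) by the same uniqueness argument. But then \(n(0+)=0\), which is not allowed since \(r(0+)=1\) forces \(\gamma(r)<\infty\) and hence \(n(0+)>0\). So the infimum is not attained in that case.
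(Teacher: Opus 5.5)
\textbf{Gap in the attainment step.} Your architecture matches the paper's. You use the same normalization $n=r/\gamma(r)$, the same $L(n,\pi)$, Sion's theorem with $\mathcal P(\mathcal X)$ as the compact side, and the fixed-$\pi$ bound $L(n,\pi)\ge\int_0^T h_\pi/n\,dt\ge H(\pi):=(\int_0^T\sqrt{h_\pi}\,dt)^2$ from the PAVA gap and Cauchy--Schwarz.

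The genuine gap is in the attainment step you flagged. You get a primal minimizer from Helly selection ``provided $n(0+)$ stays bounded away from $0$''. You then assert that this bound ``is exactly what $h_{\pi^*}(0+)>0$ supplies''. But $h_{\pi^*}(0+)>0$ is a property of the candidate $n^*$, not of minimizing sequences, and nothing connects the two.

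Along minimizing sequences the bound is false in general. If $X_0$ is deterministic then $\Gamma_0\equiv0$. Cap each term of a minimizing sequence at $1/k$ on a short interval $[0,\delta_k)$ and renormalize. This gives another minimizing sequence with $n_k(0+)\to0$, as long as $k\sup_{x}\int_0^{\delta_k}g_{x,x}\,dt\to0$, which is possible because $\{g_{x,x}:x\in\mathcal X\}$ is compact in $L^1$. So the Helly limit is a priori only a nondecreasing function with $n(0+)\ge0$ and $\int_0^T n\,dt\le1$, since mass can escape to $T$. You learn that it lies in $\mathcal N$ only after identifying it with $n^*$, which is the conclusion. As written, ``every primal minimizer is a best response to $\pi^*$'' is applied to an object not yet shown to exist.

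\textbf{How to close it.} The missing idea is to identify the limit through the fixed-$\pi^*$ problem without first securing feasibility. The paper does this quantitatively. If $\sup_\pi L(n_k,\pi)\to V$, then $V\le\int_0^T h^*/n_k\,dt\le L(n_k,\pi^*)\to V$, and
\[
\|n_k-n^*\|_{L^1}^2\le\int_0^T\frac{(n_k-n^*)^2}{n_k}\,dt=\frac1V\int_0^T\frac{h^*}{n_k}\,dt-1\to0 .
\]
Monotonicity and right-continuity then give $\limsup_k n_k(0+)\le n^*(0+)$. Fatou yields $L(n^*,\pi)\le\liminf_k L(n_k,\pi)\le V$ for every $\pi$. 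No compactness and no lower bound on $n_k(0+)$ are needed.

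Your route can be repaired the same way: extend $L$ to $n(0+)\ge0$ and use the equality case of Cauchy--Schwarz there. That equality case, not strict convexity of $1/n$, is also the right source of uniqueness, since $L(\cdot,\pi^*)$ is not strictly convex where $g^{\pi^*}=0$.

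\textbf{Two smaller omissions.}
First, your upper-bound candidate $\sqrt{h_\pi}/\int_0^T\sqrt{h_\pi}\,dt$ is not in $\mathcal N$ when $h_\pi(0+)=0$. Yet Sion's dual equals $\max_\pi H(\pi)$ only if $\inf_n L(n,\pi)=H(\pi)$ for every $\pi$, so you need the approximants $n_\varepsilon\propto\sqrt{h_\pi+\varepsilon}$ as in the paper.

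Second, the converse direction needs $V>0$; if $V=0$, every $n$ is optimal while $h_{\pi^*}\equiv0$. This follows from continuity of $F$ by taking $\pi=\delta_x$ with $0<F(x)<1$, since $\int_0^T h_{\delta_x}\,dt=F(x)(1-F(x))$.
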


The proof is given in Appendix~\ref{app:proof-functional-minimax-opt}. Notably, the optimal allocation still follows Neyman allocation, but now with respect to the PAVA projection of a mixture of covariance densities.

Since simulation is performed with a finite budget, we also provide a finite-budget counterpart to Theorem~\ref{thm:functional-minimax-opt}, which uses the same argument for the minimax version of Lemma~\ref{lem:var-opt}. Put \(m=K_B\) and
\[
    \widetilde v_i(x)
    =
    v_{i,B}(x,x)+\one_{\{i=0\}}\Gamma_{0,B}(x,x),
    \qquad 0\le i<m.
\]
With \(n_i=N_0R_i\), define the feasible allocation set
\[
    \mathcal N
    =
    \left\{
        n\in\R_{>0}^m:
        n_0\le\cdots\le n_{m-1},
        \ \sum_{i=0}^{m-1}n_i=B
    \right\}.
\]
To account for the coupling across thresholds, consider the closed convex hull of the variances
\[
    \widetilde V(x)
    =
    \big(\widetilde v_0(x),\ldots,\widetilde v_{m-1}(x)\big),
    \qquad
    \mathcal V
    =
    \overline{\operatorname{conv}}\left\{\widetilde V(x):x\in\R^d\right\}
    \subset \R_+^m.
\]

\begin{corollary}[Finite-Budget Minimax Allocation]\label{cor:minimax-var-opt}
    Assume that there exists \(x_0\in\R^d\) such that \(\widetilde v_i(x_0)>0\) for every \(i\). For \(a\in\mathcal V\), let \(\widehat a=\PAVA(a)\). Then the minimum below is attained, and
    \[
        \min_{n\in\mathcal N}
        \sup_{x\in\R^d}
        \sum_{i=0}^{m-1}\frac{\widetilde v_i(x)}{n_i}
        =
        \frac1B
        \max_{a\in\mathcal V}
        \left(\sum_{i=0}^{m-1}\sqrt{\widehat a_i}\right)^2.
    \]
    Moreover, if \(a^*\) maximizes the right-hand side and \(h^*=\PAVA(a^*)\), then \(h_i^*>0\) for every \(i\), and every minimizing allocation satisfies
    \[
        n_i^*
        =
        B
        \frac{\sqrt{h_i^*}}
        {\sum_{j=0}^{m-1}\sqrt{h_j^*}},
        \qquad 0 \le i \le m - 1.
    \]
\end{corollary}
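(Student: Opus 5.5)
The plan is a finite-dimensional minimax argument, followed by an application of Lemma~\ref{lem:var-opt} at the saddle point. Write \(\Phi(n,a)=\sum_{i=0}^{m-1}a_i/n_i\). For fixed \(n\), \(\Phi(n,\cdot)\) is linear in \(a\). The supremum of a linear functional over \(\{\widetilde V(x):x\in\R^d\}\) therefore equals its supremum over the closed convex hull \(\mathcal V\), so
\[
\sup_{x}\sum_i\frac{\widetilde v_i(x)}{n_i}=\max_{a\in\mathcal V}\Phi(n,a).
\]
The set \(\mathcal V\) is bounded, since each \(\widetilde v_i(x)\le \Var\le 1\) because indicators are involved. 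It is also closed, hence compact in \(\R^m_+\), and \(\Phi(n,\cdot)\) is continuous, so the max is attained.

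For fixed \(a\), \(\Phi(\cdot,a)\) is convex on the convex set \(\mathcal N\). Sion's minimax theorem needs only one of the two sets to be compact, and \(\mathcal V\) is. It therefore gives
\[
\inf_{n\in\mathcal N}\max_{a\in\mathcal V}\Phi=\max_{a\in\mathcal V}\inf_{n\in\mathcal N}\Phi.
\]
For the inner infimum with fixed \(a\), apply Lemma~\ref{lem:var-opt}. When all \(a_i>0\), the optimal allocation is Neyman on \(\widehat a=\PAVA(a)\), and substituting back gives the value \(\frac1B(\sum_i\sqrt{\widehat a_i})^2\). The substitution uses the PAVA identity \(\sum_i a_i/n_i=\sum_i\widehat a_i/n_i\) when \(n\) is constant on PAVA blocks, which I would verify from the block-averaging structure in Appendix~\ref{app:pava}. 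When some \(a_i=0\), the lemma does not apply directly. I would extend it by a limiting argument: the infimum (possibly not attained) still equals \(\frac1B(\sum\sqrt{\widehat a_i})^2\), by perturbing \(a\) to \(a+\epsilon\mathbf 1\) and using continuity of PAVA together with the monotonicity of the value in \(a\). This yields the displayed identity, but with an infimum on the left. The maximizer \(a^*\) exists because the map \(a\mapsto(\sum\sqrt{\PAVA(a)_i})^2\) is continuous (PAVA is a continuous projection) on a compact set.

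The main obstacle is attainment of the minimum together with the characterization of every minimizer. For positivity, note that the hypothesis gives \(\widetilde V(x_0)\in\mathcal V\) with all coordinates positive. I expect the maximizer \(a^*\) to have PAVA with \(h^*_0>0\). The argument would be that the PAVA output is nondecreasing, so \(h^*_0>0\) forces all \(h^*_i>0\). If instead \(h^*_0=0\), I would move slightly toward \(\widetilde V(x_0)\) and show the objective strictly increases. This works because \(\sqrt{\cdot}\) has infinite derivative at \(0\), so the first-order gain dominates any loss.

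With \(h^*>0\), set \(n^*\) to be the Neyman allocation on \(h^*\). The value function \(F(a)=\min_n\Phi(n,a)\) is concave, being a minimum of linear functions. I would show \((n^*,a^*)\) is a saddle point by checking \(\Phi(n^*,a)\le\Phi(n^*,a^*)\) for all \(a\in\mathcal V\), which is the first-order optimality of \(a^*\): \(\Phi(n^*,\cdot)\) is a supergradient of \(F\) at \(a^*\). This gives attainment. For uniqueness, take any minimizer \(\bar n\). Then
\[
\Phi(\bar n,a^*)\le\max_a\Phi(\bar n,a)=\text{value}=\min_n\Phi(n,a^*),
\]
so \(\bar n\) minimizes \(\Phi(\cdot,a^*)\). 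Lemma~\ref{lem:var-opt} then gives uniqueness of that minimizer, provided its proof extends to \(a^*\) with \(h^*>0\) even if some raw \(a^*_i=0\). This follows from the strict convexity of \(\sum h^*_i/n_i\) on PAVA-constant allocations, and I would check it there. Hence \(\bar n=n^*\).
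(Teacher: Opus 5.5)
Your minimax identity, the fixed-profile value \(\frac1B(\sum_i\sqrt{\widehat a_i})^2\), and the final uniqueness step are fine. The uniqueness step (any minimizer \(\bar n\) minimizes \(\Phi(\cdot,a^*)\), then the Cauchy--Schwarz equality case) matches the paper.

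The gap is in the attainment step. From \(F(a)\le\Phi(n^*,a)\) and \(F(a^*)=\Phi(n^*,a^*)\) you correctly get that \(\Phi(n^*,\cdot)\) is a supergradient of \(F\) at \(a^*\). However, maximality of \(a^*\) over \(\mathcal V\) only guarantees that \emph{some} \(s\in\partial F(a^*)\) satisfies \(\langle s,a-a^*\rangle\le0\) on \(\mathcal V\). At a kink of \(F\), other supergradients can violate this. So \(\Phi(n^*,a)\le\Phi(n^*,a^*)\) does not follow unless you show \(\partial F(a^*)=\{(1/n_i^*)_i\}\). This is true here, but it needs an argument, for instance a Danskin-type one. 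For \(a\in\mathcal V\), put \(a_t=a^*+t(a-a^*)\). Since \(\PAVA(a_t)\to h^*>0\), for small \(t\) the Neyman allocation \(n_t\) on \(\PAVA(a_t)\) attains \(F(a_t)\), by the block inequality and Cauchy--Schwarz. Hence
\[
0\ge F(a_t)-F(a^*)\ge \Phi(n_t,a_t)-\Phi(n_t,a^*)=t\,\Phi(n_t,a-a^*),
\]
and \(n_t\to n^*\) by continuity of PAVA gives \(\Phi(n^*,a-a^*)\le0\). Without such an argument, existence of a minimizer is not established, and that is the step you yourself flagged as the main obstacle.

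The paper avoids this by using \(x_0\) for coercivity rather than for positivity. With \(c=\min_i\widetilde v_i(x_0)>0\) one has \(\sup_x\sum_i\widetilde v_i(x)/n_i\ge c\sum_i n_i^{-1}\). So minimizing sequences stay bounded away from zero coordinates, and compactness of \(\overline{\mathcal N}\) plus lower semicontinuity of the supremum give a minimizer \(\bar n\) directly. Then \(\Phi(\bar n,a^*)\le\max_a\Phi(\bar n,a)=\inf_n\Phi(n,a^*)\), so \(\bar n\) attains the fixed-profile infimum. Since this value is positive, the Cauchy--Schwarz equality case forces \(\bar n\propto\sqrt{h^*}\) and \(h^*>0\) in one stroke, which makes your perturbation argument for \(h_0^*>0\) unnecessary.

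If you keep that perturbation argument, make the gain explicit. Monotonicity, homogeneity and translation equivariance of isotonic regression give \(\PAVA((1-t)a^*+t\widetilde V(x_0))\ge(1-t)h^*+tc\mathbf 1\). So each zero coordinate of \(h^*\) gains at least \(\sqrt{tc}\), while the others lose only \(O(t)\).
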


The proof is given in Appendix~\ref{app:proof-minimax-var-opt}.

We emphasize that the uniform cost and time partition assumptions are made here to present the results in a simple form. Under a single regularity condition on the costs, both results extend by measuring time in cost rather than in elapsed time. Appendix~\ref{app:cost-weighted} gives the cost-weighted forms of Theorem~\ref{thm:functional-minimax-opt} and Corollary~\ref{cor:minimax-var-opt}.

\section{Rounding}
\label{sec:rounding}

The finite-budget optimum need not define a valid tree because its particle counts can be noninteger. Hence, we approximate it with a mixture of trees which only split by powers of two. Fix \(B\), put \(m=K_B\), and let \(n^*\) be a minimizing allocation from Corollary~\ref{cor:minimax-var-opt}. Normalize this allocation by setting
\[
    r_i^*
    =
    \frac{n_i^*}{n_0^*},
    \qquad
    0\le i<m.
\]
Using the notation \(\widetilde v_i(x)\) from that corollary, define the scale-free relaxed value
\[
    V^*
    =
    \left(\sum_{i=0}^{m-1}r_i^*\right)
    \sup_{x\in\R^d}
    \sum_{i=0}^{m-1}
    \frac{\widetilde v_i(x)}{r_i^*}.
\]

\subsection{Constructing Dyadic Trees}
\label{sec:constructing-dyadic-trees}

Given \(r^*\), we first construct a finite set of dyadic trees. We will later mix these trees to approximate the relaxed allocation. Let \(U\sim\operatorname{Unif}[0,1)\). For each realized value of \(U\), associate a tree with profile
\[
    R_i(U)
    =
    2^{\lfloor\log_2r_i^*+U\rfloor},
    \qquad
    0\le i<m.
\]
This logarithmic rounding follows the construction of \citet[Section~2]{Roundy1989}. Since \(r^*\) is nondecreasing, the exponents are also nondecreasing, so every ratio
\[
    \frac{R_i(U)}{R_{i-1}(U)},
    \qquad
    1\le i<m,
\]
is a nonnegative power of two and every \(R(U)\) defines a valid tree. As \(U\) varies over \([0,1)\), each coordinate can change at most once. Hence, at most \(m\) distinct trees occur, which are our candidates.

\subsection{Mixing the Candidate Trees}

We now discuss how to mix the candidate trees to approximate the relaxed allocation. Let \(R^{(1)},\ldots,R^{(S)}\) be the distinct profiles, where \(S\le m\). For each type \(s\), the cost of one tree is \(C_s=\sum_{i=0}^{m-1}R_i^{(s)}\). Suppose we generate \(M_s\) independent trees of type \(s\). For weights \(\lambda_s\ge0\) with \(\sum_s\lambda_s=1\), define
\[
    F_B^\lambda
    =
    \sum_{s=1}^S
    \lambda_s\widehat F_{B,s},
\]
to be a weighted mixture of the empirical CDFs \(\widehat F_{B,s}\). From Lemma~\ref{lem:splitting-cov}, we have for every \(x,y\in\R^d\),
\[
    \E\!\left[F_B^\lambda(x)\right]=F_B(x),
    \quad
    \Cov\!\left(F_B^\lambda(x),F_B^\lambda(y)\right)
    =
    \sum_{s=1}^S\frac{\lambda_s^2}{M_s}
    \left[
        \Gamma_{0,B}(x,y)
        +
        \sum_{i=0}^{m-1}\frac{v_{i,B}(x,y)}{R_i^{(s)}}
    \right].
\]
Here, \(\lambda_s\) is the budget share assigned to type \(s\), giving the ideal root count \(M_s=\lambda_sB/C_s\). We choose these weights to minimize the worst-case variance. The next proposition shows that the resulting mixture approximates the relaxed optimum up to a small constant factor.

\begin{proposition}[Dyadic Mixture Approximation]
\label{prop:dyadic-mixture-approximation}
If the candidate types are generated from \(r^*\), then
\[
    V^*
    \le
    \min_{\substack{\lambda_s\ge0\\ \sum_s\lambda_s=1}}
    \sup_{x\in\R^d}
    \sum_{s=1}^S
    \lambda_sC_s
    \sum_{i=0}^{m-1}
    \frac{\widetilde v_i(x)}{R_i^{(s)}}
    \le
    \frac{2}{e\log 2}V^*.
\]
\end{proposition}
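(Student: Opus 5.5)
The plan has two parts: a lower bound that reduces to the relaxed optimum of Corollary~\ref{cor:minimax-var-opt}, and an upper bound obtained by using the Roundy-type random rounding with \(U\) uniform as the choice of weights.

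\textbf{Lower bound.} Fix any weights \(\lambda\). I will build a single relaxed allocation \(\bar r_i=\sum_s\mu_s R_i^{(s)}\), where \(\mu_s\) is chosen so that the mixture objective dominates the relaxed objective at \(\bar r\). For each \(x\), set \(C=\sum_s\lambda_sC_s\) and \(\mu_s=\lambda_sC_s/C\). By Jensen's inequality applied to the convex map \(t\mapsto1/t\),
\[
\sum_s\lambda_sC_s\,\frac{1}{R^{(s)}_i}
= C\sum_s\mu_s\frac{1}{R^{(s)}_i}
\ge \frac{C}{\sum_s\mu_sR^{(s)}_i}
=\frac{C}{\bar r_i}.
\]
I need \(C=\sum_i\bar r_i\). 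Since \(\sum_i\bar r_i=\sum_s\mu_sC_s\), this requires a different normalization. The clean route is Cauchy--Schwarz, which gives
\[
\Bigl(\sum_s\lambda_sC_s\Bigr)\Bigl(\sum_s\lambda_s/(C_sR_i^{(s)})\cdot\ldots\Bigr).
\]
More directly, the mixture is itself a feasible (possibly non-monotone) splitting: the total particles through transition \(i\) is \(n_i=\sum_sM_sR^{(s)}_i\). By Cauchy--Schwarz,
\[
\sum_s\frac{\lambda_s^2}{M_s R_i^{(s)}}\ge \frac{1}{\sum_sM_sR^{(s)}_i}=\frac1{n_i}.
\]
Here \(\sum_i n_i=B\), and \(n\) is monotone because each \(R^{(s)}\) is. Multiplying the mixture objective by \(1/B\) recovers exactly \(\sum_s\lambda_s^2/M_s[\cdot]\) with \(M_s=\lambda_sB/C_s\). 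Hence for every \(x\) the mixture variance is at least \(\sum_i\widetilde v_i(x)/n_i\). Taking the supremum over \(x\) and then the minimum over \(n\in\mathcal N\), Corollary~\ref{cor:minimax-var-opt} gives at least \(V^*/B\), since \(n^*\) is optimal and \(V^*\) is its scale-free value. This yields the left inequality.

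\textbf{Upper bound.} Take \(\lambda_s\) proportional to \(\Pbb(R(U)=R^{(s)})\,C_s\), normalized. This weighting is exactly the mixture whose root counts follow the law of \(U\). Write \(\bar C=\E[C(U)]\), where \(C(U)=\sum_iR_i(U)\). The objective then becomes
\[
\bar C\cdot\E\Bigl[\sum_i\widetilde v_i(x)/R_i(U)\Bigr].
\]
Writing \(\log_2r_i^*=k_i+f_i\), a short computation gives \(R_i(U)=r_i^*2^{\theta}\), where \(\theta=\lfloor f_i+U\rfloor-f_i\) is uniform on \([-f_i,1-f_i)\) modulo 1. In law, this means \(R_i(U)=r_i^*2^{W_i}\) with \(W_i\sim\operatorname{Unif}[-1,0)\) shifted; precisely, \(\log_2(R_i/r_i^*)\) is uniform on an interval of length 1 containing a part of \([-1,1)\). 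Using \(\E[2^{W}]\) and \(\E[2^{-W}]\), one obtains a per-coordinate bound of the form
\[
\E[R_i(U)]\le\alpha\, r_i^*,\qquad \E[1/R_i(U)]\le\beta/r_i^*.
\]
However, the product \(\alpha\beta\) from this crude bound is \((1/\log2)^2\cdot\ldots\), which is not the sharp constant. Roundy's trick is to exploit the coupling through the shared \(U\) and optimize a shift. Replace \(U\) by \(U+c\): scaling all of \(r^*\) by \(2^{c}\) does not change \(V^*\). Since \(V^*\) is scale-free, choose the scale \(q\) so that \(\E[R_i(U)]=qr_i^*\,\cdot\) and \(\E[1/R_i]\) are measured relative to the same quantity. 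Exactly, for \(r=r_i^*\),
\[
\E[R_i(U)]=r\int_0^1 2^{\lfloor f+u\rfloor-f}\,du\cdot\ldots
\]
and the key identity is
\[
\E[R_i(U)]\cdot\ldots\le\frac{1}{2\log2}\,r\Bigl(2^{c}+\ldots\Bigr).
\]
I would compute \(\E[2^{\lfloor f+U\rfloor-f}]=\frac{1}{2\log2}\cdot2\cdot\ldots\), i.e.\ \(\E[R_i(U)]=r_i^*/(2\log2)\) times a factor independent of \(f\). The reason is that \(2^{\lfloor f+U\rfloor-f}\) has the law of \(2^{-V}\) with \(V\sim\operatorname{Unif}[0,1)\), so this expectation equals \(\frac{1}{2\log 2}r_i^*\). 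Similarly, \(\E[1/R_i(U)]=\frac{1}{\log2}\cdot\frac1{r_i^*}\).

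The remaining step is to decouple the product \(\bar C\cdot\E[\sum\widetilde v/R]\). Since both factors are expectations, I bound them by their individual values:
\[
\bar C=\frac{1}{2\log2}\sum_ir_i^*,\qquad \E\Bigl[\sum\widetilde v_i/R_i\Bigr]=\frac{1}{\log2}\sum\widetilde v_i/r_i^*.
\]
This gives a constant \(1/(2\log^22)\approx1.04\). The stated constant \(2/(e\log2)\approx1.06\) suggests the authors instead use AM--GM with a free scaling \(2^{c}\). With that scaling the bound takes the form
\[
\min_c\,(\ldots)\,2^{c}\cdot\ldots\,\frac{\ldots}{\ldots}\le\frac{2}{e\log 2}V^*,
\]
and the constant comes out as \(\max_{y}y2^{1-y}\ldots\).

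The main obstacle is this constant computation: getting exactly \(2/(e\log2)\) by optimizing over the shift. I would try first to bound \(\sup_x\) of the mixture average by the average of \(\sup_x\), which is valid because \(\sup\) of an expectation is at most the expectation of the \(\sup\). If the constant does not come out right, I would fall back to the shifted-\(U\) AM--GM form.
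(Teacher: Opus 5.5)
Your lower bound is correct and is the paper's argument in another form. The Cauchy--Schwarz step $\sum_s\lambda_s^2/(M_sR_i^{(s)})\ge1/n_i$, with $n_i=B\sum_s\lambda_sR_i^{(s)}/C_s$, is the convexity of $z\mapsto1/z$ at the averaged profile $\overline z_i=\sum_s\lambda_sR_i^{(s)}/C_s$. The identity $V^*=B\min_{n\in\mathcal N}\sup_x\sum_i\widetilde v_i(x)/n_i$ then closes it (drop types with $\lambda_s=0$ before dividing by $M_s$).

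The upper bound, however, has a genuine gap.

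First, the marginal law is wrong. Write $\log_2r_i^*=k_i+f_i$ and let $\{\cdot\}$ be the fractional part. The exponent $\lfloor f_i+U\rfloor-f_i=U-\{f_i+U\}$ equals $-f_i$ with probability $1-f_i$ and $1-f_i$ with probability $f_i$; it is not $-V$ with $V$ uniform. Hence $\E[R_i(U)]=(1+f_i)2^{-f_i}r_i^*\ge r_i^*$ and $\E[1/R_i(U)]=2^{f_i}(1-f_i/2)/r_i^*$, not $r_i^*/(2\log2)$ and $1/(r_i^*\log2)$. Your two numbers are the means for the rescaled profile $2^{-U}R_i(U)=r_i^*2^{-W_i}$, with $W_i=\{\log_2r_i^*+U\}$. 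That rescaling is harmless only inside the product $C(U)Y(U)$, where $Y(U)=\sum_j\widetilde v_j(x)/R_j(U)$, because there the factors $2^{\pm U}$ cancel.

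Second, for your weights $\lambda_s\propto w_sC_s$ the objective is $\E[C(U)^2Y(U)]/\E[C(U)]$, not $\overline C\,\E[Y(U)]$. Splitting $\E[C(U)Y(U)]$ into a product of expectations is unjustified, since both factors depend on the same $U$. Done with the rescaled factors, it is false in general. The cross terms carry factors $\E_U[R_i(U)/R_j(U)]\,r_j^*/r_i^*$ as large as $2/(e\log2)\approx1.0615$, which exceeds $1/(2\log^22)\approx1.0406$. So the split fails when $\widetilde v(x)$ sits on one index $j$, $\sum_{i\ne j}r_i^*\gg r_j^*$, and $\{\log_2(r_i^*/r_j^*)\}\approx1/\log2-1$.

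The legitimate version only yields $(2/(e\log2))^2\approx1.13$. That version uses weights $\lambda_s\propto w_s/C_s$, then $1/\E[1/C]\le\E[C]$, then the correct marginals. That your route produced a constant below the stated one was the warning sign. Your fallback $\sup_x\E_U\le\E_U\sup_x$ is valid but lossy: for fixed $U$ the ratios $R_i(U)/r_i^*$ only lie in $(2^{U-1},2^U]$, which gives a factor $2$. Finally, the closing AM--GM step is left as placeholders.

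The missing idea is to expand the product instead of decoupling it. Take $\lambda_s=w_s$, the length of $\{u\in[0,1):R(u)=R^{(s)}\}$; this is a budget share, not a root share. Then for every $x$ the objective is exactly
\[
    \E_U\Bigl[\Bigl(\sum_iR_i(U)\Bigr)\Bigl(\sum_j\frac{\widetilde v_j(x)}{R_j(U)}\Bigr)\Bigr]
    =\sum_{i,j}\widetilde v_j(x)\,\E_U\!\left[\frac{R_i(U)}{R_j(U)}\right].
\]
Because the shift $U$ is common to all coordinates, only the relative rounding matters. If $\log_2(r_i^*/r_j^*)=k+f$ with $k\in\mathbb Z$ and $0\le f<1$, then $\log_2(R_i(U)/R_j(U))=k+\lfloor\xi+f\rfloor$, where $\xi=\{\log_2r_j^*+U\}$ is uniform on $[0,1)$. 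Hence
\[
    \E_U[R_i(U)/R_j(U)]=(r_i^*/r_j^*)(1+f)2^{-f}\le\frac{2}{e\log2}\,\frac{r_i^*}{r_j^*}.
\]
The constant is $\max_{1\le y\le2}y\,2^{1-y}$, the function you guessed. Summing over $i,j$ bounds the objective at $x$ by $\frac{2}{e\log2}(\sum_ir_i^*)(\sum_j\widetilde v_j(x)/r_j^*)$, and taking $\sup_x$ gives $\frac{2}{e\log2}V^*$.
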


The proof is given in Appendix~\ref{app:proof-dyadic-mixture-approximation}. We next show that the earlier asymptotic analysis extends to the resulting mixture. Write \(C_{s,B}=\sum_{i=0}^{K_B-1}R_{i,B}^{(s)}\). For each \(B\), let \(\lambda_B=(\lambda_{1,B},\ldots,\lambda_{S_B,B})\) be an optimizer of the problem in Proposition~\ref{prop:dyadic-mixture-approximation}. The optimizer need not be unique, and the number of candidate types \(S_B\) can grow with \(B\). Instead, we analyse the normalized mixture cost, which is defined as
\[
    \overline\gamma_B
    =
    h_B\sum_{s=1}^{S_B}\lambda_{s,B}C_{s,B},
\]
and, for \(t\in[t_{i,B},t_{i+1,B})\), define the effective splitting function
\[
    \overline r_B(t)
    =
    \frac{
        \sum_{s=1}^{S_B}\lambda_{s,B}C_{s,B}
    }{
        \sum_{s=1}^{S_B}
        \lambda_{s,B}C_{s,B}/R_{i,B}^{(s)}
    }.
\]
The function \(\overline r_B\) is nondecreasing and satisfies \(\overline r_B(0+)=1\). To obtain integer root counts, set
\[
    M_{s,B}
    =
    \left\lfloor
        \frac{B\lambda_{s,B}}{C_{s,B}}
    \right\rfloor
\]
For type \(s\), let \(\widehat F_{B,s}\) be the empirical CDF formed from its \(M_{s,B}\) root. The mixture estimator is then
\[
    \widehat F_B^{\mathrm{mix}}
    =
    \frac{
        \sum_{s=1}^{S_B}
        M_{s,B}C_{s,B}\widehat F_{B,s}
    }{
        \sum_{s=1}^{S_B}M_{s,B}C_{s,B}
    }.
\]
The candidate profiles and weights are fixed before simulation, and all roots are simulated independently within and across candidate types.

\begin{proposition}[Mixture Functional Limit]
\label{prop:mixture-functional-limit}
Suppose Assumptions~\ref{ass:terminal-strong-consistency} and~\ref{ass:terminal-weak-conv} hold, the time partitions are asymptotically uniform, and for some \(\overline\gamma\in(0,\infty)\) and \(\overline r\in\mathcal R\),
\[
    Bh_B^2\longrightarrow\infty,
    \qquad
    \overline\gamma_B\longrightarrow\overline\gamma,
    \qquad
    \overline r_B\longrightarrow\overline r
    \quad\text{in }L^1([0,T]).
\]
Then
\[
    \sqrt{Bh_B}
    \left(
        \widehat F_B^{\mathrm{mix}}-F_B
    \right)
    \Rightarrow
    \sqrt{\overline\gamma}\,\G
    \qquad
    \text{in }\ell^\infty(\cA),
\]
where \(\G\) is a centered Gaussian process with covariance
\[
    \Cov\!\left(\G(a_x),\G(a_y)\right)
    =
    \Gamma_0(x,y)
    +
    \int_0^T
    \frac{g_{x,y}(t)}{\overline r(t)}\,dt.
\]
Moreover,
\[
    \sqrt{Bh_B}
    \KS\!\left(\widehat F_B^{\mathrm{mix}},F_B\right)
    \Rightarrow
    \sqrt{\overline\gamma}
    \sup_{x\in\R^d}
    \left|\G(a_x)\right|.
\]
\end{proposition}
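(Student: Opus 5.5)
We plan to reduce the mixture estimator to a triangular-array empirical process with independent but non-identically distributed root blocks, and then follow the proof of Theorem~\ref{thm:splitting-clt}.

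\textbf{Step 1: write the estimator as a sum of root blocks.} Put \(D_B=\sum_s M_{s,B}C_{s,B}\). For root \(j\) of type \(s\), let \(f^{(s)}_{B,x}\) be the average of \(a_x\) over its terminal descendants. Then
\[
    \widehat F_B^{\mathrm{mix}}(x)-F_B(x)=\sum_{s}\sum_{j\le M_{s,B}}\frac{C_{s,B}}{D_B}\left(f^{(s)}_{B,x}(Z_{B,s,j})-F_B(x)\right),
\]
a sum of independent centered summands bounded by \(C_{s,B}/D_B\). Since \(M_{s,B}=\lfloor B\lambda_{s,B}/C_{s,B}\rfloor\), the rounding loss is at most \(\sum_s C_{s,B}\). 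Each \(C_{s,B}\le K_B R_{K_B-1}^{(s)}\) is a finite quantity for fixed \(B\). We need the bound \(\sum_s C_{s,B}=o(B)\). For this we will use that \(\overline r_B\) converges in \(L^1\). This forces \(h_BC_{s,B}\) to be \(O(1)\) on the types carrying weight. Types with negligible weight can be dropped, because they contribute \(o(1)\) to \(\overline\gamma_B\). Together with \(S_B\le K_B\asymp h_B^{-1}\), this gives \(\sum_s C_{s,B}=O(h_B^{-2})=o(B)\) under \(Bh_B^2\to\infty\). This is exactly where that hypothesis enters. Hence \(D_B=B(1+o(1))\).

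\textbf{Step 2: compute the covariance.} By Lemma~\ref{lem:splitting-cov} applied to each type,
\[
    Bh_B\Cov(\widehat F_B^{\mathrm{mix}}(x),\widehat F_B^{\mathrm{mix}}(y))=\frac{Bh_B}{D_B^2}\sum_s M_{s,B}C_{s,B}^2\Big[\Gamma_{0,B}+\sum_i \frac{v_{i,B}}{R^{(s)}_{i,B}}\Big].
\]
Substituting \(M_{s,B}C_{s,B}\approx B\lambda_{s,B}\), this becomes
\[
    \frac{h_B}{B}\sum_s B\lambda_{s,B}C_{s,B}[\cdots]=\overline\gamma_B\Big[\Gamma_{0,B}+\sum_i v_{i,B}/\overline r_B(t_{i,B})\Big],
\]
by the definition of \(\overline r_B\) as a weighted harmonic mean. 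The right-hand side has exactly the form of \(\Gamma_B\) with \(r_B\) replaced by \(\overline r_B\). Proposition~\ref{prop:covariance-convergence} and the argument of Proposition~\ref{prop:covlimit} then give the limit
\[
    \overline\gamma\Big(\Gamma_0+\int_0^T g_{x,y}/\overline r\Big).
\]
That argument needs only that \(\overline r_B\) is nondecreasing, that \(\overline r_B\ge1\), and that \(\overline r_B\to\overline r\) in \(L^1\).

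\textbf{Step 3: functional convergence.} We will use the Lindeberg-type CLT for triangular arrays of independent non-identical processes, namely the bracketing or uniform-entropy version (van der Vaart--Wellner Theorem 2.11.1/2.11.23). Three conditions need checking.
\begin{itemize}
    \item \emph{Lindeberg.} Each summand of \(\sqrt{Bh_B}(\widehat F_B^{\mathrm{mix}}-F_B)\) is bounded by \(\sqrt{Bh_B}\,C_{s,B}/D_B\lesssim \sqrt{h_B/B}\,C_{s,B}\lesssim (Bh_B)^{-1/2}\to0\), since \(Bh_B\ge Bh_B^2\to\infty\).
    \item \emph{Measurability and entropy.} The class \(\{f^{(s)}_{B,x}\}\) consists of averages of lower-orthant indicators, so it is VC-type uniformly in \(B\), exactly as in Theorem~\ref{thm:splitting-clt}.
    \item \emph{Asymptotic equicontinuity.} We need the variance semimetric to be dominated by \(\rho\). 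The mixture version of the bound in Proposition~\ref{prop:covlimit} gives \(d^2\le\overline\gamma_B\,\rho_B(a_x,a_y)^2\), because \(1/\overline r_B\le 1\). Then \(\rho_B\to\rho\) uniformly follows from Assumption~\ref{ass:terminal-strong-consistency} and the continuity of \(F\), as in the single-tree proof.
\end{itemize}

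\textbf{Step 4: conclude.} Replacing the normalization \(D_B\) by \(B\) costs only a factor \(1+o(1)\). The KS statement then follows by the continuous mapping theorem applied to the sup-norm.

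\textbf{Main obstacle.} We expect the hardest step to be Step 1: controlling the integer rounding of \(M_{s,B}\) and the growing number of types \(S_B\), that is, showing that \(\sum_s C_{s,B}=o(B)\) holds uniformly. If the \(L^1\) argument above fails, we would try bounding \(C_{s,B}\) through \(R^{(s)}_{i}\le 2r^*_i\). This reduces the question to bounding \(h_B\sum_i r^*_i\), which is the relaxed cost and is controlled by \(\overline\gamma_B\) up to a factor of \(2\).
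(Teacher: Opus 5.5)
Your overall architecture matches the paper's proof: independent root blocks with coefficients $C_{s,B}\sqrt{Bh_B}/D_B$, a flooring loss of at most $\sum_sC_{s,B}=O(h_B^{-2})=o(B)$ via $S_B\le K_B$, the covariance via Lemma~\ref{lem:splitting-cov} and the harmonic-mean form of $\overline r_B$ fed into Propositions~\ref{prop:covariance-convergence} and~\ref{prop:covlimit}, and then the independent-array theorem (van der Vaart--Wellner, Theorem~2.11.1) with Lindeberg, increment and uniform-entropy conditions.

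\textbf{The gap: the argument you lead with in Step~1 fails.}
\begin{itemize}
\item Convergence of $\overline r_B$ in $L^1$ cannot bound an individual $C_{s,B}$. The function $\overline r_B$ is a $\lambda_{s,B}C_{s,B}$-weighted harmonic mean of the profiles, and a harmonic mean stays bounded when one of its arguments blows up.
\item What bounds $h_BC_{s,B}$ for a type of weight at least $\varepsilon$ is $h_B\lambda_{s,B}C_{s,B}\le\overline\gamma_B$. This says nothing about types of small weight.
\item Dropping low-weight types is circular. A small $\lambda_{s,B}$ makes $h_B\lambda_{s,B}C_{s,B}$ small only if $C_{s,B}$ is already controlled.
\end{itemize}
In fact, the stated hypotheses $\overline\gamma_B\to\overline\gamma$ and $\overline r_B\to\overline r$ alone are not enough. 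Take a trivial-profile type holding almost all the weight, plus a type that multiplies its population by a factor of order $1/\lambda_{s,B}$ at time $T/2$. This allows $\lambda_{s,B}\ll(Bh_B)^{-1/2}$ with $h_B\lambda_{s,B}C_{s,B}\to c>0$. Then $M_{s,B}=0$. Yet the claimed covariance $h_B\sum_s\lambda_{s,B}C_{s,B}[\Gamma_{0,B}+\sum_iv_{i,B}/R^{(s)}_{i,B}]$ keeps a non-vanishing term from this type that the realized estimator does not contain.

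The same uniform bound $\max_sh_BC_{s,B}=O(1)$ is used silently twice more:
\begin{itemize}
\item In Step~2, replacing $M_{s,B}C_{s,B}$ by $B\lambda_{s,B}$ costs order $h_B\max_sC_{s,B}\,(B-D_B)/D_B$.
\item In your Lindeberg bullet, the bound $\sqrt{h_B/B}\,C_{s,B}\lesssim(Bh_B)^{-1/2}$ presupposes $C_{s,B}\lesssim h_B^{-1}$ for every type that receives a root.
\end{itemize}

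\textbf{The fix: your fallback, which is the paper's argument.} All candidates come from the same $r^*$ through $R_i(U)=2^{\lfloor\log_2r_i^*+U\rfloor}$, so $r_i^*/2<R^{(s)}_{i,B}<2r_i^*$ for every type. Hence $\sum_ir_i^*<2C_{t,B}$ for every $t$. Averaging over $\lambda_{t,B}$ gives $h_B\sum_ir_i^*<2\overline\gamma_B$, and therefore $\max_sh_BC_{s,B}<4\overline\gamma_B$. The paper gets the constant $2$ instead: two types' exponents differ by at most one, so $C_{s,B}\le2C_{t,B}$.

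This bound covers every type regardless of weight, so nothing needs to be dropped. It yields:
\begin{itemize}
\item $\sum_sC_{s,B}=O(h_B^{-2})=o(B)$;
\item $\max_sC_{s,B}\sqrt{Bh_B}/D_B=O((Bh_B)^{-1/2})$;
\item $h_B\sum_s|w_{s,B}-\lambda_{s,B}|C_{s,B}\to0$ for the realized weights $w_{s,B}=M_{s,B}C_{s,B}/D_B$.
\end{itemize}
With these, Steps~2--4 go through as in the paper, with two small corrections:
\begin{itemize}
\item In your equicontinuity bound, the constant should be $Q_B=(B/D_B)\,h_B\sum_sw_{s,B}C_{s,B}$, which tends to $\overline\gamma$, rather than $\overline\gamma_B$.
\item For the entropy condition across types with different leaf counts, the paper uses a single random measure on $\R^d$ mixing all leaves, with weights proportional to the squared coefficients.
\end{itemize}
So promote the fallback to the main argument and discard the $L^1$/dropping argument. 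The result rests on the candidates sharing the common generator $r^*$, not only on the stated convergence hypotheses.
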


The proof is given in Appendix~\ref{app:proof-mixture-functional-limit}. The condition \(Bh_B^2\to\infty\) makes the budget lost to flooring negligible. Under the nondegenerate scaling \(h_B\propto B^{-1/(1+2p)}\), this holds whenever \(p>1/2\), including the usual Euler case \(p=1\).

The individual optimized weights need not converge, and a candidate type need not receive a growing number of roots. The earlier covariance, functional CLT, and KS limits centered at \(F_B\) carry over with \((\gamma,r)\) replaced by \((\overline\gamma,\overline r)\). Under Assumption~\ref{ass:bias}, the bias-variance analysis carries over as well. Hence, the relaxed allocation can be approximated by a mixture of valid dyadic trees while retaining the same form of the asymptotic limit theory.

\section{Numerical Experiments}
\label{sec:numerical-experiments}

The optimal allocation is determined by the variance contributions \(v_i(x,x)\) and \(\Gamma_{0,B}(x,x)\). In practice they are not known. We propose a two-phase procedure, where Phase~1 spends \(B_1 = o(B)\) budget to estimate the parameters which are used to solve the allocation problem. The remaining budget is then used to sample using the splitting scheme.

\subsection{Two-Phase Algorithm}
\label{sec:two-phase-algorithm}

Before the simulation begins, we choose a small set of split times
\[
    0=t_0<t_1<\cdots<t_{L-1}<t_L=T,
\]
where $t_1,\ldots,t_{L-1}$ are the times at which splitting may occur. Particles follow the original dynamics inside a segment. Phase~1 of the algorithm uses $B_1\propto B^\alpha$ budget with $\alpha<1$ to simulate complete paths without splitting. From these Phase~1 paths we fit small MLPs that learn
\[
    \widehat u_x(t,X_t)
    \approx
    u_x(t,X_t)
    =
    \Pbb(X_T\le x\mid X_t).
\]
We convert these predictions into estimates of $\widetilde v_i$ through the second moments
\[
    Q_i(x)=\E[u_x(t_i,X_{t_i})^2].
\]
Then, since \(u_x(t_i,X_{t_i})\) is a martingale, we have the following expression for $\widetilde v_i$
\[
    \widetilde v_0(x)=Q_1(x)-F(x)^2,
    \qquad
    \widetilde v_i(x)=Q_{i+1}(x)-Q_i(x)\quad(i\ge1).
\]
We use these estimates in the allocation problem in Corollary~\ref{cor:minimax-var-opt}, where the supremum over all thresholds is approximated by a maximum over a finite set.

The resulting allocation is used to construct candidate dyadic trees, and the weights of the mixture are determined by solving a linear program, as detailed in Section~\ref{sec:rounding}. Phase~2 uses the remaining budget to run the selected trees. The final estimate combines the Phase~1 terminal samples with the weighted empirical CDFs of the Phase~2 trees. Appendix~\ref{app:final-algorithm} gives the full procedure.

\subsection{Experimental Setup}
\label{sec:experiment-design}

We study four examples: (i) a two-dimensional Ornstein-Uhlenbeck (OU) process, (ii) two-dimensional overdamped Langevin dynamics, (iii) a stochastic EDM model trained on a two-dimensional Gaussian mixture, and (iv) the pretrained \texttt{google/ddpm-cifar10-32} model. For every example, we consider schedules with 9, 19 and 39 split times. Complete details of the models, their corresponding SDEs, and configurations are given in Appendix~\ref{app:benchmark-details}.

At each budget, we compare learned splitting with independent full-path simulation and the Uniform-\(c\) and Learned-\(c\) baselines, where we split by a factor of \(c\) each time. Refer to Appendix~\ref{app:c-baselines} for both baseline procedures. Due to compute constraints, we only test CIFAR-10 on Learned-\(c\). For the two-dimensional examples, we compute the exact KS distance of the empirical CDF to a fixed reference CDF. Due to computational constraints, we use MMD for CIFAR-10. Appendix~\ref{app:evaluation-metrics} gives the reference samples and metric definitions.

\subsection{Results}
\label{sec:experiment-results}

Figure~\ref{fig:metric-gain} shows the percentage reduction in the mean error metric relative to independent full-path simulation, budget by budget. Figure~\ref{fig:allocation-profiles} shows the learned cumulative allocation profiles at the largest tested budget for each model. Refer to Appendix~\ref{app:complete-results} for complete numerical results.

\begin{figure}[h]
    \centering
    \begingroup
    \setlength{\fboxsep}{0pt}
    \fbox{\includegraphics[width=0.995\textwidth]{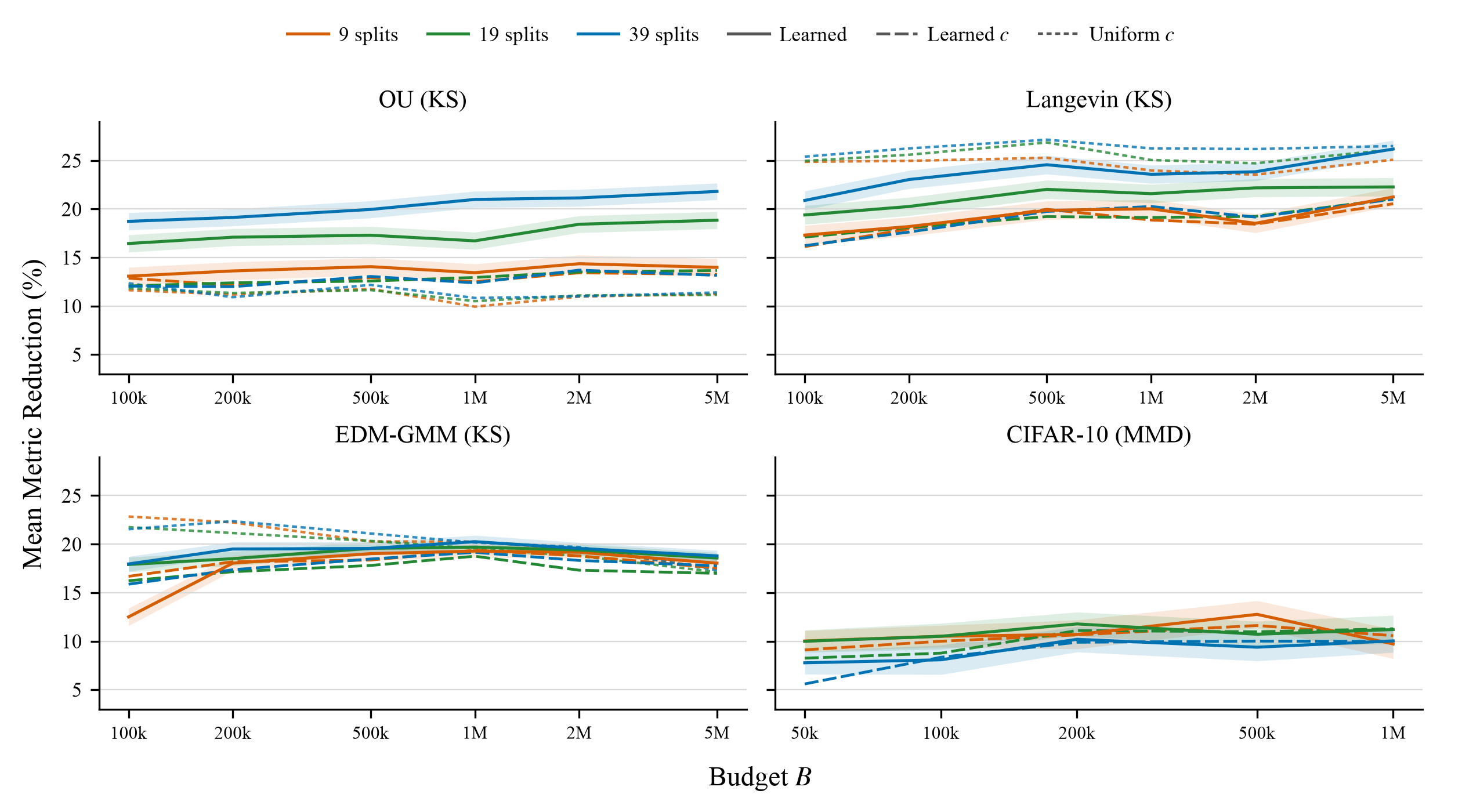}}
    \endgroup
    \caption{Percentage reduction in the relevant metric relative to independent full-path simulation for the learned allocation, Uniform-$c$, and Learned-$c$. For Uniform-$c$, the best-performing \(c\) is shown. The first three panels show mean KS over $2{,}500$ repetitions, and the CIFAR-10 panel shows mean MMD over 50 repetitions. Shading shows 90\% normal intervals.}
    \label{fig:metric-gain}
\end{figure}

\begin{figure}[h]
    \centering
    \begingroup
    \setlength{\fboxsep}{0pt}
    \fbox{\includegraphics[width=0.995\textwidth]{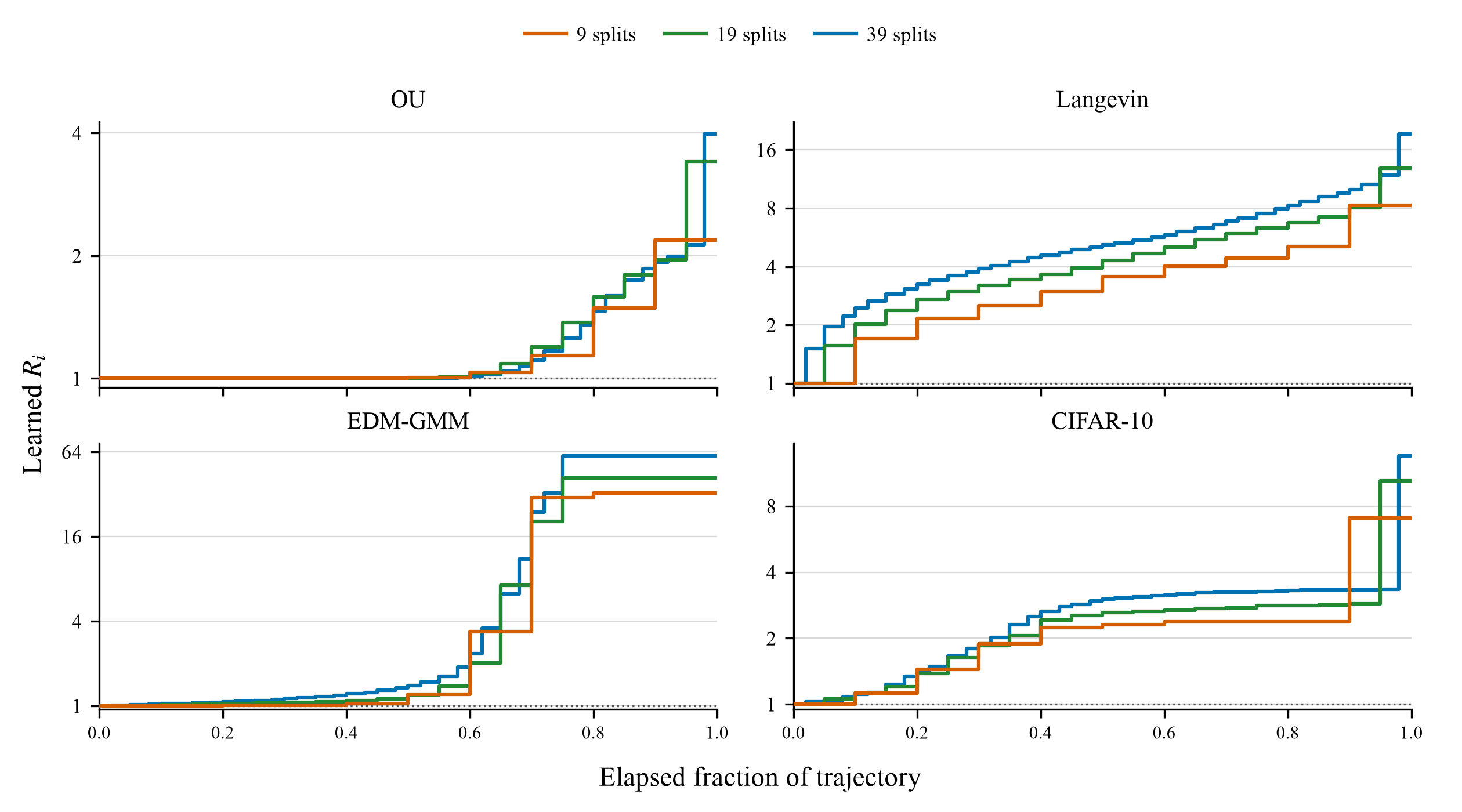}}
    \endgroup
    \caption{Allocation profiles $R_i$ for the learned allocation at each model's largest tested budget.}
    \label{fig:allocation-profiles}
\end{figure}

Splitting lowers mean error at every tested budget and schedule across all four examples, with reductions of up to \(25\%\). The learned allocation profiles vary across models. For the EDM model, the learned allocation only splits near the midpoint. This agrees with the intuition that most denoising, and hence higher variance, occurs in the middle of the trajectory.

The uniform baseline, where \(c\) is chosen before running does well, and sometimes does better than the learned allocation. However, it requires a choice of \(c\) that is not known in advance, and some choices of \(c\) perform worse than independent full-path simulation. The baseline where \(c\) is learned using a two-phase procedure consistently beats independent full-path simulation, and is also often close to the learned allocation. The constant splitting baselines are good when the optimal \(r\) is close to geometric, and suboptimal when the profile is far from geometric, as evident in the OU and Langevin examples.

Refer to Figure~\ref{fig:metric-gain} for the comparison, and Appendix~\ref{app:complete-results} for complete results.

The variance contributions of the OU process can be computed analytically, so we can compare the learned allocation with the optimal allocations. Figure~\ref{fig:ou-oracle-allocations} shows that the learned allocation closely matches the oracle allocation. At \(B=5\times10^6\), the reduction from the learned allocation ranges from \(82.6\%\) to \(100.7\%\) of the reduction from the oracle across the three schedules. Appendix~\ref{app:ou-oracle} gives the construction and results.

\section{Conclusion}
\label{sec:conclusion}

We studied how to spend a fixed simulation budget when the target is the full terminal distribution of a discretized SDE. We derived the covariance created by shared path histories and proved a functional central limit theorem. Bounds on the expected Kolmogorov-Smirnov error led to a related minimax allocation problem. We solved that problem in continuous and finite time. We then converted the relaxed finite allocation into a mixture of exact dyadic trees. Before integer root counts are imposed, the mixture is within a factor \(2/(e\log 2)\) of the relaxed worst-threshold variance. We also provided a practical two-phase procedure and extensive empirical evaluations.

Our approach has several limitations. The upper and lower bounds on expected Kolmogorov-Smirnov error differ by a logarithmic factor. The design criterion ignores this gap. The finite query set and estimated variance values remain outside the theory, which does not account for these errors.

\begingroup
\small
\setlength{\bibsep}{2pt plus 0.5pt minus 0.5pt}
\renewcommand{\bibsection}{\section*{References}\leavevmode\par\nobreak\vspace{0.25\baselineskip}}
\bibliographystyle{plainnat}
\bibliography{references}
\endgroup

\clearpage
\appendix
\startcontents[appendices]
\section*{Appendix Contents}
\begingroup
\small
\printcontents[appendices]{}{1}[2]{}
\endgroup

\section{Proofs and Auxiliary Results}
\label{app:proofs}

\subsection{PAVA and convex minorants}
\label{app:pava}

This subsection defines the discrete and continuous PAVA constructions used in Lemma~\ref{lem:var-opt} and Theorem~\ref{thm:functional-minimax-opt}. In both settings, PAVA replaces a nonnegative variance profile by a nondecreasing effective profile. We first describe the discrete construction and then define its continuous-time analogue. For \(z=(z_0,\ldots,z_{m-1})\in[0,\infty)^m\), define the equal-weight nondecreasing PAVA fit by
\[
    \widehat z=\PAVA(z)
    :=
    \arg\min_{\theta_0\le\cdots\le\theta_{m-1}}
    \sum_{k=0}^{m-1}(z_k-\theta_k)^2.
\]
The fit can be computed by starting with the singleton blocks \(\{0\},\ldots,\{m-1\}\). For a consecutive block \(I\), write
\[
    \overline z_I
    =
    \frac1{|I|}\sum_{k\in I}z_k.
\]
Whenever two adjacent blocks \(I\) and \(J\) satisfy \(\overline z_I>\overline z_J\), merge them and assign the combined block its average. Continue until the block averages are nondecreasing. If
\(I_1,\ldots,I_L\) are the final PAVA blocks, then
\[
    \widehat z_k=\overline z_{I_\ell},
    \qquad
    k\in I_\ell,\quad 1\le\ell\le L.
\]
The resulting fit is unique and does not depend on the order in which violations are pooled. This is the standard equal-weight PAVA construction \citep[Chapter~1]{RobertsonWrightDykstra1988}. The same fit is obtained from a convex minorant. Define
\[
    S_0=0,
    \qquad
    S_j=\sum_{i=0}^{j-1}z_i,
    \quad 1\le j\le m,
\]
and let \(C_z\) be the greatest convex minorant of the polygonal line through
\[
    (0,S_0),(1,S_1),\ldots,(m,S_m).
\]
Then
\[
    \widehat z_i=C_z(i+1)-C_z(i),
    \qquad 0\le i<m.
\]
Thus the final PAVA blocks are the intervals on which the convex minorant has constant slope. Moreover, \(C_z(m)=S_m\), and hence
\[
    \sum_{i=0}^{m-1}\widehat z_i
    =
    \sum_{i=0}^{m-1}z_i.
\]

To see how this construction enters the discrete allocation problem, fix \(x\) and \(B\), and put \(m=K_B\). Define
\[
    \widetilde v_k
    =
    v_{k,B}(x,x)
    +
    \one_{\{k=0\}}\Gamma_{0,B}(x,x),
    \qquad
    n_k=N_{0,B}R_{k,B}.
\]
Since \(R_{0,B}=1\), both \(\Gamma_{0,B}(x,x)\) and \(v_{0,B}(x,x)\) are divided by the initial population \(n_0=N_{0,B}\). This is why the initial-state variance is included in the first coordinate of the variance-contribution profile. The variance is
\[
    \sum_{k=0}^{m-1}
    \frac{\widetilde v_k}{n_k},
\]
while the relaxed particle counts must satisfy
\[
    0<n_0\le n_1\le\cdots\le n_{m-1}.
\]
Write
\[
    \widetilde v
    =
    (\widetilde v_0,\ldots,\widetilde v_{m-1}).
\]
Under the positivity condition in Lemma~\ref{lem:var-opt}, let
\[
    \widehat v=\PAVA(\widetilde v).
\]
That lemma shows that the optimal relaxed allocation is
\[
    n_k^*
    =
    B
    \frac{\sqrt{\widehat v_k}}
         {\displaystyle\sum_{j=0}^{m-1}\sqrt{\widehat v_j}},
    \qquad
    0\le k<m.
\]
Thus PAVA pools adjacent variance contributions that conflict with the monotonicity constraint, after which the allocation follows the usual square-root rule. 

We next define the continuous analogue, including an atom at time zero. Let \(a_0\ge0\), and let \(a\in L^1([0,T])\) be nonnegative almost everywhere. Define the cumulative variance profile \(A[a_0,a]\) by
\[
    A[a_0,a](0)=0,
    \qquad
    A[a_0,a](t)
    =
    a_0+\int_0^t a(s)\,ds,
    \quad 0<t\le T.
\]
The jump at zero records the initial variance atom. Let \(C[a_0,a]\) be the greatest continuous convex function \(C\) on
\([0,T]\) satisfying
\[
    C(0)=0,
    \qquad
    C(t)\le A[a_0,a](t),
    \quad 0<t\le T.
\]
We define the continuous PAVA profile by
\[
    h(t)=\PAVA(a_0,a)(t)
    :=
    \frac{d^+}{dt}C[a_0,a](t),
    \qquad
    0\le t<T.
\]
The function \(h\) is nonnegative, nondecreasing, right-continuous, and integrable. The greatest convex minorant meets the cumulative profile at the terminal time, so
\[
    C[a_0,a](T)
    =
    A[a_0,a](T)
    =
    a_0+\int_0^T a(t)\,dt.
\]
Consequently,
\[
    C[a_0,a](t)=\int_0^t h(s)\,ds,
    \qquad
    \int_0^T h(t)\,dt
    =
    a_0+\int_0^T a(t)\,dt.
\]

The gap
\[
    G(t)=A[a_0,a](t)-C[a_0,a](t)
\]
is nonnegative, satisfies \(G(T)=0\), and the convex minorant is affine on every connected component of \(\{t:G(t)>0\}\). Therefore \(h\) is constant on every interval over which the continuous construction pools variance. For any bounded, nonnegative, nonincreasing, right-continuous function \(q\), integration by parts gives
\[
    a_0q(0+)+\int_0^T a(t)q(t)\,dt
    =
    \int_0^T h(t)q(t)\,dt
    -
    \int_{(0,T)}G(t)\,dq(t)
    \ge
    \int_0^T h(t)q(t)\,dt.
\]
This is the continuous counterpart of the PAVA block inequality used in Lemma~\ref{lem:var-opt}. The discrete and continuous constructions agree on a fixed grid. Let
\(\Delta=T/m\) and define
\[
    d_0
    =
    a_0+\int_0^\Delta a(s)\,ds,
    \qquad
    d_i
    =
    \int_{i\Delta}^{(i+1)\Delta}a(s)\,ds,
    \quad 1\le i<m.
\]
Set
\[
    D_0=0,
    \qquad
    D_j=\sum_{i=0}^{j-1}d_i,
    \quad 1\le j\le m.
\]
Then \(D_j=A[a_0,a](j\Delta)\) for \(j\ge1\). The slopes of the greatest convex minorant through
\[
    (0,D_0),(\Delta,D_1),\ldots,(m\Delta,D_m)
\]
are
\[
    \frac{\PAVA(d)_i}{\Delta},
    \qquad
    0\le i<m.
\]
Thus the initial atom enters the first grid cell and may be pooled with later intervals whenever monotonicity requires it. In Theorem~\ref{thm:functional-minimax-opt}, the inputs are
\[
    a_0=\Gamma_0^\pi,
    \qquad
    a=g^\pi.
\]
The notation used there is therefore
\[
    h_\pi=\PAVA(\Gamma_0^\pi,g^\pi).
\]
The discrete construction in this subsection is equal-weight. The cost-weighted version \(\PAVA_c\), in which cumulative cost replaces the index on the horizontal axis and block averages use transition costs as weights, is defined in Appendix~\ref{app:cost-weighted}.

\subsection{Proof of Lemma~\ref{lem:splitting-cov}}
\label{app:proof-splitting-cov}

\begin{proof}
Let \(\sF_{-1}=\{\emptyset,\Omega\}\). For \(0\le i<K_B\), let \(\sF_i\) be the \(\sigma\)-algebra generated by the particle system immediately before transition \(i\). Thus, \(\sF_0\) contains the initial root particles. For \(i\ge1\), \(\sF_i\) includes split \(i\). Let \(\sF_{K_B}\) contain the terminal particle system. For \(x\in\R^d\), define
\[
    U_i(x)=\E[\widehat F_B(x)\mid\sF_i],
    \qquad i=-1,0,\ldots,K_B.
\]
Then \(U_i(x)\) is a martingale with \(U_{K_B}(x)=\widehat F_B(x)\) and \(U_{-1}(x)=F_B(x)\).
\[
    \Delta_i^x=U_{i+1}(x)-U_i(x),
    \qquad i=-1,0,\ldots,K_B-1.
\]
The martingale differences are orthogonal, hence
\[
    \widehat F_B(x)-F_B(x)
    =
    \sum_{i=-1}^{K_B-1}\Delta_i^x,
    \qquad
    \Cov(\widehat F_B(x),\widehat F_B(y))
    =
    \sum_{i=-1}^{K_B-1}\E[\Delta_i^x\Delta_i^y].
\]
Independence of the root initial states gives
\[
    \E[\Delta_{-1}^x\Delta_{-1}^y]
    =
    \frac1{N_{0,B}}\Gamma_{0,B}(x,y)
\]
Moreover, we also have
\[
    \E[\Delta_i^x\Delta_i^y\mid\sF_i]
    =
    \Cov(U_{i+1}(x),U_{i+1}(y)\mid\sF_i), \quad 0 \le i \le K_B-1.
\]
For \(0\le i<K_B\), let \(I_i\) be the set of particle labels immediately before transition \(i\). Then \(|I_i|=N_{0,B}R_{i,B}\). For \(\alpha\in I_i\), let \(X_i^\alpha\) be the state of particle \(\alpha\). Conditional on \(\sF_i\), transition \(i\) independently produces \(X_{i+1}^\alpha\) from each \(X_i^\alpha\). When \(i<K_B-1\), split \(i+1\) repeats each arriving state \(N_{i+1,B}\) times. Repeating every state the same number of times does not change the particle average. Therefore,
\[
    U_{i+1}(x)
    =
    \frac{1}{N_{0,B}R_{i,B}}
    \sum_{\alpha\in I_i}
    u^B_{i+1,x}\left(X_{i+1}^\alpha\right).
\]
Since the \(X_{i+1}^\alpha\) are conditionally independent given \(\sF_i\),
\begin{align*}
    &\Cov(U_{i+1}(x),U_{i+1}(y)\mid\sF_i) \\
    &\quad=
    \frac1{(N_{0,B}R_{i,B})^2}
    \sum_{\alpha\in I_i}
    \Cov\!\left(
        u^B_{i+1,x}(X_{i+1}^\alpha),
        u^B_{i+1,y}(X_{i+1}^\alpha)
        \mid\sF_i
    \right).
\end{align*}
The covariance in each summand depends on \(\sF_i\) only through \(X_i^\alpha\). Taking expectations, and using that each \(X_i^\alpha\) has the same law, we obtain
\[
    \E\left[\Cov(U_{i+1}(x),U_{i+1}(y)\mid\sF_i)\right]
    =
    \frac{v_{i,B}(x,y)}{N_{0,B}R_{i,B}}.
\]
Therefore
\[
    \Cov(\widehat F_B(x),\widehat F_B(y))
    =
    \frac1{N_{0,B}}
    \left[
        \Gamma_{0,B}(x,y)
        +
        \sum_{i=0}^{K_B-1}\frac{v_{i,B}(x,y)}{R_{i,B}}
    \right].
\]
\end{proof}

\subsection{Proof of Lemma~\ref{lem:var-opt}}
\label{app:proof-var-opt}

\begin{proof}
Put \(m=K_B\). Since \(\widetilde v_k>0\), the objective
\[
n\mapsto \sum_{k=0}^{m-1}\frac{\widetilde v_k}{n_k}
\]
is strictly convex and the feasible set is convex and hence, if a minimizer exists, it is unique. Let \(\widehat v=\PAVA(\widetilde v)\), and let \(I_1,\ldots,I_L\) be the final PAVA blocks. For a block \(I=[a,b]\), \(\widehat v\) is constant on \(I\), with value
\[
\widehat v_k=\bar v_I
=
\frac1{|I|}\sum_{j\in I}\widetilde v_j,
\qquad k\in I.
\]
By the standard min-max characterization of isotonic regression \citep[Chapter~1]{RobertsonWrightDykstra1988}, if \(I=[a,b]\) is a final block, then every initial segment \([a,\ell]\), \(\ell<b\), has average at least the average of \(I\). Thus,
\[
\sum_{k=a}^\ell (\widetilde v_k-\widehat v_k)\ge 0
\quad\text{for }a\le \ell<b,
\qquad
\sum_{k=a}^b (\widetilde v_k-\widehat v_k)=0.
\]

Now let \(n\) be any feasible monotone allocation. Since \(n_a\le n_{a+1}\le\cdots\le n_b\), we have
\[
\frac1{n_a}\ge \frac1{n_{a+1}}\ge\cdots\ge \frac1{n_b}.
\]
For a final PAVA block \(I=[a,b]\), observe that
\begin{align*}
\sum_{k=a}^b \frac{\widetilde v_k-\widehat v_k}{n_k}
&=
\sum_{k=a}^b (\widetilde v_k-\widehat v_k)
\left(
\frac1{n_k}-\frac1{n_b}
\right) \\
&=
\sum_{k=a}^{b-1}(\widetilde v_k-\widehat v_k)
\sum_{\ell=k}^{b-1}
\left(
\frac1{n_\ell}-\frac1{n_{\ell+1}}
\right) \\
&=
\sum_{\ell=a}^{b-1}
\left(
\sum_{k=a}^\ell(\widetilde v_k-\widehat v_k)
\right)
\left(
\frac1{n_\ell}-\frac1{n_{\ell+1}}
\right)
\ge 0.
\end{align*}
The last inequality follows because both factors are nonnegative. Summing over all PAVA blocks yields
\[
\sum_{k=0}^{m-1}\frac{\widehat v_k}{n_k}
\leq
\sum_{k=0}^{m-1}\frac{\widetilde v_k}{n_k}.
\]
Thus, for every feasible monotone allocation \(n\), the objective is bounded below by the same problem with \(\widetilde v\) replaced by its PAVA projection \(\widehat v\). We now minimize this lower bound and then verify that the resulting allocation makes the bound sharp. By Cauchy-Schwarz,
\[
    \sum_{k=0}^{m-1}\sqrt{\widehat v_k}
    =
    \sum_{k=0}^{m-1}
    \frac{\sqrt{\widehat v_k}}{\sqrt{n_k}}\sqrt{n_k}
    \leq
    \left(\sum_{k=0}^{m-1}\frac{\widehat v_k}{n_k}\right)^{1/2}
    \left(\sum_{k=0}^{m-1}n_k\right)^{1/2}.
\]
Since every feasible allocation satisfies \(\sum_k n_k=B\), we have the lower bound
\[
\sum_{k=0}^{m-1}\frac{\widetilde v_k}{n_k}
\ge
\frac{
\left(\sum_{k=0}^{m-1}\sqrt{\widehat v_k}\right)^2
}{B}.
\]
Equality holds in the Cauchy-Schwarz step if and only if \(n_k\propto \sqrt{\widehat v_k}\). The constant is determined by the budget constraint. Denote this allocation by \(n^*\). Since \(\widehat v\) is nondecreasing, \(n^*\) is nondecreasing. Equality also holds in the PAVA-block inequality because \(n^*\) is constant on each final PAVA block. Therefore \(n^*\) attains the lower bound and is optimal.
\end{proof}

\subsection{Splitting-function convergence}
\label{app:splitting-functions}

\begin{lemma}\label{lem:splitting-functions}
Under Assumption~\ref{ass:splitting-function}, \(r\) has a nondecreasing representative on \([0,T)\) satisfying \(r(0+)=1\), \(1\le r<\infty\) a.e., and
\[
    \frac1{r_B}
    \longrightarrow
    \frac1r
    \qquad\text{in }L^1([0,T]).
\]
\end{lemma}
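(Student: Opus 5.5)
Since each \(r_B\) is nondecreasing with \(r_B(0+)=1\), every \(r_B\ge1\) on \((0,T)\). \(L^1\) convergence gives a subsequence \(r_{B_k}\to r\) almost everywhere, so \(r\ge1\) a.e., and \(r\) agrees a.e. with an a.e.-limit of nondecreasing functions. To obtain a nondecreasing representative, I would take the set \(D\) of full measure on which \(r_{B_k}(t)\to r(t)\). For \(s<t\) in \(D\), passing to the limit in \(r_{B_k}(s)\le r_{B_k}(t)\) shows \(r\) is nondecreasing on \(D\). I then define \(\tilde r(t)=\inf\{r(s):s\in D,\ s>t\}\) for \(t\in[0,T)\). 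This is nondecreasing and right-continuous. It equals \(r\) at every continuity point of \(r|_D\), and monotone functions have only countably many discontinuities, so \(\tilde r=r\) a.e. Finiteness a.e. is automatic, since \(r\in L^1\). In fact \(\tilde r(t)<\infty\) for every \(t<T\): if \(\tilde r(t_0)=\infty\) for some \(t_0<T\), monotonicity would force \(\tilde r=\infty\) on \([t_0,T)\), a set of positive measure.

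The condition \(r(0+)=1\) is given in Assumption~\ref{ass:splitting-function}, interpreted for this representative. I would only check that \(\tilde r(0+)\ge1\) follows from \(\tilde r\ge1\), so the assumption pins the limit to exactly \(1\).

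For the convergence of reciprocals, I use the pointwise bound
\[
    \left|\frac1{r_B}-\frac1r\right|=\frac{|r-r_B|}{r_Br}\le |r_B-r|
\]
almost everywhere, which holds because \(r_B\ge1\) and \(r\ge1\). Integrating gives \(\|1/r_B-1/r\|_{L^1}\le\|r_B-r\|_{L^1}\to0\) directly, with no subsequence or dominated convergence argument needed.

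The main obstacle is the care needed in choosing the representative. The \(L^1\) limit is defined only up to null sets, so the claim \(r(0+)=1\) must be read for the right-continuous monotone version constructed above, and the a.e. lower bound \(r\ge1\) must be transferred to that version. Once \(r\ge1\) is in hand, everything else is the elementary inequality above.
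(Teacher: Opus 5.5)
Your proof is correct and follows the paper's route. Both arguments use \(r_B\ge1\), take an a.e.-convergent subsequence to get \(r\ge1\) a.e., use integrability for finiteness, read \(r(0+)=1\) off Assumption~\ref{ass:splitting-function}, and obtain the \(L^1\) convergence directly from the pointwise bound \(|1/r_B-1/r|\le|r_B-r|\). The only difference is that you explicitly construct the nondecreasing right-continuous representative from the a.e.\ limit on \(D\), whereas the paper simply attributes it to the assumption.
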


\begin{proof}
Since each \(r_B\) is nondecreasing and \(r_B(0+)=1\), we have \(r_B\ge1\). The \(L^1([0,T])\)-convergence gives a subsequence converging to \(r\) almost everywhere, so \(r\ge1\) almost everywhere. Moreover, \(r<\infty\) almost everywhere because \(r\in L^1([0,T])\). The nondecreasing representative satisfying \(r(0+)=1\) is specified in Assumption~\ref{ass:splitting-function}.

Finally,
\[
    \left|\frac1{r_B}-\frac1r\right|
    =
    \frac{|r_B-r|}{r_Br}
    \le
    |r_B-r|
    \qquad\text{a.e.},
\]
so the claimed \(L^1\)-convergence follows.
\end{proof}

\subsection{Proof of Lemma~\ref{lem:continuous-step-covariance}}
\label{app:proof-continuous-step-covariance}

\begin{proof}
Since \(u_x(t,X_t)\) and \(u_y(t,X_t)\) are martingales, we write the conditional covariance as
\begin{align*}
    v^{\mathrm{SDE}}_{i,B}(x,y)
    &=
    \E\left[
        \Cov\!\big(
            u_x(t,X_t),
            u_y(t,X_t)
            \mid \sF_s
        \big)
    \right] \\
    &=
    \E\left[
        \E\left[
            \left(u_x(t, X_t) - u_x(s, X_s)\right)\left(u_y(t, X_t) - u_y(s, X_s)\right)
        \mid \sF_s \right]
    \right] \\
    &=
    \E\left[
        \left(u_x(t, X_t) - u_x(s, X_s)\right)\left(u_y(t, X_t) - u_y(s, X_s)\right)
    \right]\\
    &=
    \E\left[u_x(t,X_t)u_y(t,X_t)\right]
    -\E\left[
        u_y(s,X_s)\E\left[u_x(t,X_t)\mid\mathcal F_s\right]
    \right] \\
    &\qquad
    -\E\left[
        u_x(s,X_s)\E\left[u_y(t,X_t)\mid\mathcal F_s\right]
    \right]
    +\E\left[u_x(s,X_s)u_y(s,X_s)\right] \\
    &=\E\left[
        u_x(t,X_t)u_y(t,X_t)
        -
        u_x(s,X_s)u_y(s,X_s)
    \right].
\end{align*}
Using the martingale representations,
\[
    d u_x(t,X_t)=H_x(t)^\top dW_t,
    \qquad
    d u_y(t,X_t)=H_y(t)^\top dW_t,
\]
we get
\[
    d\left\langle u_x(\cdot,X_\cdot),u_y(\cdot,X_\cdot)\right\rangle_t
    =
    H_x(t)^\top H_y(t)\,dt.
\]
Since \(u_x(t,X_t)u_y(t,X_t) - \langle u_x(\cdot,X_\cdot),u_y(\cdot,X_\cdot)\rangle_t\) is a martingale, we have
\[
    v^{\mathrm{SDE}}_{i,B}(x,y)
    =
    \E\left[\int_s^t H_x(\tau)^\top H_y(\tau)\,d\tau\right]
    =
    \int_s^t g_{x,y}(\tau)\,d\tau.
\]
It remains to establish integrability. By It\^o's isometry,
\begin{align*}
    \E\left[\int_0^T \|H_x(\tau)\|_2^2\,d\tau\right]
    &=
    \E\left[\left(\int_0^T H_x(\tau)^\top dW_\tau\right)^2\right] \\
    &=
    \E\left[\left(u_x(T,X_T)-u_x(0,X_0)\right)^2\right] \\
    &=
    \E\left[\Var(a_x(X_T)\mid X_0)\right] \\
    &\le
    \Var(a_x(X_T))
    \le \frac14.
\end{align*}
Therefore, by Cauchy-Schwarz,
\begin{align*}
    \int_0^T |g_{x,y}(\tau)|\,d\tau
    &\le
    \E\left[
        \int_0^T \|H_x(\tau)\|_2\|H_y(\tau)\|_2\,d\tau
    \right] \\
    &\le
    \E\left[\int_0^T \|H_x(\tau)\|_2^2\,d\tau\right]^{1/2}
    \E\left[\int_0^T \|H_y(\tau)\|_2^2\,d\tau\right]^{1/2} \\
    &\le \frac14.
\end{align*}
\end{proof}

\subsection{Proof of Proposition~\ref{prop:covariance-convergence}}
\label{app:proof-covariance-convergence}

\begin{proof}
Since \(X^B_{K_B}\to X_T\) in \(L^2\), we also have \(X^B_{K_B}\to X_T\) in probability. Continuity of \(F\) implies that, for any \(x\in\R^d\),
\[
    \Pbb\left(X_T\in \partial(-\infty,x]\right)=0.
\]
Hence
\[
    a_x(X^B_{K_B})\xrightarrow{\Pbb} a_x(X_T).
\]
Since these random variables are bounded by \(1\), the convergence also holds in \(L^2\). Recall that
\[
    u^B_{i,x}(z)=\E[a_x(X^B_{K_B})\mid X_i^B=z],
    \qquad
    u_x(t,z)=\E[a_x(X_T)\mid X_t=z].
\]
For the remaining part of the proof, we suppress the arguments and write
\[
    u^B_{i,x}:=u^B_{i,x}(X_i^B),
    \qquad
    u_{t,x}:=u_x(t,X_t).
\]
We first prove \(\Gamma_{0,B}(x,y)\to \Gamma_0(x,y)\). For any \(x\in\R^d\),
\[
    u^B_{0,x}-u_{0,x}
    =
    \E\left[
        a_x(X^B_{K_B})-a_x(X_T)
        \mid \sF_0
    \right].
\]
By the \(L^2\)-contraction property of conditional expectation,
\[
    \left\|u^B_{0,x}-u_{0,x}\right\|_{L^2}
    \le
    \left\|a_x(X^B_{K_B})-a_x(X_T)\right\|_{L^2}
    \to0,
    \qquad \forall x\in\R^d.
\]
Thus, for all \(x, y \in \R^d\), \(u^B_{0,x}\to u_{0,x}\) and \(u^B_{0,y}\to u_{0,y}\) in \(L^2\). Then, by triangle inequality and then Cauchy-Schwarz, we get
\begin{align*}
    &\left|
        \Cov(u^B_{0,x},u^B_{0,y})
        -
        \Cov(u_{0,x},u_{0,y})
    \right| \\
    &\quad=
    \left|
        \Cov(u^B_{0,x}-u_{0,x},u^B_{0,y})
        +
        \Cov(u_{0,x},u^B_{0,y}-u_{0,y})
    \right| \\
    &\quad\le
    \left|\Cov(u^B_{0,x}-u_{0,x},u^B_{0,y})\right|
    +
    \left|\Cov(u_{0,x},u^B_{0,y}-u_{0,y})\right| \\
    &\quad\le
    \|u^B_{0,x}-u_{0,x}\|_{L^2}\|u^B_{0,y}\|_{L^2}
    +
    \|u_{0,x}\|_{L^2}\|u^B_{0,y}-u_{0,y}\|_{L^2} \\
    &\quad\le
    \|u^B_{0,x}-u_{0,x}\|_{L^2}
    +
    \|u^B_{0,y}-u_{0,y}\|_{L^2}
    \to0.
\end{align*}
Hence,
\[
    \Gamma_{0,B}(x,y)
    =
    \Cov(u^B_{0,x},u^B_{0,y})
    \to
    \Cov(u_{0,x},u_{0,y})
    =
    \Gamma_0(x,y).
\]
It remains to prove the one-step covariance convergence. Write
\[
    \Delta_i^{B,z}
    :=
    u^B_{i+1,z}-u^B_{i,z},
    \qquad
    \Delta_i^z
    :=
    u_{t_{i+1,B},z}-u_{t_{i,B},z} .
\]
Then
\[
    v_{i,B}(x,y)
    =
    \E\left[\Delta_i^{B,x}\Delta_i^{B,y}\right],
    \qquad
    v^{\mathrm{SDE}}_{i,B}(x,y)
    =
    \E\left[\Delta_i^x\Delta_i^y\right].
\]
For every \(z\in\R^d\), \(u^B_{i,z}\), \(u_{t_{i,B},z}\), and \(u^B_{i,z}-u_{t_{i,B},z}\) are square-integrable martingales. In particular, their increments are centered. Hence, using \(ab-cd=(a-c)b+c(b-d)\), the triangle inequality, and Cauchy-Schwarz,
\begin{align*}
    &\sum_{i=0}^{K_B-1}
    \left|
        v_{i,B}(x,y)-v^{\mathrm{SDE}}_{i,B}(x,y)
    \right| \\
    &\quad= \sum_{i=0}^{K_B-1}\left|
        \E\left[
            \Delta_i^{B,x}\Delta_i^{B,y}-\Delta_i^x\Delta_i^y
        \right]
    \right| \\
    &\quad\le
    \sum_{i=0}^{K_B-1}
    \left|
        \E\left[
            (\Delta_i^{B,x}-\Delta_i^x)\Delta_i^{B,y}
        \right]
    \right|
    +
    \sum_{i=0}^{K_B-1}
    \left|
        \E\left[
            \Delta_i^x(\Delta_i^{B,y}-\Delta_i^y)
        \right]
    \right| \\
    &\quad\le
    \left(
        \sum_{i=0}^{K_B-1}
        \E\left[\left|\Delta_i^{B,x}-\Delta_i^x\right|^2\right]
    \right)^{1/2}
    \left(
        \sum_{i=0}^{K_B-1}
        \E\left[\left|\Delta_i^{B,y}\right|^2\right]
    \right)^{1/2} \\
    &\qquad+
    \left(
        \sum_{i=0}^{K_B-1}
        \E\left[\left|\Delta_i^x\right|^2\right]
    \right)^{1/2}
    \left(
        \sum_{i=0}^{K_B-1}
        \E\left[\left|\Delta_i^{B,y}-\Delta_i^y\right|^2\right]
    \right)^{1/2}.
\end{align*}
Therefore, by orthogonality of martingale increments, for all \(x\in\R^d\),
\[
    \sum_{i=0}^{K_B-1}
    \E\left[\left|\Delta_i^{B,x}-\Delta_i^x\right|^2\right]
    \le
    \E\left[\left|u^B_{K_B,x}-u_{T,x}\right|^2\right]
    =
    \E\left[\left|a_x(X^B_{K_B})-a_x(X_T)\right|^2\right]
    \to0,
\]
and,
\[
    \sum_{i=0}^{K_B-1}
    \E\left[\left|\Delta_i^{B,x}\right|^2\right]
    \le
    \E\left[\left|u^B_{K_B,x}\right|^2\right]
    \le1, \qquad
    \sum_{i=0}^{K_B-1}
    \E\left[\left|\Delta_i^x\right|^2\right]
    \le
    \E\left[\left|u_{T,x}\right|^2\right]
    \le1.
\]
Substituting them into the preceding Cauchy-Schwarz bound gives
\[
    \sum_{i=0}^{K_B-1}
    \left|
        v_{i,B}(x,y)-v^{\mathrm{SDE}}_{i,B}(x,y)
    \right|
    \to0 .
\]
\end{proof}

\subsection{Proof of Proposition~\ref{prop:covlimit}}
\label{app:proof-covlimit}

\begin{proof}
By Lemma \ref{lem:splitting-cov} and the definition of \(\Gamma_B\),
\[
    \Gamma_B(x,y)
    =
    \Gamma_{0,B}(x,y)
    +
    \sum_{i=0}^{K_B-1}\frac{v_{i,B}(x,y)}{R_{i,B}}.
\]
By the second statement of Proposition \ref{prop:covariance-convergence}, we have
\[
    \Gamma_B(x,y)
    =
    \Gamma_{0,B}(x,y)
    +
    \sum_{i=0}^{K_B-1}
    \frac{v^{\mathrm{SDE}}_{i,B}(x,y)}{R_{i,B}}
    +o(1).
\]
By Lemma~\ref{lem:continuous-step-covariance},
\[
    v^{\mathrm{SDE}}_{i,B}(x,y)
    =
    \int_{t_{i,B}}^{t_{i+1,B}}g_{x,y}(t)\,dt,
\]
Therefore,
\[
    \Gamma_B(x,y)
    =
    \Gamma_{0,B}(x,y)
    +
    \int_0^T \frac{g_{x,y}(t)}{r_B(t)}\,dt+o(1).
\]
By Lemma~\ref{lem:splitting-functions}, \(1\le r<\infty\) almost everywhere and \(\|1/r_B-1/r\|_{L^1([0,T])}\to0\). Fix \(M>0\). Since \(|1/r_B-1/r|\le2\),
\[
    \int_0^T \left|\frac1{r_B(t)}-\frac1{r(t)}\right|\,|g_{x,y}(t)|\,dt
    \le
    M\left\|\frac1{r_B}-\frac1r\right\|_{L^1([0,T])}
    +
    2\int_0^T |g_{x,y}(t)|\one(|g_{x,y}(t)|>M)\,dt.
\]
Letting \(B\to\infty\) and then \(M\to\infty\), and using \(g_{x,y}\in L^1([0,T])\), gives
\[
    \int_0^T \frac{g_{x,y}(t)}{r_B(t)}\,dt
    \to
    \int_0^T \frac{g_{x,y}(t)}{r(t)}\,dt.
\]
The first statement of Proposition \ref{prop:covariance-convergence} then proves \(\Gamma_B(x,y)\to\Gamma(x,y)\). Since each \(\Gamma_B\) is a covariance kernel, its pointwise limit \(\Gamma\) is positive semidefinite. It remains to prove the bound on \(d_\Gamma\). By definition,
\begin{align*}
    d_\Gamma(x,y)^2
    &=
    \Gamma(x,x)+\Gamma(y,y)-2\Gamma(x,y) \\
    &=
    \Var\!\left(u_x(0,X_0)-u_y(0,X_0)\right)
    +
    \int_0^T \frac1{r(t)}
    [g_{x,x}(t)+g_{y,y}(t)-2g_{x,y}(t)]\,dt.
\end{align*}
Since
\[
    g_{x,x}(t)+g_{y,y}(t)-2g_{x,y}(t)
    =
    \E\left[\left\|H_x(t)-H_y(t)\right\|_2^2\right],
\]
and \(1/r\le1\), we get
\begin{align*}
    d_\Gamma(x,y)^2
    &\le
    \Var\!\left(u_x(0,X_0)-u_y(0,X_0)\right)
    +
    \E\left[\int_0^T \left\|H_x(t)-H_y(t)\right\|_2^2\,dt\right] \\
    &=
    \Var(a_x(X_T)-a_y(X_T)) \\
    &\le
    \E\left[(a_x(X_T)-a_y(X_T))^2\right]
    =
    \left\|a_x(X_T)-a_y(X_T)\right\|_{L^2}^2.
\end{align*}
\end{proof}

\subsection{Finite Donsker Entropy}
\label{app:finite-donsker-entropy}

\begin{lemma}[Finite Donsker Entropy]\label{lem:vc}
There are constants \(A<\infty\), \(v<\infty\), \(A'<\infty\), and \(v'<\infty\), depending only on \(d\), such that, for \(0<\varepsilon<1\),
\[
    \sup_\mu N(\varepsilon,\cA,L^2(\mu))\le A\varepsilon^{-v},
    \qquad
    \sup_{B,\nu}N(\varepsilon,\cF_B,L^2(\nu))
    \le A\varepsilon^{-v}.
\]
If
\[
    \mathcal H_{B,\delta}
    =
    \{f_{B,x}-f_{B,y}:x,y\in\R^d,\ \rho(a_x,a_y)<\delta\},
\]
then
\[
    \sup_{B,\delta,\nu}N(\varepsilon,\mathcal H_{B,\delta},L^2(\nu))
    \le A'\varepsilon^{-v'}.
\]
Consequently, \((\cA,\rho)\) is totally bounded and the uniform entropy integrals for \(\cA\) and \(\mathcal H_{B,\delta}\) with respect to \(L^2(\mu)\) and \(L^2(\nu)\), respectively,
are finite for all measures \(\mu\) and \(\nu\), i.e.,
\[
    \sup_\mu\int_0^1\sqrt{\log N(\varepsilon,\cA,L^2(\mu))}\,d\varepsilon
    <\infty,
    \qquad
    \sup_{B,\delta,\nu}\int_0^1\sqrt{\log N(\varepsilon,\mathcal H_{B,\delta},L^2(\nu))}\,d\varepsilon
    <\infty.
\]
\end{lemma}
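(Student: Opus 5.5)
The plan is to reduce all three entropy bounds to VC theory for lower orthants, and then handle the averaged classes \(\cF_B\) and \(\mathcal H_{B,\delta}\) by a transfer to mixture measures on \(\R^d\).

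\textbf{The class \(\cA\).} The lower orthants \(\{(-\infty,x]:x\in\R^d\}\) form a VC class of index at most \(d+1\). By the standard uniform entropy bound for VC classes of indicators (van der Vaart and Wellner, Theorem~2.6.4), \(\sup_\mu N(\varepsilon,\cA,L^2(\mu))\le A\varepsilon^{-v}\) with \(A,v\) depending only on \(d\).

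\textbf{The averaged class \(\cF_B\).} Fix \(B\) and a probability measure \(\nu\) on \(E_B=(\R^d)^{m_B}\). Let \(\nu_\ell\) be the \(\ell\)-th marginal and set \(\bar\nu=m_B^{-1}\sum_\ell\nu_\ell\), a probability measure on \(\R^d\). By Jensen (convexity of the square),
\[
    \|f_{B,x}-f_{B,y}\|_{L^2(\nu)}^2
    \le
    \frac1{m_B}\sum_{\ell=1}^{m_B}\int (a_x-a_y)^2\,d\nu_\ell
    =
    \|a_x-a_y\|_{L^2(\bar\nu)}^2 .
\]
So the map \(a_x\mapsto f_{B,x}\) is \(1\)-Lipschitz from \(L^2(\bar\nu)\) to \(L^2(\nu)\), and any \(\varepsilon\)-net of \(\cA\) in \(L^2(\bar\nu)\) maps to an \(\varepsilon\)-net of \(\cF_B\). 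This yields the same bound \(A\varepsilon^{-v}\), uniformly in \(B\) and \(\nu\).

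\textbf{The difference class \(\mathcal H_{B,\delta}\).} Since \(\mathcal H_{B,\delta}\subset\cF_B-\cF_B\), take \(\varepsilon/2\)-nets \(\{f_j\}\) of \(\cF_B\) and form the differences \(f_j-f_k\). This gives an \(\varepsilon\)-net of size \((A(\varepsilon/2)^{-v})^2\), so one may take \(A'=A^2 4^{v}\) and \(v'=2v\). The only subtlety is that covering numbers are not monotone under subsets when centers must lie in the set. I would use external centers, which suffices for entropy integrals, or absorb the standard factor \(2\) (internal covering at radius \(2\varepsilon\)) into the constants.

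\textbf{Consequences.} Total boundedness of \((\cA,\rho)\) follows by taking \(\mu\) to be the law of \(X_T\), since \(\rho\) is exactly the \(L^2(\mu)\) distance. Finiteness of the entropy integrals follows from
\[
    \int_0^1\sqrt{\log A+v\log(1/\varepsilon)}\,d\varepsilon<\infty .
\]

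\textbf{Main obstacle.} The only nontrivial step is the uniformity over \(B\) for the averaged classes, where \(m_B\) grows. The Jensen transfer to the averaged marginal \(\bar\nu\) resolves it, because the resulting bound does not depend on \(m_B\). The rest is bookkeeping with the constants.
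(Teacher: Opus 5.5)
Your proposal is correct and matches the paper's proof step for step. The paper likewise uses the VC bound (Theorem~2.6.4) for \(\cA\), the Jensen transfer to the averaged marginal \(\overline\nu\) to obtain the same bound for \(\cF_B\) uniformly in \(B\), and pairwise differences of \(\varepsilon/2\)-nets of \(\cF_B\) for \(\mathcal H_{B,\delta}\), with the same constants \(A'=A^2 2^{2v}\) and \(v'=2v\). Your caveat about internal centers is harmless but unnecessary: the van der Vaart--Wellner covering numbers already allow centers outside the class, which is exactly how the paper uses its cover \(\{g_x-g_y\}\).
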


\begin{proof}
The class \(\cA\) has VC dimension \(d\), so by \citet[Theorem 2.6.4]{WeakConvergence2023} there are constants \(A<\infty\) and \(v<\infty\), depending only on \(d\), such that
\[
    N(\varepsilon,\cA,L^2(\mu))\le A\varepsilon^{-v}
\]
for every probability measure \(\mu\) on \(\R^d\). To transfer the same bound to \(\cF_B\), let \(\nu\) be any probability measure on \((\R^d)^{m_B}\), and let \(\nu_1,\ldots,\nu_{m_B}\) be its coordinate marginals. Set
\[
    \overline\nu=\frac1{m_B}\sum_{\ell=1}^{m_B}\nu_\ell.
\]
Let \(\mathcal C\) be an \(\varepsilon\)-cover of \(\cA\) in \(L^2(\overline\nu)\).  For \(g\in\mathcal C\), define
\[
    \widetilde g(z)=\frac1{m_B}\sum_{\ell=1}^{m_B}g(z^{(\ell)}).
\]
If \(g\) is within \(\varepsilon\) of \(a_x\) in \(L^2(\overline\nu)\), then Jensen's inequality gives
\begin{align*}
    \|f_{B,x}-\widetilde g\|_{L^2(\nu)}^2
    &=
    \int\left[
        \frac1{m_B}\sum_{\ell=1}^{m_B}
        \{a_x(z^{(\ell)})-g(z^{(\ell)})\}
    \right]^2d\nu(z)  \\
    &\le
    \frac1{m_B}\sum_{\ell=1}^{m_B}
    \int \{a_x(z^{(\ell)})-g(z^{(\ell)})\}^2d\nu(z) \\
    &=
    \|a_x-g\|_{L^2(\overline\nu)}^2
    \le \varepsilon^2.
\end{align*}
Thus the same covering number bound transfers from \(\cA\) to \(\cF_B\). Now fix \(\delta>0\).  Let \(\mathcal C_B\) be an \(\varepsilon/2\)-cover of \(\cF_B\) in \(L^2(\nu)\).  For any \(f_{B,x}-f_{B,y}\in\mathcal H_{B,\delta}\), choose \(g_x,g_y\in\mathcal C_B\) such that
\[
    \|f_{B,x}-g_x\|_{L^2(\nu)}\le\varepsilon/2,
    \qquad
    \|f_{B,y}-g_y\|_{L^2(\nu)}\le\varepsilon/2.
\]
Then the triangle inequality gives
\[
    \|(f_{B,x}-f_{B,y})-(g_x-g_y)\|_{L^2(\nu)}
    \le \varepsilon.
\]
Hence
\[
    N(\varepsilon,\mathcal H_{B,\delta},L^2(\nu))
    \le
    N(\varepsilon/2,\cF_B,L^2(\nu))^2
    \le
    A^2 2^{2v}\varepsilon^{-2v}.
\]
Taking \(A'=A^2 2^{2v}\) and \(v'=2v\) proves the claim. To get the finite entropy bound, use the uniform bound on the covering numbers and observe that the integral is finite.
\end{proof}

\subsection{Proof of Theorem~\ref{thm:splitting-clt}}
\label{app:proof-splitting-clt}

\begin{proof}
Let \(P_B=\mathcal L(Z_B)\) and \(n_B=N_{0,B}\). For each \(B\), \(Z_{B,1},\ldots,Z_{B,n_B}\) are i.i.d. with law \(P_B\), while \(E_B\), \(P_B\), and \(\cF_B\) may depend on \(B\). We regard the row-wise empirical process as indexed by \(\cA\), with the coordinate labeled by \(a_x\) given by
\[
    \G_B(a_x)
    =
    \frac1{\sqrt{n_B}}\sum_{j=1}^{n_B}
    \left(f_{B,x}(Z_{B,j})-\E\left[f_{B,x}(Z_B)\right]\right).
\]
Indeed, \(n_B^{-1}\sum_{j=1}^{n_B}f_{B,x}(Z_{B,j})=\widehat F_B(x)\) and \(\E[f_{B,x}(Z_B)]=F_B(x)\).
To show weak convergence in \(\ell^\infty(\cA)\), it suffices to prove finite-dimensional convergence and asymptotic \(\rho\)-equicontinuity on \(\cA\), with a tight Gaussian limit. We first show that all finite-dimensional distributions converge. Fix \(a_{x_1},\ldots,a_{x_J}\in\cA\). The summands are centered, i.i.d. within each row, and uniformly bounded. Their covariance matrix is \((\Gamma_B(x_\ell,x_m))_{\ell,m}\), which converges to \((\Gamma(x_\ell,x_m))_{\ell,m}\) by Proposition~\ref{prop:covlimit}. By the multivariate Lindeberg-Feller theorem, the finite-dimensional distributions converge. 

We now construct the limiting object as a tight Borel element of \(\ell^\infty(\cA)\). Proposition \ref{prop:covlimit} shows that \(\Gamma\) is positive semidefinite, so Kolmogorov's extension theorem gives a centered Gaussian process \(\overline{\G}\) with covariance \(\Gamma\). Define the canonical semimetric
\[
    d_\Gamma(a_x,a_y)^2
    =
    \Var(\overline{\G}(a_x)-\overline{\G}(a_y))
    =
    \Gamma(x,x)+\Gamma(y,y)-2\Gamma(x,y).
\]
Proposition \ref{prop:covlimit} gives \(d_\Gamma\le\rho\). By Lemma \ref{lem:vc}, \((\cA,\rho)\) is totally bounded and has finite entropy integral, and therefore
\[
\int_0^\infty \sqrt{\log N(\varepsilon,\cA,d_\Gamma)}\,d\varepsilon <\infty.
\]
By Dudley's entropy theorem \citep[Corollary 2.2.9]{WeakConvergence2023}, there is a separable version \(\G\) of this Gaussian process with bounded and uniformly \(d_\Gamma\)-continuous paths. Since \(d_\Gamma\le\rho\), these paths are also uniformly \(\rho\)-continuous. Hence, this version is a tight Borel element of \(\ell^\infty(\cA)\). It remains to show asymptotic equicontinuity of \(\G_B\), that is, for every \(\eta>0\),
\[
    \lim_{\delta\downarrow0}\limsup_{B\to\infty}
    \Pbb^*
    \left(
        \sup_{\rho(a_x,a_y)<\delta}
        |\G_B(a_x)-\G_B(a_y)|>\eta
    \right)
    =0.
\]

By Jensen's inequality and the fact that every terminal descendant has marginal law \(X_{K_B}^B\),
\[
    \E[(f_{B,x}(Z_B)-f_{B,y}(Z_B))^2]
    \le
    \E[(a_x(X_{K_B}^B)-a_y(X_{K_B}^B))^2].
\]
Then, observe that
\begin{align*}
    \E[(a_x(X_{K_B}^B)-a_y(X_{K_B}^B))^2]
    &=
    F_B(x)+F_B(y)-2F_B(x\wedge y), \\
    \rho(a_x,a_y)^2&=F(x)+F(y)-2F(x\wedge y).
\end{align*}
Therefore,
\[
    \E[(f_{B,x}(Z_B)-f_{B,y}(Z_B))^2]
    \le
    \rho(a_x,a_y)^2+4\|F_B-F\|_\infty.
\]
For \(\delta>0\), let \(\mathcal H_{B,\delta}\) be the class defined in Lemma~\ref{lem:vc}.
Put
\[
    \alpha_{B,\delta}^2
    =
    \sup_{h\in\mathcal H_{B,\delta}}\E[h(Z_B)^2]
    =
    \sup_{\rho(a_x,a_y)<\delta}
    \E[(f_{B,x}(Z_B)-f_{B,y}(Z_B))^2].
\]
By Assumptions~\ref{ass:terminal-strong-consistency} and~\ref{ass:terminal-weak-conv}, \(F_B(x)\to F(x)\) for every \(x\in\R^d\). The multivariate P\'olya theorem \citep[Lemma 2.11]{AsymptoticStatistics1998} therefore gives \(\|F_B-F\|_\infty\to0\). Hence
\[
    \lim_{\delta\downarrow0}\limsup_{B\to\infty}\alpha_{B,\delta}=0.
\]

Fix \(t\in(0,1)\). Choose \(\delta_t>0\) such that
\[
    \limsup_{B\to\infty}\alpha_{B,\delta_t}\le t/2.
\]
Then, for every \(\delta\le\delta_t\), we have \(\alpha_{B,\delta}<t\) for all sufficiently large \(B\). Since \(|h|\le1\) for \(h\in\mathcal H_{B,\delta}\), the class \(\mathcal H_{B,\delta}\) has envelope \(H\equiv1\), and \citet[Theorem 2.14.2]{WeakConvergence2023} applied to the \(B\)-th row at variance level \(t\) gives a constant \(C\), independent of \(B\) and \(\delta\), with
\[
    \limsup_{B\to\infty}
    \E^*\left[
    \sup_{\rho(a_x,a_y)<\delta}
    |\G_B(a_x)-\G_B(a_y)|
    \right]
    \le
    \limsup_{B\to\infty}
    C I(t)\left(1+\frac{I(t)}{t^2\sqrt{N_{0,B}}}\right)
    =
    C I(t),
\]
where
\[
    I(t)
    =
    \sup_{B,\delta,\nu}
    \int_0^t
    \sqrt{1+\log N(\varepsilon,\mathcal H_{B,\delta},L^2(\nu))}\,d\varepsilon,
\]
the last equality holding because \(I(t)<\infty\) and \(N_{0,B}\to\infty\). By Lemma \ref{lem:vc}, \(I(t)\to0\) as \(t\downarrow0\). Therefore
\[
    \lim_{\delta\downarrow0}\limsup_{B\to\infty}
    \E^*\left[
    \sup_{\rho(a_x,a_y)<\delta}
    |\G_B(a_x)-\G_B(a_y)|
    \right]
    =0.
\]
Then, by Markov's inequality, \(\G_B\) is asymptotically \(\rho\)-equicontinuous on \(\cA\). Hence,
\[
    \G_B\Rightarrow \G
    \qquad\text{in }\ell^\infty(\cA).
\]
\end{proof}

\subsection{Proof of Corollary~\ref{cor:dg-limit}}
\label{app:proof-dg-limit}

\begin{proof}
Since \(C_B=B/N_{0,B}\),
\[
    B^{p/(1+2p)}N_{0,B}^{-1/2}
    =
    \left(\frac{h_BC_B}{\lambda_B}\right)^{1/2}
    \longrightarrow
    \left(\frac{\gamma}{\lambda}\right)^{1/2}.
\]
Theorem~\ref{thm:splitting-clt} and Slutsky's theorem therefore give
\[
    B^{p/(1+2p)}(\widehat F_B-F_B)
    \Rightarrow
    \left(\frac{\gamma}{\lambda}\right)^{1/2}\G
    \qquad\text{in }\ell^\infty(\cA).
\]
Moreover, Assumption~\ref{ass:bias} gives
\begin{align*}
    B^{p/(1+2p)}\|F_B-F\|_\infty
    &=
    O\!\left(B^{p/(1+2p)}h_B^p\right) \\
    &=O(\lambda_B^p)
    =O(1).
\end{align*}
Weak convergence in \(\ell^\infty(\cA)\) implies that
\[
    B^{p/(1+2p)}
    \|\widehat F_B-F_B\|_\infty
\]
is bounded in probability. The preceding bias bound and the triangle inequality applied to
\[
    \widehat F_B-F
    =
    (\widehat F_B-F_B)+(F_B-F)
\]
complete the proof.
\end{proof}

\subsection{Proof of Proposition~\ref{prop:ks-bounds}}
\label{app:proof-ks-bounds}

\begin{proof}
It suffices to bound
\[
    \E\left[
        \sup_{x\in\R^d}|\G(a_x)|
    \right],
\]
since \((\gamma/\lambda)^{1/2}\) is deterministic.
By Proposition \ref{prop:covlimit}, \(0\le \Gamma^*\le1/2\) and \(d_\Gamma\le\rho\). It follows that the \(d_\Gamma\)-diameter of \(\R^d\) is at most \(2\Gamma^*\). By \citet[Corollary 2.2.9]{WeakConvergence2023}, there is a universal constant \(C_0<\infty\) such that
\[
    \E\left[\sup_{y\in\R^d}|\G(a_y)|\right]
    \le
    \E[|\G(a_x)|]+
    C_0
    \int_0^{2\Gamma^*}
    \sqrt{\log N(\varepsilon,\R^d,d_\Gamma)}\,d\varepsilon, \quad \forall x\in\R^d.
\]
The first term in the upper bound can be taken to zero by taking \(x\) to infinity in all coordinates. For the second term, the covering number is bounded from Lemma \ref{lem:vc} involving a constant that only depends on \(d\). Therefore, for constants \(C,C'<\infty\) depending only on \(d\),
\[
    \E\left[\sup_{x\in\R^d}|\G(a_x)|\right]
    \le
    C
    \int_0^{2\Gamma^*}
    \sqrt{\log A+v\log(1/\varepsilon)}\,d\varepsilon
    \le
    C'\Gamma^*
    \sqrt{1-\log(\Gamma^*)}.
\]
The last inequality follows by the change of variables \(\varepsilon=e^{-u}\), with constants absorbed into \(C'\). For the lower bound,
\[
    \E\left[
        \sup_{x\in\R^d}|\G(a_x)|
    \right]
    \ge
    \sup_{x\in\R^d}
    \E[|\G(a_x)|]
    =
    \sqrt{\frac{2}{\pi}}\Gamma^*.
\]
Restoring the factor \((\gamma/\lambda)^{1/2}\) yields the stated inequalities.
\end{proof}

\subsection{Continuity in the Threshold}
\label{app:threshold-continuity}

\begin{lemma}[Continuity in the threshold]
\label{lem:threshold-continuity}
Let \(\mathcal X=[-\infty,\infty]^d\) have the product topology.
For \(x\in\mathcal X\), define
\[
    a_x(z)=\one(z\le x),
    \qquad
    u_x(t,X_t)=\E[a_x(X_T)\mid\sF_t].
\]
using the extended coordinate order. Write
\[
    u_x(t,X_t)
    =
    u_x(0,X_0)+\int_0^tH_x(s)^\top dW_s.
\]
and set
\[
    \Gamma_0(x,x)=\Var(u_x(0,X_0)),
    \qquad
    g_{x,x}(t)=\E[\|H_x(t)\|_2^2].
\]
Under Assumption~\ref{ass:terminal-weak-conv},
\[
    x\longmapsto\Gamma_0(x,x)
\]
is continuous on \(\mathcal X\), and
\[
    x\longmapsto g_{x,x}
\]
is continuous from \(\mathcal X\) into \(L^1([0,T])\).
\end{lemma}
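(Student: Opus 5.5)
Our approach is to show that the map \(x\mapsto a_x(X_T)\) is continuous from \(\mathcal X\) into \(L^2(\Pbb)\). Both \(\Gamma_0(x,x)\) and \(g_{x,x}\) are then images of this map under operations that are Lipschitz in \(L^2\), namely conditional expectation and the martingale representation via It\^o's isometry. The first step uses Assumption~\ref{ass:terminal-weak-conv}. Let \(x_n\to x\) in \(\mathcal X\). Then \(a_{x_n}(X_T)\to a_x(X_T)\) almost surely on the event that no coordinate of \(X_T\) equals a finite coordinate of \(x\). For a coordinate with \(x^{(k)}=\pm\infty\), convergence holds pointwise for every finite value of \(X_T^{(k)}\). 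The exceptional event is \(\{X_T^{(k)}=x^{(k)}\ \text{for some finite coordinate}\}\). We expect that continuity of \(F\) forces this event to have probability zero, as in the proof of Proposition~\ref{prop:covariance-convergence}, where \(\Pbb(X_T\in\partial(-\infty,x])=0\) was used. With extended thresholds, the boundary of \((-\infty,x]\) consists of the faces where a finite coordinate is tight. Its probability is a limit of differences of values of \(F\) at finite points, which vanish by continuity. Dominated convergence (the indicators are bounded by \(1\)) then gives \(\|a_{x_n}(X_T)-a_x(X_T)\|_{L^2}\to0\).

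Continuity of \(\Gamma_0\) follows next. By the \(L^2\)-contraction of conditional expectation, \(\|u_{x_n}(0,X_0)-u_x(0,X_0)\|_{L^2}\le\|a_{x_n}(X_T)-a_x(X_T)\|_{L^2}\). The same triangle-inequality and Cauchy-Schwarz argument used for \(\Gamma_{0,B}\to\Gamma_0\) in Proposition~\ref{prop:covariance-convergence} then gives \(\Gamma_0(x_n,x_n)\to\Gamma_0(x,x)\).

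For \(g_{x,x}\), martingale representation is linear in the terminal variable, so \(H_{x_n}-H_x\) represents \(u_{x_n}-u_x\). Exactly as in Lemma~\ref{lem:continuous-step-covariance}, It\^o's isometry gives
\[
\E\!\int_0^T\|H_{x_n}(t)-H_x(t)\|_2^2\,dt=\E\big[\Var(a_{x_n}(X_T)-a_x(X_T)\mid X_0)\big]\le\|a_{x_n}(X_T)-a_x(X_T)\|_{L^2}^2.
\]
We then bound \(|\,\|H_{x_n}\|^2-\|H_x\|^2|\le\|H_{x_n}-H_x\|\,(\|H_{x_n}\|+\|H_x\|)\). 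Taking expectations, integrating in \(t\), and applying Cauchy-Schwarz on \(\Omega\times[0,T]\) yields
\[
\|g_{x_n,x_n}-g_{x,x}\|_{L^1}\le \Big(\E\!\int_0^T\|H_{x_n}-H_x\|^2\Big)^{1/2}\cdot 2\cdot\tfrac12\to0,
\]
where the factor \(\tfrac12\) comes from the bound \(\E\int_0^T\|H_x\|^2\le1/4\) in Lemma~\ref{lem:continuous-step-covariance}. Since \(\mathcal X\) is metrizable, sequential continuity suffices.

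The main obstacle is the first step. One must check that continuity of \(F\) on \(\R^d\) controls the boundary mass for extended thresholds, in particular when some coordinates are infinite and others finite. For this we plan to express the face probabilities through monotone limits of \(F\), which is legitimate because continuity of a multivariate CDF implies that each marginal-type limit is itself continuous. The remaining steps are routine \(L^2\) estimates.
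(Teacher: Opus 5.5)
Your proposal is correct and follows essentially the same route as the paper's proof: almost-sure (hence $L^2$) convergence of $a_{x_m}(X_T)$, then the $L^2$-contraction of conditional expectation for $\Gamma_0$, then It\^o's isometry and Cauchy--Schwarz to get $\|g_{x_m,x_m}-g_{x,x}\|_{L^1}\le\|a_{x_m}(X_T)-a_x(X_T)\|_{L^2}$. Your plan to control the boundary faces through limits of differences of $F$ at finite points spells out the paper's one-line claim that continuity of $F$ makes every marginal of $X_T$ atomless; for instance, bound $\Pbb(X_T^{(k)}=y,\ X_T^{(j)}\le M \text{ for } j\neq k)$ by such differences and then let $M\to\infty$.
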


\begin{proof}
Continuity of \(F\) implies that every marginal of \(X_T\) is
atomless. Let \(x_m\to x\) in \(\mathcal X\). The atomless-marginal property gives
\[
    a_{x_m}(X_T)\longrightarrow a_x(X_T)
    \qquad\text{almost surely}.
\]
This also covers coordinates equal to \(+\infty\) or \(-\infty\).
Since the indicators are bounded,
\[
    \delta_m
    :=
    \|a_{x_m}(X_T)-a_x(X_T)\|_{L^2}
    \longrightarrow0.
\]

Conditional expectation is an \(L^2\)-contraction. Hence
\[
    \|u_{x_m}(0,X_0)-u_x(0,X_0)\|_{L^2}
    \le\delta_m.
\]
It follows that
\[
    \Gamma_0(x_m,x_m)
    \longrightarrow
    \Gamma_0(x,x).
\]

Subtracting the martingale representations and applying It\^o's isometry gives
\[
    \E\int_0^T
    \|H_{x_m}(t)-H_x(t)\|_2^2\,dt
    =
    \E\left[
        \Var\left(
            a_{x_m}(X_T)-a_x(X_T)
            \mid X_0
        \right)
    \right]
    \le\delta_m^2.
\]
The same isometry gives, for every \(z\in\mathcal X\),
\[
    \left(
        \E\int_0^T\|H_z(t)\|_2^2\,dt
    \right)^{1/2}
    \le\frac12.
\]
Therefore, by Cauchy-Schwarz,
\begin{align*}
    \|g_{x_m,x_m}-g_{x,x}\|_{L^1}
    &\le
    \left(
        \E\int_0^T
        \|H_{x_m}(t)-H_x(t)\|_2^2\,dt
    \right)^{1/2} \\
    &\quad\times
    \left[
        \left(
            \E\int_0^T\|H_{x_m}(t)\|_2^2\,dt
        \right)^{1/2}
        +
        \left(
            \E\int_0^T\|H_x(t)\|_2^2\,dt
        \right)^{1/2}
    \right] \\
    &\le\delta_m
    \longrightarrow0.
\end{align*}
\end{proof}

\subsection{Proof of Theorem~\ref{thm:functional-minimax-opt}}
\label{app:proof-functional-minimax-opt}

\begin{proof}
Let
\[
    \mathcal N_\infty
    =
    \left\{
        n:[0,T)\to(0,\infty):
        \begin{array}{l}
            n\text{ is nondecreasing and right-continuous},\\
            \displaystyle\int_0^Tn(t)\,dt=1,\quad n(0+)>0
        \end{array}
    \right\}.
\]
For \(r\in\mathcal R\), set \(n=r/\gamma(r)\). Conversely, \(r=n/n(0+)\) and \(\gamma(r)=1/n(0+)\). For \(n\in\mathcal N_\infty\) and \(\pi\in\mathcal P(\mathcal X)\), define
\[
    L(n,\pi)
    =
    \frac{\Gamma_0^\pi}{n(0+)}
    +
    \int_0^T\frac{g^\pi(t)}{n(t)}\,dt.
\]
For fixed \(n\in\mathcal N_\infty\), let
\[
    \ell_n(x)
    =
    \frac{\Gamma_0(x,x)}{n(0+)}
    +
    \int_0^T\frac{g_{x,x}(t)}{n(t)}\,dt.
\]
Since \(n(t)\ge n(0+)>0\), Lemma~\ref{lem:threshold-continuity} shows that \(\ell_n\) is continuous on \(\mathcal X\). Moreover,
\[
    L(n,\pi)
    =
    \int_{\mathcal X}\ell_n(x)\,\pi(dx).
\]
It follows that
\[
    \sup_{\pi\in\mathcal P(\mathcal X)}L(n,\pi)
    =
    \max_{x\in\mathcal X}\ell_n(x)
    =
    \sup_{x\in\R^d}\ell_n(x).
\]
The criterion therefore becomes
\[
    \inf_{n\in\mathcal N_\infty}
    \sup_{\pi\in\mathcal P(\mathcal X)}L(n,\pi).
\]

Fix \(\pi\), write \(h=h_\pi\), and let
\[
    H(\pi)
    =
    \left(
        \int_0^T\sqrt{h(t)}\,dt
    \right)^2.
\]
Let \(q=1/n\), let \(C=C[\Gamma_0^\pi,g^\pi]\), and set
\[
    \Delta(t)
    =
    \Gamma_0^\pi
    +
    \int_0^t g^\pi(s)\,ds
    -
    C(t).
\]
The greatest convex minorant meets its upper bound at \(T\). Since \(\Delta\ge0\) and \(q\) is nonincreasing, integration by parts and Cauchy-Schwarz give
\begin{align*}
    L(n,\pi)
    &=
    \int_0^T\frac{h(t)}{n(t)}\,dt
    -
    \int_{(0,T)}\Delta(t)\,dq(t) \\
    &\ge
    \int_0^T\frac{h(t)}{n(t)}\,dt \\
    &\ge
    H(\pi).
\end{align*}

For \(\varepsilon>0\), define
\[
    n_\varepsilon(t)
    =
    \frac{\sqrt{h(t)+\varepsilon}}
         {\displaystyle\int_0^T\sqrt{h(s)+\varepsilon}\,ds},
    \qquad
    q_\varepsilon=\frac1{n_\varepsilon}.
\]
Since \(h\) is nondecreasing, \(n_\varepsilon\in\mathcal N_\infty\). The function \(C\) is affine on every interval where \(\Delta>0\). Thus \(h\) and \(q_\varepsilon\) are constant on those intervals. Elsewhere \(\Delta=0\). Hence
\[
    \int_{(0,T)}\Delta(t)\,dq_\varepsilon(t)=0.
\]
It follows that
\[
    L(n_\varepsilon,\pi)
    =
    \left(\int_0^T\sqrt{h(t)+\varepsilon}\,dt\right)
    \int_0^T
    \frac{h(t)}{\sqrt{h(t)+\varepsilon}}\,dt
    \longrightarrow
    H(\pi).
\]
Therefore
\[
    \inf_{n\in\mathcal N_\infty}L(n,\pi)=H(\pi).
\]
When \(H(\pi)>0\), equality in Cauchy-Schwarz forces \(n\) to be proportional to \(\sqrt h\). For this choice, \(q\) is constant wherever \(\Delta>0\), so the first inequality is also an equality. It belongs to \(\mathcal N_\infty\) exactly when \(h(0+)>0\). In that case, the unique minimizer for this mixture is
\[
    n_\pi(t)
    =
    \frac{\sqrt{h_\pi(t)}}
         {\displaystyle\int_0^T\sqrt{h_\pi(s)}\,ds}.
\]
Give \(\mathcal P(\mathcal X)\) the weak topology. Regard \(\mathcal N_\infty\) as a convex subset of \(\R\times L^1([0,T])\) through the map
\[
    n\longmapsto(n(0+),n).
\]
For fixed \(n\), the representation above shows that \(L(n,\cdot)\) is continuous and affine. We next check continuity in \(n\). Suppose
\[
    (n_k(0+),n_k)
    \longrightarrow
    (n(0+),n)
    \quad\text{in }\R\times L^1([0,T]).
\]
Choose \(c>0\) so that, for all large \(k\),
\[
    n_k(t)\ge c,
    \qquad
    n(t)\ge c
\]
for almost every \(t\). Then
\[
    \left\|\frac1{n_k}-\frac1n\right\|_{L^1}
    \le
    \frac1{c^2}\|n_k-n\|_{L^1}
    \longrightarrow0.
\]
Since \(g^\pi\ge0\), for \(A>0\),
\begin{align*}
    \int_0^T
    g^\pi(t)
    \left|\frac1{n_k(t)}-\frac1{n(t)}\right|
    dt
    &\le
    A\left\|\frac1{n_k}-\frac1n\right\|_{L^1} \\
    &\quad+
    \frac2c
    \int_0^T
    g^\pi(t)\one(g^\pi(t)>A)\,dt.
\end{align*}
Letting \(k\to\infty\) and then \(A\to\infty\) proves convergence of the integral term in \(L(n_k,\pi)\). The first term also converges because \(n_k(0+)\to n(0+)>0\). Thus
\[
    L(n_k,\pi)\longrightarrow L(n,\pi).
\]
Thus \(L(\cdot,\pi)\) is continuous. It is convex because \(\Gamma_0^\pi\ge0\), \(g^\pi\ge0\), and \(z\mapsto1/z\) is convex. The set \(\mathcal P(\mathcal X)\) is compact and convex. The set \(\mathcal N_\infty\) is convex. Sion's theorem therefore gives
\[
    \inf_{n\in\mathcal N_\infty}
    \sup_{\pi\in\mathcal P(\mathcal X)}L(n,\pi)
    =
    \sup_{\pi\in\mathcal P(\mathcal X)}
    \inf_{n\in\mathcal N_\infty}L(n,\pi)
    =
    \sup_{\pi\in\mathcal P(\mathcal X)}H(\pi).
\]
Because \(H(\pi)=\inf_nL(n,\pi)\), it is upper semicontinuous. Compactness of \(\mathcal P(\mathcal X)\) therefore gives a maximizer. Denote the maximum by \(V\). Choose \(x\) with \(0<F(x)<1\). The greatest convex minorant meets the cumulative variance at \(T\), so the covariance decomposition gives
\[
    \int_0^T h_{\delta_x}(t)\,dt
    =
    \Gamma_0(x,x)
    +
    \int_0^T g_{x,x}(t)\,dt
    =
    F(x)(1-F(x))
    >
    0.
\]
Hence \(V\ge H(\delta_x)>0\). Fix a maximizer \(\pi^*\) and write \(h^*=h_{\pi^*}\). Suppose first that \(h^*(0+)>0\). The fixed-mixture result gives
\[
    n^*(t)=\frac{\sqrt{h^*(t)}}{\sqrt V}.
\]
Choose \((n_k)\subset\mathcal N_\infty\) such that
\[
    \sup_\pi L(n_k,\pi)\longrightarrow V.
\]
Since \(H(\pi^*)=V\),
\[
    V
    \le
    \int_0^T\frac{h^*(t)}{n_k(t)}\,dt
    \le
    L(n_k,\pi^*)
    \le
    \sup_\pi L(n_k,\pi)
    \longrightarrow V.
\]
Since both \(n_k\) and \(n^*\) integrate to one, Cauchy-Schwarz gives
\[
    \|n_k-n^*\|_1^2
    \le
    \int_0^T
    \frac{(n_k(t)-n^*(t))^2}{n_k(t)}\,dt
    =
    \frac1V
    \int_0^T\frac{h^*(t)}{n_k(t)}\,dt
    -1
    \longrightarrow0.
\]
Thus \(n_k\to n^*\) in \(L^1([0,T])\). After passing to an almost-everywhere convergent subsequence, monotonicity and right-continuity give
\[
    \limsup_k n_k(0+)\le n^*(0+).
\]
Fatou's lemma now gives, for every \(\pi\),
\[
    L(n^*,\pi)
    \le
    \liminf_k L(n_k,\pi)
    \le V.
\]
Since \(L(n^*,\pi^*)=V\), we have
\[
    \sup_\pi L(n^*,\pi)=V.
\]
Thus \(n^*\) is optimal. Returning to \(r\) gives the formula in the theorem. If instead \(h^*(0+)=0\), a minimizer \(\overline n\) would satisfy
\[
    V
    =
    \inf_n L(n,\pi^*)
    \le
    L(\overline n,\pi^*)
    \le
    \sup_\pi L(\overline n,\pi)
    =
    V.
\]
It would therefore attain the fixed-mixture infimum, which is impossible when \(h^*(0+)=0\). Hence the infimum is not attained.
\end{proof}

\subsection{Proof of Corollary~\ref{cor:minimax-var-opt}}
\label{app:proof-minimax-var-opt}

\begin{proof}
Set \(L(n,a)=\sum_i a_i/n_i\). The minimax identity follows from the Sion argument in the proof of Theorem~\ref{thm:functional-minimax-opt} and the fixed-profile calculation in the proof of Lemma~\ref{lem:var-opt}, with sums in place of integrals. For zero coordinates, apply the calculation to \(a+\varepsilon\mathbf 1\) and let \(\varepsilon\downarrow0\). Compactness of \(\mathcal V\) and continuity of PAVA give the maximum.

Let \(c=\min_i\widetilde v_i(x_0)>0\). Since \(\sup_xL(n,\widetilde V(x))\ge c\sum_i n_i^{-1}\), every minimizing sequence stays away from zero coordinates. Compactness of \(\overline{\mathcal N}\) then gives the minimum. For any maximizer \(a^*\) and minimizer \(n^*\), the minimax identity gives
\[
    \inf_n L(n,a^*)
    = L(n^*,a^*)
    = \max_a L(n^*,a).
\]
This value is positive by the preceding bound. Thus \(n^*\) attains the fixed-profile infimum. The equality case in the proof of Lemma~\ref{lem:var-opt} forces every coordinate of \(\PAVA(a^*)\) to be positive and gives the stated formula.
\end{proof}

\subsection{Proof of Proposition~\ref{prop:dyadic-mixture-approximation}}
\label{app:proof-dyadic-mixture-approximation}

\begin{proof}
Let
\[
    \mathcal R_m
    =
    \left\{
        r\in\R_{>0}^m:
        1=r_0\le\cdots\le r_{m-1}
    \right\}.
\]
Every \(r\in\mathcal R_m\) corresponds to the feasible allocation \(n_i=Br_i/\sum_jr_j\). Corollary~\ref{cor:minimax-var-opt} and the definition of \(r^*\) therefore give
\[
    V^*
    =
    \inf_{r\in\mathcal R_m}
    \left(\sum_i r_i\right)
    \sup_{x\in\R^d}
    \sum_i\frac{\widetilde v_i(x)}{r_i}.
\]
We first prove the lower bound. Fix any feasible weights \(\lambda\), and define
\[
    z_i^{(s)}
    =
    \frac{R_i^{(s)}}{C_s},
    \qquad
    \overline z_i
    =
    \sum_s\lambda_sz_i^{(s)}.
\]
Since \(C_s=\sum_iR_i^{(s)}\), we have \(\sum_i z_i^{(s)}=1\) and \(\sum_i\overline z_i=1\). Each \(z^{(s)}\) is positive and nondecreasing. Hence, \(\overline z\) is also positive and nondecreasing. Convexity of \(z\mapsto1/z\) gives
\[
    \sum_i\frac{\widetilde v_i(x)}{\overline z_i}
    \le
    \sum_s\lambda_sC_s
    \sum_i\frac{\widetilde v_i(x)}{R_i^{(s)}}.
\]
Set
\[
    r_i
    =
    \frac{\overline z_i}{\overline z_0}.
\]
Then \(r\in\mathcal R_m\). Since \(\sum_i\overline z_i=1\),
\[
    \left(\sum_i r_i\right)
    \left(\sum_i\frac{\widetilde v_i(x)}{r_i}\right)
    =
    \sum_i\frac{\widetilde v_i(x)}{\overline z_i}.
\]
Therefore,
\[
    V^*
    \le
    \left(\sum_i r_i\right)
    \sup_{x\in\R^d}
    \sum_i\frac{\widetilde v_i(x)}{r_i}
    \le
    \sup_{x\in\R^d}
    \sum_s\lambda_sC_s
    \sum_i\frac{\widetilde v_i(x)}{R_i^{(s)}}.
\]
This holds for every feasible \(\lambda\). Taking the minimum over \(\lambda\) proves the lower bound.

We now prove the upper bound. Let \(U\sim\operatorname{Unif}[0,1)\). For each candidate profile \(R^{(s)}\), define
\[
    I_s
    =
    \{u\in[0,1):R(u)=R^{(s)}\},
    \qquad
    w_s
    =
    \Pbb(U\in I_s).
\]
The sets \(I_s\) partition \([0,1)\), and \(w_s\) is their length. Thus \(w_s\ge0\) and \(\sum_s w_s=1\), so \(w\) is a feasible choice of mixture weights. Fix \(i,j\) and write
\[
    \log_2\!\left(\frac{r_i^*}{r_j^*}\right)
    =
    k+f,
    \qquad
    k\in\mathbb Z,
    \quad
    0\le f<1.
\]
The fractional part of \(\log_2r_j^*+U\) is uniform on \([0,1)\). Hence, the rounded exponent difference equals \(k\) with probability \(1-f\) and \(k+1\) with probability \(f\). Therefore,
\[
    \E_U\!\left[
        \frac{R_i(U)}{R_j(U)}
    \right]
    =
    (1-f)2^k+f2^{k+1}
    =
    \frac{r_i^*}{r_j^*}(1+f)2^{-f}.
\]
The last factor satisfies
\[
    \max_{0\le f<1}(1+f)2^{-f}
    =
    \frac{2}{e\log 2}.
\]
For every \(x\in\R^d\),
\[
    \sum_s w_sC_s
    \sum_j\frac{\widetilde v_j(x)}{R_j^{(s)}}
    =
    \E_U\!\left[
        \left(\sum_iR_i(U)\right)
        \left(\sum_j\frac{\widetilde v_j(x)}{R_j(U)}\right)
    \right].
\]
The ratio bound therefore gives
\[
    \sum_s w_sC_s
    \sum_j\frac{\widetilde v_j(x)}{R_j^{(s)}}
    \le
    \frac{2}{e\log 2}
    \left(\sum_i r_i^*\right)
    \left(\sum_j\frac{\widetilde v_j(x)}{r_j^*}\right).
\]
Taking the supremum over \(x\) and using the definition of \(V^*\) gives
\[
    \min_{\substack{\lambda_s\ge0\\ \sum_s\lambda_s=1}}
    \sup_{x\in\R^d}
    \sum_s\lambda_sC_s
    \sum_j\frac{\widetilde v_j(x)}{R_j^{(s)}}
    \le
    \sup_{x\in\R^d}
    \sum_s w_sC_s
    \sum_j\frac{\widetilde v_j(x)}{R_j^{(s)}}
    \le
    \frac{2}{e\log 2}V^*.
\]
This proves the upper bound.
\end{proof}

\subsection{Proof of Proposition~\ref{prop:mixture-functional-limit}}
\label{app:proof-mixture-functional-limit}

\begin{proof}
For any two candidate types \(s\) and \(t\), logarithmic rounding gives
\(R_{i,B}^{(s)}/R_{i,B}^{(t)}\in[1/2,2]\), and hence
\(C_{s,B}\le2C_{t,B}\). Therefore,
\[
    \max_s C_{s,B}
    \le
    \frac{2\overline\gamma_B}{h_B}
    =
    O(h_B^{-1}),
    \qquad
    \sum_sC_{s,B}
    =
    O(h_B^{-2})
    =
    o(B),
\]
where the first bound follows by averaging \(C_{s,B}\le2C_{t,B}\) with respect to \(\lambda_{t,B}\), and the last uses \(S_B\le K_B=O(h_B^{-1})\) and \(Bh_B^2\to\infty\). Let
\[
    D_B
    =
    \sum_sM_{s,B}C_{s,B},
    \qquad
    w_{s,B}
    =
    \frac{M_{s,B}C_{s,B}}{D_B},
\]
be the realized budget and mixture weights after flooring. Then
\[
    0\le B-D_B<\sum_sC_{s,B}=o(B),
    \qquad
    \frac{D_B}{B}\longrightarrow1,
    \qquad
    \lVert w_B-\lambda_B\rVert_1
    \le
    \frac{2(B-D_B)}{D_B}
    \longrightarrow0.
\]
In particular,
\[
    h_B\sum_s
    \lvert w_{s,B}-\lambda_{s,B}\rvert C_{s,B}
    \le
    \left(\max_s h_BC_{s,B}\right)
    \lVert w_B-\lambda_B\rVert_1
    \longrightarrow0.
\]

Let \(Z_{s,B,j}\) be the vector of terminal descendants from root \(j\) of type \(s\), and let \(f_{s,B,x}(Z_{s,B,j})\) be their empirical CDF at \(x\). Then
\[
    \mathbb G_B^{\mathrm{mix}}(a_x)
    :=
    \sqrt{Bh_B}
    \left(
        \widehat F_B^{\mathrm{mix}}(x)-F_B(x)
    \right)
    =
    \sum_{s,j}
    a_{s,B}
    \left[
        f_{s,B,x}(Z_{s,B,j})-F_B(x)
    \right],
    \qquad
    a_{s,B}
    =
    \frac{C_{s,B}\sqrt{Bh_B}}{D_B}.
\]
Moreover,
\[
    \max_sa_{s,B}
    =
    O((Bh_B)^{-1/2})
    \longrightarrow0,
    \qquad
    Q_B
    :=
    \sum_sM_{s,B}a_{s,B}^2
    =
    \frac{B}{D_B}h_B\sum_sw_{s,B}C_{s,B}
    \longrightarrow
    \overline\gamma,
\]
where the last limit follows from the preceding bound and the definition of \(\overline\gamma_B\). Thus, no individual root dominates, while the total squared coefficient converges to \(\overline\gamma\). We next identify the limiting covariance. For one root of type \(s\), write
\[
    \Gamma_B^{(s)}(x,y)
    =
    \Gamma_{0,B}(x,y)
    +
    \sum_i
    \frac{v_{i,B}(x,y)}{R_{i,B}^{(s)}}.
\]
Independence of the roots and the definition of \(\overline r_B\) give
\begin{align*}
    \Cov\!\left(
        \mathbb G_B^{\mathrm{mix}}(a_x),
        \mathbb G_B^{\mathrm{mix}}(a_y)
    \right)
    &=
    \frac{B}{D_B}h_B
    \sum_s w_{s,B}C_{s,B}\Gamma_B^{(s)}(x,y),\\
    h_B\sum_s
    \lambda_{s,B}C_{s,B}\Gamma_B^{(s)}(x,y)
    &=
    \overline\gamma_B
    \left[
        \Gamma_{0,B}(x,y)
        +
        \sum_i
        \frac{v_{i,B}(x,y)}{\overline r_{i,B}}
    \right].
\end{align*}
Since \(\lvert\Gamma_B^{(s)}(x,y)\rvert\le1/4\), the preceding weight bound allows \(w_B\) to be replaced by \(\lambda_B\) at a cost of \(o(1)\). Proposition~\ref{prop:covariance-convergence} and the proof of Proposition~\ref{prop:covlimit}, with \(\overline r_B\) in place of \(r_B\), therefore give
\[
    \Cov\!\left(
        \mathbb G_B^{\mathrm{mix}}(a_x),
        \mathbb G_B^{\mathrm{mix}}(a_y)
    \right)
    \longrightarrow
    \overline\gamma
    \left[
        \Gamma_0(x,y)
        +
        \int_0^T
        \frac{g_{x,y}(t)}{\overline r(t)}\,dt
    \right].
\]
Since the centered root averages are bounded by one, for every \(\eta>0\),
\[
    \sum_{s,j}
    \E\!\left[
        \left\lVert
            a_{s,B}(f_{s,B,\cdot}-F_B)
        \right\rVert_{\cA}^2
        \mathbf 1\!\left\{
            \left\lVert
                a_{s,B}(f_{s,B,\cdot}-F_B)
            \right\rVert_{\cA}>\eta
        \right\}
    \right]
    \le
    Q_B\mathbf 1\{\max_sa_{s,B}>\eta\}
    \longrightarrow0.
\]
Thus, the envelope Lindeberg condition holds, and the covariance limit and the multivariate Lindeberg-Feller theorem give the finite-dimensional limits.

To upgrade this to process convergence, it remains to prove asymptotic \(\rho\)-equicontinuity. For every root type, Jensen's inequality gives
\[
    \E\!\left[
        \left(
            f_{s,B,x}(Z_{s,B,j})
            -
            f_{s,B,y}(Z_{s,B,j})
        \right)^2
    \right]
    \le
    \E\!\left[
        \left(
            a_x(X_{K_B}^B)-a_y(X_{K_B}^B)
        \right)^2
    \right]
    \le
    \rho(a_x,a_y)^2
    +
    4\lVert F_B-F\rVert_\infty.
\]
The proof of Theorem~\ref{thm:splitting-clt} shows that \(\lVert F_B-F\rVert_\infty\to0\). Hence, for every \(\delta_B\downarrow0\),
\[
    \sup_{\rho(a_x,a_y)<\delta_B}
    \sum_{s,j}a_{s,B}^2
    \E\!\left[
        \left(
            f_{s,B,x}(Z_{s,B,j})
            -
            f_{s,B,y}(Z_{s,B,j})
        \right)^2
    \right]
    \le
    Q_B
    \left(
        \delta_B^2
        +
        4\lVert F_B-F\rVert_\infty
    \right)
    \longrightarrow0.
\]
For the entropy condition, define
\[
    d_B(a_x,a_y)^2
    =
    \sum_{s,j}a_{s,B}^2
    \left(
        f_{s,B,x}(Z_{s,B,j})
        -
        f_{s,B,y}(Z_{s,B,j})
    \right)^2.
\]
Let \(\nu_B\) be the probability measure
\[
    \nu_B
    =
    \frac1{Q_B}
    \sum_{s,j}
    \frac{a_{s,B}^2}{R_{K_B-1,B}^{(s)}}
    \sum_{\ell=1}^{R_{K_B-1,B}^{(s)}}
    \delta_{Z_{s,B,j}^{(\ell)}}.
\]
Jensen's inequality gives
\[
    d_B(a_x,a_y)
    \le
    \sqrt{Q_B}\,
    \lVert a_x-a_y\rVert_{L^2(\nu_B)}.
\]
The first entropy bound in Lemma~\ref{lem:vc} therefore gives, uniformly over realizations,
\[
    N(\varepsilon,\cA,d_B)
    \le
    A\left(
        1+\frac{\sqrt{Q_B}}{\varepsilon}
    \right)^v.
\]
Consequently, for every \(\eta_B\downarrow0\),
\[
    \int_0^{\eta_B}
    \sqrt{\log N(\varepsilon,\cA,d_B)}\,d\varepsilon
    \lesssim
    \eta_B\sqrt{\log(1+\eta_B^{-1})}
    \longrightarrow0.
\]
The class \(\cA\) is pointwise measurable and \((\cA,\rho)\) is totally bounded by Lemma~\ref{lem:vc}. The envelope, increment, and entropy bounds above therefore imply asymptotic \(\rho\)-equicontinuity by the independent-array equicontinuity theorem \citep[Theorem 2.11.1]{WeakConvergence2023}.

The limiting covariance kernel is positive semidefinite, and Proposition~\ref{prop:covlimit} bounds its canonical semimetric by \(\sqrt{\overline\gamma}\rho\). Lemma~\ref{lem:vc} and Dudley's entropy theorem therefore give a tight Gaussian process with the covariance in the statement. Together with the finite-dimensional convergence and asymptotic equicontinuity,
\[
    \mathbb G_B^{\mathrm{mix}}
    \Rightarrow
    \sqrt{\overline\gamma}\,\G
    \qquad
    \text{in }\ell^\infty(\cA).
\]
The KS limit follows from the continuous mapping theorem.
\end{proof}

\subsection{Cost-weighted extensions}
\label{app:cost-weighted}

The main text assigns unit cost to every transition and uses an asymptotically uniform time partition. Neither restriction is essential, and neither holds in our experiments: the Heun sampler of Appendix~\ref{app:benchmark-details} spends two network evaluations on every step but the last, and the diffusion samplers use graded time grids. This subsection removes both. The budget is spent on cost rather than on elapsed time, so we measure time in cost. Each result below is then the corresponding unit-cost result in this variable.

Let \(c_{i,B}>0\) be the cost of propagating one particle through transition \(i\). One root tree costs
\[
    C_B
    =
    \sum_{i=0}^{K_B-1}c_{i,B}R_{i,B},
    \qquad
    N_{0,B}=B/C_B.
\]
For \(t\in[t_{i,B},t_{i+1,B})\), define the cost density
\[
    \omega_B(t)
    =
    \frac{h_Bc_{i,B}}
         {t_{i+1,B}-t_{i,B}}.
\]
Here \(\omega_B\) is interpreted as computational cost per unit of physical time, measured on the scale \(h_B\). Summing over transitions gives the exact identity
\[
    h_BC_B
    =
    \int_0^T\omega_B(t)r_B(t)\,dt.
\]
Thus the normalized cost of a root tree is the area under the splitting function when time is weighted by cost. Unit costs on an asymptotically uniform partition give \(\omega_B\to1\) uniformly, and the right side then converges to \(\gamma\), recovering the normalization of Section~\ref{sec:bias-variance-tradeoff}.

Only one hypothesis is needed. It asks that the cost densities have a limit, and that this limit pairs with the splitting functions in the obvious way.

\begin{assumption}\label{ass:cost}
There is a measurable \(\omega:[0,T]\to(0,\infty)\), bounded above and bounded away from zero, such that
\[
    \int_0^T\omega_B(t)r_B(t)\,dt
    \longrightarrow
    \gamma_\omega(r)
    :=
    \int_0^T\omega(t)r(t)\,dt
\]
whenever \(r_B\to r\) as in Assumption~\ref{ass:splitting-function}.
\end{assumption}
Uniform convergence \(\omega_B\to\omega\) is sufficient, since the \(r_B\) are bounded in \(L^1([0,T])\). Here \(\gamma_\omega(r)\) is interpreted as the limiting cost of one root tree. It reduces to \(\gamma(r)\) when \(\omega\equiv1\), and takes its place throughout.

Retain the remaining hypotheses of Corollary~\ref{cor:dg-limit}, and replace unit costs and the asymptotically uniform partition by Assumption~\ref{ass:cost}. If \(\lambda_B\to\lambda\in(0,\infty)\), then
\[
    B^{p/(1+2p)}(\widehat F_B-F_B)
    \Rightarrow
    \left(\frac{\gamma_\omega(r)}{\lambda}\right)^{1/2}\G
    \qquad\text{in }\ell^\infty(\cA),
\]
and \(B^{p/(1+2p)}\|\widehat F_B-F\|_\infty\) is bounded in probability. The costs and the partition enter Corollary~\ref{cor:dg-limit} only through the limit of \(h_BC_B\), which the identity above and Assumption~\ref{ass:cost} now supply. Theorem~\ref{thm:splitting-clt} uses neither, so the rest of that proof is unchanged.

Cost therefore enters the limit through \(\gamma_\omega(r)\) and nowhere else. Proposition~\ref{prop:ks-bounds} applies with \(\gamma(r)\) replaced by \(\gamma_\omega(r)\), and dropping the same logarithmic factor gives the surrogate \(\gamma_\omega(r)^{1/2}\Gamma^*\). Minimizing its square gives the cost-weighted proxy objective
\[
    \inf_{r\in\mathcal R}
    \gamma_\omega(r)
    \sup_{x\in\R^d}
    \left[
        \Gamma_0(x,x)
        +
        \int_0^T\frac{g_{x,x}(t)}{r(t)}\,dt
    \right].
\]

We now measure time in cost. Define the cumulative cost and the total cost
\[
    \Omega(t)=\int_0^t\omega(u)\,du,
    \qquad
    \Omega_T=\Omega(T).
\]
Since \(\omega>0\), the map \(\Omega\) is strictly increasing, so \(s=\Omega(t)\) is a change of variables from \([0,T]\) onto \([0,\Omega_T]\). Writing \(\overline r(s)=r(\Omega^{-1}(s))\) gives
\[
    \gamma_\omega(r)
    =
    \int_0^{\Omega_T}\overline r(s)\,ds,
    \qquad
    \int_0^T\frac{g_{x,x}(t)}{r(t)}\,dt
    =
    \int_0^{\Omega_T}\frac{\overline g_{x,x}(s)}{\overline r(s)}\,ds,
    \qquad
    \overline g_{x,x}(s)
    =
    \frac{g_{x,x}(\Omega^{-1}(s))}
         {\omega(\Omega^{-1}(s))}.
\]
Since \(\Omega\) is increasing, \(\overline r\) is again nondecreasing with \(\overline r(0+)=1\), so \(r\mapsto\overline r\) matches \(\mathcal R\) with the same class on \([0,\Omega_T]\). The cost-weighted objective is thus the unit-cost objective in the variable \(s\). Accordingly, for \(\pi\in\mathcal P(\mathcal X)\), set
\[
    \overline g^\pi(s)
    =
    \frac{g^\pi(\Omega^{-1}(s))}
         {\omega(\Omega^{-1}(s))},
    \qquad
    \overline h_\pi
    =
    \PAVA(\Gamma_0^\pi,\overline g^\pi),
    \qquad 0\le s<\Omega_T.
\]

Under Assumption~\ref{ass:terminal-weak-conv},
\[
    \inf_{r\in\mathcal R}
    \gamma_\omega(r)
    \sup_{x\in\R^d}
    \left[
        \Gamma_0(x,x)
        +
        \int_0^T\frac{g_{x,x}(t)}{r(t)}\,dt
    \right]
    =
    \max_{\pi\in\mathcal P(\mathcal X)}
    \left(
        \int_0^{\Omega_T}\sqrt{\overline h_\pi(s)}\,ds
    \right)^2.
\]
For any maximizer \(\pi^*\), the infimum is attained if and only if \(\overline h_{\pi^*}(0+)>0\), in which case
\[
    r^*(t)
    =
    \sqrt{
        \frac{\overline h_{\pi^*}(\Omega(t))}
             {\overline h_{\pi^*}(0+)}
    },
    \qquad 0\le t<T,
\]
defines a minimizer. Both claims are Theorem~\ref{thm:functional-minimax-opt} applied on \([0,\Omega_T]\) with inputs \((\Gamma_0^\pi,\overline g^\pi)\), read back through \(\Omega\).

Geometrically, \(\overline h_\pi\) is obtained by plotting cumulative variance against cumulative cost, with \(\Gamma_0^\pi\) as an initial atom at cost zero, and taking the slopes of the greatest convex minorant. In the main text every transition costs the same, so cumulative cost is the index and this is the equal-weight construction of Appendix~\ref{app:pava}. Here the horizontal axis is stretched by \(\omega\). Equivalently, in physical time PAVA acts on \(g^\pi(t)/\omega(t)\) with weights \(\omega(t)\,dt\).

The finite-budget construction is the same, with finitely many points, and defines the cost-weighted fit referred to in Appendix~\ref{app:pava}. Fix \(B\), put \(m=K_B\), and write \(c_i=c_{i,B}\). For \(z\in[0,\infty)^m\), let \(\PAVA_c(z)_i\) be the slope of the greatest convex minorant of the polygonal line through
\[
    \left(
        \sum_{j<k}c_j,
        \ \sum_{j<k}z_j
    \right),
    \qquad 0\le k\le m.
\]
As in Appendix~\ref{app:pava}, these slopes are constant on the final blocks, so that
\[
    \PAVA_c(z)_i
    =
    \frac{\sum_{j\in I}z_j}
         {\sum_{j\in I}c_j},
    \qquad i\in I.
\]
Cumulative cost has replaced the index on the horizontal axis, and the block average has become a cost-weighted average. For \(c\equiv1\) this is the equal-weight fit \(\PAVA\).

Replace the feasible set \(\mathcal N\) of Corollary~\ref{cor:minimax-var-opt} by
\[
    \mathcal N_c
    =
    \left\{
        n\in\R_{>0}^m:
        n_0\le\cdots\le n_{m-1},
        \ \sum_{i=0}^{m-1}c_in_i=B
    \right\},
\]
and retain \(\widetilde v_i\) and \(\mathcal V\) as defined there.

Under the assumptions of Corollary~\ref{cor:minimax-var-opt}, the minimum below is attained, and
\[
    \min_{n\in\mathcal N_c}
    \sup_{x\in\R^d}
    \sum_{i=0}^{m-1}\frac{\widetilde v_i(x)}{n_i}
    =
    \frac1B
    \max_{a\in\mathcal V}
    \left(
        \sum_{i=0}^{m-1}c_i\sqrt{\PAVA_c(a)_i}
    \right)^2.
\]
If \(a^*\) is a maximizer and \(h^*=\PAVA_c(a^*)\), then \(h_i^*>0\) for every \(i\), and every minimizing allocation satisfies
\[
    n_i^*
    =
    B
    \frac{\sqrt{h_i^*}}
         {\sum_{j=0}^{m-1}c_j\sqrt{h_j^*}},
    \qquad 0\le i<m.
\]
The argument of Corollary~\ref{cor:minimax-var-opt} goes through with two substitutions. The PAVA block inequality becomes \(\sum_ia_iq_i\ge\sum_ic_ih_iq_i\) for nonincreasing \(q\), which is again summation by parts against a minorant gap that is nonnegative and vanishes at both ends. Cauchy-Schwarz is then applied as
\[
    \left(\sum_{i=0}^{m-1}\frac{c_ih_i}{n_i}\right)
    \left(\sum_{i=0}^{m-1}c_in_i\right)
    \ge
    \left(\sum_{i=0}^{m-1}c_i\sqrt{h_i}\right)^2,
\]
with equality when \(n\propto\sqrt h\), which is nondecreasing and hence feasible. The Sion argument and the positivity of \(h^*\) are unchanged.

This is the allocation problem solved in the experiments. Appendix~\ref{app:allocation} gives the Frank-Wolfe scheme used to compute a maximizing \(a^*\), and Appendix~\ref{app:ou-oracle} uses it for the oracle benchmark.

\section{Two-Phase Procedure and Evaluation}
\label{app:allocation-estimation}

This appendix gives the full two-phase procedure. Phase~1 uses complete paths to estimate the variance added by each path segment. We then solve a cost-weighted allocation problem. Phase~2 converts the relaxed allocation into a finite mixture of exact dyadic trees. The final subsection defines the reference samples and evaluation metrics.

\subsection{Phase~1: Simulation, Query Generation, and Estimation}
\label{app:phase-one}

The allowed split times
\[
    0=t_0<t_1<\cdots<t_{L-1}<t_L=T
\]
divide each path into \(L\) segments. Let \(c_i>0\) be the cost of propagating one particle through segment \(i\). One complete path costs
\[
    C_{\mathrm{full}}
    =
    \sum_{i=0}^{L-1}c_i.
\]
Phase~1 receives budget \(B_1\). It simulates
\[
    J
    =
    \left\lfloor
    \frac{B_1}{C_{\mathrm{full}}}
    \right\rfloor
\]
independent complete paths without splitting. Each path is recorded at \(t_0,\ldots,t_L\). The remaining Phase~2 budget is
\[
    B_2
    =
    B-JC_{\mathrm{full}}.
\]
Define the Phase~1 empirical CDF and its realized budget share by
\[
    \widehat F_1(x)
    =
    \frac1J\sum_{j=1}^J
    \one\!\left(X_T^{(j)}\le x\right),
    \qquad
    \beta_{\mathrm P}
    =
    \frac{JC_{\mathrm{full}}}{B}.
\]
We retain these terminal samples for the final estimate, where \(\beta_{\mathrm P}\) is the fraction of the total budget actually spent on the complete Phase~1 paths.
Given these paths, we want to estimate the quantities \(v_i(x)\) for all \(x \in \R^d\). We replace this with a finite-query approximation using \(Q\) queries. Further, to avoid sparse queries, we restrict the number of active coordinates in each query to be at most \(d_{\max}\). We use \(d_{\max}=2\) for the two-dimensional models and \(d_{\max}=512\) for CIFAR-10. Each repetition draws its own set of queries.

For query \(q\), draw
\[
    k_q
    \sim
    \operatorname{Unif}\{1,\ldots,d_{\max}\}.
\]
Select \(k_q\) coordinates uniformly without replacement. Set
\[
    \mu_q
    =
    0.05
    +
    \left(q-\tfrac12\right)\frac{0.90}{Q},
    \qquad
    1\le q\le Q.
\]
These levels are evenly spaced between \(0.05\) and \(0.95\). For each query, draw
\[
    (s_{q,1},\ldots,s_{q,k_q})
    \sim
    \operatorname{Dirichlet}(1,\ldots,1).
\]
For each selected coordinate \(a\), set \(x_{q,a}\) to its Phase~1 empirical quantile at level \(\mu_q^{s_{q,a}}\). Set every unselected coordinate to \(+\infty\). Those coordinates then impose no restriction on the event.

Given these queries, we now estimate the variance contributions. For each query \(q\), let \(x_q\) be the corresponding threshold vector. Define
\[
    Y_q^{(j)}
    =
    \one\!\left(X_T^{(j)}\le x_q\right),
    \qquad
    \widehat F_q
    =
    \frac1J\sum_{j=1}^JY_q^{(j)}.
\]
Let \(F_B\) denote the terminal CDF on the discretization used at budget \(B\). For each query, define
\[
    u_q(t_i,z)
    =
    \Pbb(X_T\le x_q\mid X_{t_i}=z)
\]
and
\[
    Q_i(x_q)
    =
    \E\!\left[u_q(t_i,X_{t_i})^2\right].
\]
We estimate \(u_q(t_i,\cdot)\) by regressing \(Y_q^{(j)}\) on the path state at \(t_i\) and the query \(x_q\). We assign the pilot paths cyclically to \(K\) folds. For each fold, we train one MLP on the other folds and predict the held-out paths. The networks have hidden widths \(128\) and \(64\), SiLU activations, and a sigmoid output.

One MLP is shared across all recorded times and queries within a fold. Its input consists of the active-coordinate margins, time, the fraction of active coordinates, and the logit of the query mass. Each input is standardized before fitting. The label is \(Y_q^{(j)}\).

We train for five epochs with squared-error loss. Let \(p_{iq}^{(j)}\) be the held-out prediction for path \(j\), time \(t_i\), and query \(q\). For a fixed prediction rule \(p\),
\[
    \E\!\left[
        2p(X_{t_i})Y_q-p(X_{t_i})^2
    \right]
    =
    Q_i(x_q)
    -
    \E\!\left[
        \bigl(p(X_{t_i})-u_q(t_i,X_{t_i})\bigr)^2
    \right].
\]
This identity motivates
\[
    \widehat Q_i^{\mathrm{raw}}(x_q)
    =
    \frac1J
    \sum_{j=1}^J
    \left[
        2p_{iq}^{(j)}Y_q^{(j)}
        -
        \bigl(p_{iq}^{(j)}\bigr)^2
    \right],
    \qquad
    0\le i<L.
\]
We set
\[
    \widehat Q_L^{\mathrm{raw}}(x_q)
    =
    \widehat F_q.
\]
We note that our estimator is not unbiased. Each \(p_{iq}^{(j)}\) comes from an MLP trained without path \(j\), but the query thresholds and \(\widehat F_q\) still use the full Phase~1 sample. Now, see that the true second moments satisfy
\[
    F_B(x_q)^2
    \le
    Q_0(x_q)
    \le
    \cdots
    \le
    Q_L(x_q)
    =
    F_B(x_q).
\]
Sampling error can violate this order. We first clip the raw estimates to \([\widehat F_q^2,\widehat F_q]\). We then apply nondecreasing PAVA to the full sequence while holding the terminal value fixed at \(\widehat F_q\). Write \(\widehat Q_i(x_q)\) for the projected values. The root population controls both the initial-state variance and the variance added in the first segment. The estimated contributions used by the optimizer are
\[
    \widehat v_0(x_q)
    =
    \widehat Q_1(x_q)-\widehat F_q^2
\]
and
\[
    \widehat v_i(x_q)
    =
    \widehat Q_{i+1}(x_q)-\widehat Q_i(x_q),
    \qquad
    1\le i<L.
\]

\subsection{Optimization}
\label{app:allocation}

Let \(n_i>0\) be the relaxed total number of particles propagated through segment \(i\). For query \(q\), define
\[
    \ell_q(n)
    =
    \sum_{i=0}^{L-1}
    \frac{\widehat v_i(x_q)}{n_i}.
\]
We solve
\[
    \min_n
    \max_{1\le q\le Q}
    \ell_q(n)
\]
subject to
\[
    \sum_{i=0}^{L-1}c_in_i=B_2,
    \qquad
    0<n_0\le\cdots\le n_{L-1}.
\]
We solve the query-mixture dual with Frank-Wolfe. Let \(\pi_q\ge0\) be the current query weights, with
\[
    \sum_{q=1}^Q\pi_q=1.
\]
Form
\[
    p_i
    =
    \sum_{q=1}^Q
    \pi_q\widehat v_i(x_q).
\]
Apply the cost-weighted fit of Appendix~\ref{app:cost-weighted} to obtain \(h=\PAVA_c(p)\). The resulting relaxed allocation is
\[
    n_i
    =
    B_2
    \frac{\sqrt{h_i}}
         {\sum_{r=0}^{L-1}c_r\sqrt{h_r}},
    \qquad
    0\le i<L.
\]

At each Frank-Wolfe step, we recompute the allocation and choose
\[
    q^\star
    \in
    \arg\max_{1\le q\le Q}
    \ell_q(n).
\]
A line search chooses \(\eta\in[0,1]\). We then update
\[
    \pi
    \leftarrow
    (1-\eta)\pi+\eta e_{q^\star}.
\]
Here, \(e_{q^\star}\) is the unit vector for query \(q^\star\).
The final Frank-Wolfe iterate gives the relaxed particle counts \(n_0,\ldots,n_{L-1}\).

\subsection{Final Algorithm}
\label{app:final-algorithm}

Algorithm~\ref{alg:final-algorithm} summarizes the complete procedure. Phase~1 learns the allocation and supplies a full-path component. Phase~2 uses fresh paths to construct the splitting component.

\begin{algorithm}[H]
\caption{Final two-phase algorithm}
\label{alg:final-algorithm}
\begin{algorithmic}[1]
\Require Budget \(B\), pilot budget \(B_1\), split times, segment costs \(c_0,\ldots,c_{L-1}\), and queries \(x_1,\ldots,x_Q\)
\State Use budget \(B_1\) to estimate the variance contributions and retain \(\widehat F_1\) as in Subsection~\ref{app:phase-one}
\State Use the remaining budget \(B_2\) to solve the relaxed allocation problem in Subsection~\ref{app:allocation}
\State Normalize the solution by setting \(r_i=n_i/n_0\)
\State Construct the candidate dyadic profiles \(R^{(1)},\ldots,R^{(S)}\) from \(r\) as in Section~\ref{sec:constructing-dyadic-trees}
\State Solve the mixture problem in Proposition~\ref{prop:dyadic-mixture-approximation} to obtain \(\lambda\)
\State For each type \(s\), set \(C_s=\sum_i c_iR_i^{(s)}\), set \(M_s=\lfloor\lambda_sB_2/C_s\rfloor\), and simulate \(M_s\) fresh trees
\State Set \(B_2'=\sum_{s=1}^S M_sC_s\) and, for \(x\in\R^d\),
\Statex \hspace{\algorithmicindent}\(\displaystyle
\widehat F_2(x)=\sum_{s=1}^S\sum_{j=1}^{M_s}\sum_{\ell=1}^{R_{L-1}^{(s)}}
\frac{C_s}{B_2'R_{L-1}^{(s)}}\one\!\left(Z_{s,j}^{(\ell)}\le x\right)\)
\State Set \(\beta_{\mathrm P}=JC_{\mathrm{full}}/B\)
\State \Return \(\widehat F_B^{\mathrm{reuse}}=\beta_{\mathrm P}\widehat F_1+(1-\beta_{\mathrm P})\widehat F_2\)
\end{algorithmic}
\end{algorithm}

Each Phase~1 terminal sample receives weight \(\beta_{\mathrm P}/J\), and each Phase~2 leaf of type \(s\) receives weight
\[
    \frac{(1-\beta_{\mathrm P})C_s}
         {B_2'R_{L-1}^{(s)}}.
\]
Conditional on Phase~1,
\[
    \E\!\left[\widehat F_B^{\mathrm{reuse}}(x)\mid\text{Phase~1}\right]
    =
    \beta_{\mathrm P}\widehat F_1(x)
    +(1-\beta_{\mathrm P})F_B(x),
\]
so the reused estimator is generally not conditionally unbiased. It is exactly unconditionally unbiased because \(\E[\widehat F_1(x)]=F_B(x)\). Moreover, \(B_1=o(B)\) implies \(\beta_{\mathrm P}\to0\), so the conditional discrepancy introduced by reuse vanishes asymptotically.

\subsection{Metrics and Evaluations}
\label{app:evaluation-metrics}

Appendix~\ref{app:benchmark-details} defines the models and simulation settings. To evaluate each model, we compute a metric between the estimated CDF and a reference CDF. For each model, the reference CDF is generated once and fixed, and the same sample is used for every method, budget, split schedule, and repetition. For the OU-process and overdamped Langevin models, the reference CDF is the empirical distribution of a large number of samples obtained by simulating a finely discretized SDE. For the EDM model, the reference CDF is the empirical distribution of a large number of samples drawn directly from the Gaussian mixture. For CIFAR-10, the reference CDF is the empirical distribution of a large number of samples generated by the full DDPM sampler. Table~\ref{tab:reference-samples} gives the source and size of each sample.

\begin{table}[H]
\centering
\small
\begin{tabularx}{\textwidth}{@{}lXr@{}}
\toprule
\textbf{Model} & \textbf{Reference source} & \textbf{Samples} \\
\midrule
OU process & Euler-Maruyama with $20{,}000$ steps & $2{,}500{,}000$ \\
Overdamped Langevin & Euler-Maruyama with $20{,}000$ steps & $2{,}500{,}000$ \\
EDM & Direct draws from the Gaussian mixture & $5{,}000{,}000$ \\
CIFAR-10 DDPM & Full $1{,}000$-step DDPM sampler & $20{,}000$ \\
\bottomrule
\end{tabularx}
\caption{Reference samples used to evaluate the generated samples.}
\label{tab:reference-samples}
\end{table}

For any function \(f\) on the terminal state space, define the generated and reference empirical operators by
\[
    \widehat P_G f
    =
    \frac{\beta_{\mathrm P}}{J}
    \sum_{j=1}^J f\!\left(X_T^{(j)}\right)
    +(1-\beta_{\mathrm P})
    \sum_{s=1}^S
    \sum_{j=1}^{M_s}
    \sum_{\ell=1}^{R_{L-1}^{(s)}}
    \frac{C_s}{B_2'R_{L-1}^{(s)}}
    f\!\left(Z_{s,j}^{(\ell)}\right),
    \qquad
    \widehat P_R f
    =
    \frac1{|R|}\sum_{z\in R}f(z).
\]
Thus every Phase~1 output receives weight \(\beta_{\mathrm P}/J\), every Phase~2 leaf of type \(s\) receives weight \((1-\beta_{\mathrm P})C_s/(B_2'R_{L-1}^{(s)})\), and the fixed reference sample remains uniformly weighted. The OU, overdamped Langevin, and EDM examples are two-dimensional. For \(a_x(z)=\one(z\le x)\), compare the generated output \(G\) with the fixed reference sample \(R\) using the two-sample lower-orthant KS distance
\[
    D_{\mathrm{KS}}(G,R)
    =
    \sup_{x\in\R^2}
    \left|
        \widehat P_Ga_x-\widehat P_Ra_x
    \right|.
\]
Both empirical CDFs change only at observed coordinate values. We evaluate that grid and compute the weighted supremum exactly.

The same grid calculation is not practical for CIFAR-10, where each image has \(3{,}072\) coordinates. Hence, we use the maximum mean discrepancy \citep{GrettonEtAl2012}. We compute it with the random Fourier feature approximation of \citet{ZhaoMeng2015}, using a fixed map drawn once and reused for every comparison.

Images are quantized to eight bits. Each pixel coordinate is standardized using the mean and standard deviation of the fixed reference sample. Let \(\eta\) be the median Euclidean distance over sampled pairs of standardized reference images. We use the eight scales
\[
    \eta\,2^{-4},\ldots,\eta\,2^3.
\]
We draw \(1{,}024\) orthogonal random frequencies and divide them equally across these scales. Each frequency contributes one sine coordinate and one cosine coordinate. The coordinates are scaled so that squared feature distances are averaged over the frequencies.

Let \(\phi\) denote the resulting feature map. For generated output \(G\) and reference sample \(R\), define
\[
    \widehat{\operatorname{MMD}}_\phi(G,R)
    =
    \left\|
        \widehat P_G\phi-\widehat P_R\phi
    \right\|_2.
\]
Each learned-splitting repetition reruns Phase~1, optimization, dyadic-mixture construction, and Phase~2, and reuses its Phase~1 terminal samples with weight \(\beta_{\mathrm P}\). All methods have the same total budget \(B\). We use \(2{,}500\) repetitions for each two-dimensional setting and \(50\) repetitions for each CIFAR-10 setting. We report the mean empirical-reference error.

For a splitting method with mean error \(\overline D_{\mathrm{split}}\) and an independent-path baseline with mean error \(\overline D_{\mathrm{ind}}\), the reported percentage reduction is
\[
    100
    \left(
        1-
        \frac{\overline D_{\mathrm{split}}}
             {\overline D_{\mathrm{ind}}}
    \right).
\]

\section{Benchmark Models and Simulation Settings}
\label{app:benchmark-details}

\subsection{Models}

\begin{table}[H]
\centering
\footnotesize
\renewcommand{\arraystretch}{1.4}
\begin{tabularx}{\textwidth}{@{}>{\raggedright\arraybackslash}p{0.14\textwidth}>{\raggedright\arraybackslash}X >{\raggedright\arraybackslash}p{0.23\textwidth}@{}}
\toprule
\textbf{Model} & \textbf{Dynamics} & \textbf{Settings} \\
\midrule
\textbf{OU Process} &
$\displaystyle
\begin{aligned}
dX_{t,j}
&=\big[1.35(-0.2-X_{t,j})+0.25(\bar X_t-X_{t,j})\big]dt\\
&\quad+0.65\,dW_{t,j},\\
\bar X_t&=(X_{t,1}+X_{t,2})/2
\end{aligned}$
& $\displaystyle
\begin{aligned}
j&=1,2,\\
X_{0,j}&\stackrel{\mathrm{iid}}{\sim}\mathcal N(0,1),\\
T&=1
\end{aligned}$ \\
\cmidrule(lr){1-3}
\textbf{Overdamped Langevin} &
$\displaystyle
\begin{aligned}
U(x_1,x_2)
&=(x_1^2-1)^2+(x_2^2-1)^2+\tfrac12(x_2-x_1)^2,\\
dX_t&=-\nabla U(X_t)\,dt+\sqrt2\,dW_t
\end{aligned}$
& $\displaystyle
\begin{aligned}
j&=1,2,\\
X_{0,j}&\stackrel{\mathrm{iid}}{\sim}\mathcal N(-1,0.05),\\
T&=2
\end{aligned}$ \\
\cmidrule(lr){1-3}
\textbf{EDM} &
$\displaystyle
\begin{aligned}
\frac{dX_\sigma}{d\sigma}
&=\frac{X_\sigma-D_\theta(X_\sigma,\sigma)}{\sigma}
=-\sigma s_\theta(X_\sigma,\sigma),\\[2pt]
s_\theta(x,\sigma)&=\frac{D_\theta(x,\sigma)-x}{\sigma^2}
\end{aligned}$
& $\displaystyle
\begin{aligned}
X_{\sigma_{\max}}&\sim\mathcal N(0,\sigma_{\max}^2I_2),\\
\sigma_{\max}&=80,\\
\sigma_{\min}&=0.002
\end{aligned}$

\textnormal{Neural denoiser:} $D_\theta$ \\
\cmidrule(lr){1-3}
\textbf{CIFAR-10 DDPM} &
\texttt{google/ddpm-cifar10-32}
& Bundled DDPM scheduler, with step counts in Table~\ref{tab:numerical-schedules} \\
\bottomrule
\end{tabularx}
\caption{Benchmark models. The EDM denoiser is trained and sampled in standardized coordinates, so its prior is stated in that scale.}
\label{tab:benchmark-models}
\end{table}

The EDM row of Table~\ref{tab:benchmark-models} shows the probability-flow ODE used by the sampler. Following \citet{Karras2022edm}, we trained the neural denoiser $D_\theta$ on samples from the equally weighted Gaussian mixture
\[
\frac12\mathcal N\!\left(
\begin{bmatrix}-1\\1\end{bmatrix},
\begin{bmatrix}0.20&0.05\\0.05&0.30\end{bmatrix}
\right)
+
\frac12\mathcal N\!\left(
\begin{bmatrix}1\\-1\end{bmatrix},
\begin{bmatrix}0.30&-0.08\\-0.08&0.15\end{bmatrix}
\right).
\]
We use the stochastic EDM sampler with time-step exponent $3$, in the parameterization of \citet{Karras2022edm}. At each step, stochastic churn is applied before a second-order Heun update of the probability-flow ODE. The churn parameters are $(S_{\mathrm{churn}},S_{\min},S_{\max},S_{\mathrm{noise}})=(40,0,80,1)$. Direct samples from the mixture are used only for the reference distribution.

\subsection{Simulation settings}

The nine-split schedule splits every \(10\%\) from \(10\%\) through \(90\%\). The nineteen-split schedule splits every \(5\%\) from \(5\%\) through \(95\%\). The thirty-nine-split schedule splits every \(2.5\%\) from \(2.5\%\) through \(97.5\%\). For learned splitting, we use \(K=5\) folds. We use \(Q=1{,}024\) queries for the two-dimensional models and \(Q=4{,}096\) queries for CIFAR-10. We take \(B_1=\lfloor 5B^{0.66}\rfloor\) for OU and overdamped Langevin, \(B_1=\lfloor 10B^{0.66}\rfloor\) for EDM, and \(B_1=\lfloor 10B^{0.66}\rfloor\) for CIFAR-10. All methods use the same time grid across budgets, with the number of steps per budget given in Table~\ref{tab:numerical-schedules}.

OU and overdamped Langevin are solved with Euler-Maruyama, EDM uses the second-order Heun solver of \citet{Karras2022edm}, and DDPM uses the DDPM sampler. Table~\ref{tab:numerical-schedules} gives the step counts and EDM network-evaluation counts.

\begin{table}[H]
\centering
\small
\begin{tabular}{rrrrr}
\toprule
$B$ & Euler steps & EDM steps & EDM NFE & DDPM steps \\
\midrule
50k  & -   & -  & -   & 200 \\
100k & 130 & 40 & 79  & 252 \\
200k & 160 & 46 & 91  & 317 \\
500k & 220 & 55 & 109 & 430 \\
1M   & 280 & 63 & 125 & 542 \\
2M   & 350 & 73 & 145 & -   \\
5M   & 475 & 87 & 173 & -   \\
\bottomrule
\end{tabular}
\caption{Numerical schedules by budget. Euler steps apply to OU and overdamped Langevin. EDM NFE is the number of network evaluations per sample.}
\label{tab:numerical-schedules}
\end{table}

The step and network-evaluation counts in Table~\ref{tab:numerical-schedules} determine the segment costs $c_i$. Cost is measured in numerical steps for the OU, overdamped Langevin, and DDPM samplers, and in network evaluations for EDM. The final EDM step omits the corrector and costs one evaluation, so $K$ solver steps use $2K-1$ network evaluations.

\subsection{Uniform and Learned \texorpdfstring{\(c\)}{c} Baselines}
\label{app:c-baselines}

Both baselines restrict the normalized relaxed profile to
\[
    r_i(c)=c^i,
    \qquad
    0\le i<L.
\]
To distinguish the split factor \(c\) from the segment costs \(c_i\) above, write those costs as \(\kappa_i\) in this subsection. Uniform-\(c\) uses no pilot phase and fixes \(c\) before simulation. We construct the exact dyadic candidate trees by the logarithmic rounding in Section~\ref{sec:constructing-dyadic-trees}. Since this baseline has no variance estimates, we assign each candidate the length of the set of shifts \(u\in[0,1)\) that produces it, floor the resulting root counts, and use the realized budget shares in the estimator. For OU, overdamped Langevin, and EDM, we use \(c\in\{1.1,1.15,1.25,1.5\}\) with nine splits, \(c\in\{1.1,1.15,1.2\}\) with nineteen splits, and \(c\in\{1.05,1.1\}\) with thirty-nine splits.

Learned-\(c\) uses the same Phase~1 estimates \(\widehat v_i(x_q)\), pilot budget, and remaining budget \(B_2\) as the learned allocation, but restricts the optimization to the one-parameter family above. It selects
\[
    \widehat c
    \in
    \operatorname*{argmin}_{c\ge1}
    \left(\sum_{i=0}^{L-1}\kappa_i c^i\right)
    \max_{1\le q\le Q}
    \left(\sum_{i=0}^{L-1}
    \widehat v_i(x_q)c^{-i}\right).
\]
After constructing the dyadic candidates \(R^{(1)},\ldots,R^{(S)}\) from \(r(\widehat c)\), define
\[
    C_s=\sum_{i=0}^{L-1}\kappa_iR_i^{(s)},
    \qquad
    E_{sq}
    =
    C_s\sum_{i=0}^{L-1}
    \frac{\widehat v_i(x_q)}{R_i^{(s)}}.
\]
The Phase~2 budget shares solve the linear program
\[
\begin{aligned}
    \min_{\lambda,t}\quad & t \\
    \text{subject to}\quad
    & \sum_{s=1}^S\lambda_sE_{sq}\le t,
      && 1\le q\le Q,\\
    & \sum_{s=1}^S\lambda_s=1,
      \qquad \lambda_s\ge0.
\end{aligned}
\]
We initially assign \(M_s=\lfloor\lambda_sB_2/C_s\rfloor\) roots. Candidate types receiving no roots are removed and the program is re-solved. As in Algorithm~\ref{alg:final-algorithm}, the Phase~1 terminal samples are retained with their realized budget share and the Phase~2 weights are scaled by the remaining share.

\section{OU Finite-Query Minimax Benchmark}
\label{app:ou-oracle}

For the linear-Gaussian OU process, the Euler chain gives the variance contributions without Phase~1 estimation. We use them to solve a finite-query version of the minimax problem in Corollary~\ref{cor:minimax-var-opt}. We call the result the oracle allocation.

\subsection{Oracle allocation}

For a lower-orthant event $A_x=\{X_K\le x\}$, define
\[
    M_k(x)=\Pbb(A_x\mid X_k),
    \qquad
    Q_x(k)=\E[M_k(x)^2].
\]
The quantity $Q_x(k)$ is the probability that two terminal descendants both lie in $A_x$ when they share the state at step $k$ and use independent noise afterward. Because the process is linear and Gaussian, we evaluate $F(x)$ and $Q_x(k)$ from the Euler-chain covariances and numerical Gaussian CDFs. If $k_1,\ldots,k_{L-1}$ are the split steps, their successive differences give the segmentwise variance vector
\[
\begin{aligned}
    v_0(x)&=Q_x(k_1)-F(x)^2,\\
    v_i(x)&=Q_x(k_{i+1})-Q_x(k_i),
    \quad 1\le i<L-1,\\
    v_{L-1}(x)&=F(x)-Q_x(k_{L-1}).
\end{aligned}
\]
We evaluate these vectors on a fixed set of \(10{,}200\) lower-orthant queries. A Frank-Wolfe solver optimizes over their convex hull using the cost-weighted finite minimax allocation of Appendix~\ref{app:cost-weighted}. Thus the allocation comes from the finite minimax problem rather than one representative query. Normalize the relaxed oracle counts by their root count and generate the corresponding exact tree candidates. Since the exact variance table is available, solve the cost-weighted tree-mixture linear program with that table. Floor the root counts and use realized budget-share weights.

\subsection{Simulated KS Evaluation}

At $B=5\times10^6$, the Euler chain has $K=475$ steps. We compute the oracle allocation once for each split schedule. Figure~\ref{fig:ou-oracle-allocations} compares this allocation with the learned mean allocation over $2{,}500$ complete experiment repetitions. Here $R_i$ is the number of particles per initial root after split $i$. A value of one means that no branching is needed.

\begin{figure}[H]
    \centering
    \begingroup
    \setlength{\fboxsep}{0pt}
    \fbox{\includegraphics[width=\dimexpr\textwidth-2\fboxrule\relax]{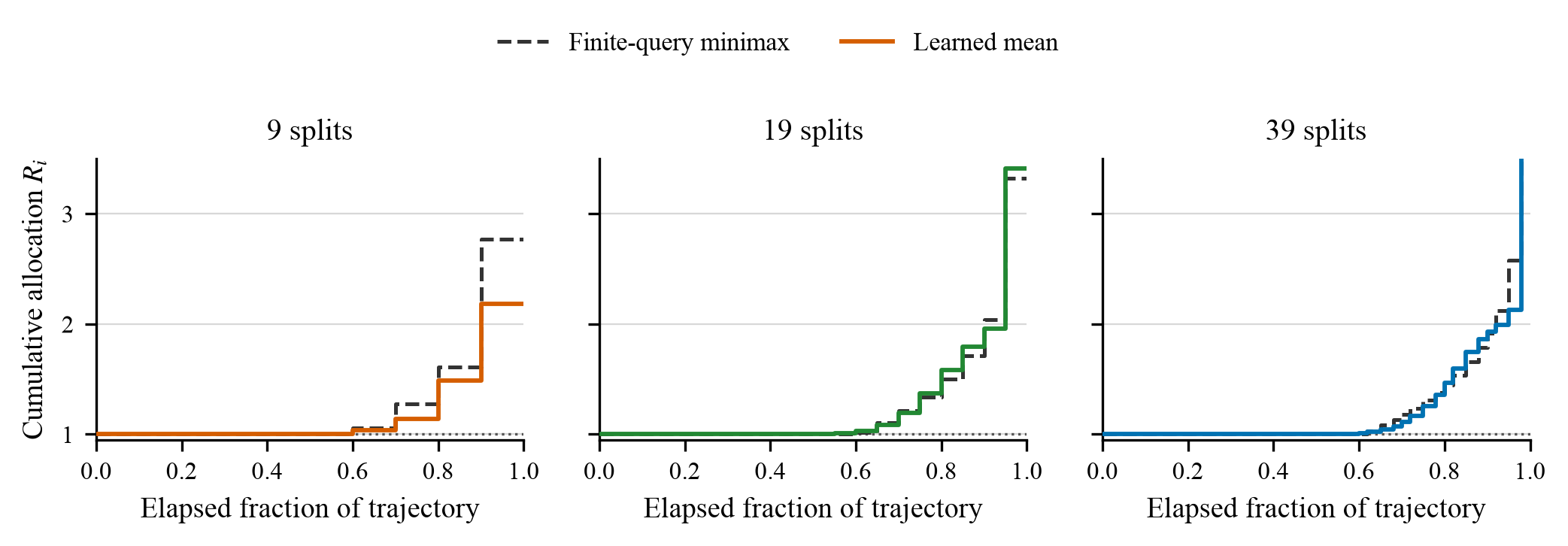}}
    \endgroup
    \caption{Finite-query minimax and learned mean cumulative OU allocations for schedules with 9, 19, and 39 splits of the $475$-step Euler process at $B=5\times10^6$. Learned values are means over $2{,}500$ runs.}
    \label{fig:ou-oracle-allocations}
\end{figure}

We fix the oracle factors and run the same budgeted splitting simulator used for the learned allocation. For each of $10{,}000$ independent repetitions, we compute the exact two-sample lower-orthant KS distance to the fixed reference sample described in Subsection~\ref{app:evaluation-metrics}. The distance for one repetition is computable, but its expectation under the branching scheme is not available in closed form. Table~\ref{tab:ou-oracle-reductions} therefore reports the simulated reduction in mean KS relative to the shared independent-path baseline.

\begin{table}[H]
\centering
\small
\begin{tabular}{lrrr}
\toprule
Split times & \shortstack{Oracle\\reduction} & \shortstack{Learned\\reduction} & \shortstack{Oracle\\captured} \\
\midrule
9 & \ResultCell{13.9}{0.7} & \ResultCell{14.0}{0.9} & $100.7\%$ \\
19 & \ResultCell{22.8}{0.6} & \ResultCell{18.8}{0.9} & $82.6\%$ \\
39 & \ResultCell{22.4}{0.6} & \ResultCell{21.8}{0.9} & $97.2\%$ \\
\bottomrule
\end{tabular}
\caption{Reduction in measured mean OU KS at $B=5\times10^6$ for the finite-query minimax and learned allocations relative to the shared independent-path baseline. Oracle means use $10{,}000$ repetitions. Learned and independent-path means use $2{,}500$. The final column is the learned reduction as a percentage of the minimax-allocation reduction.}
\label{tab:ou-oracle-reductions}
\end{table}

\section{Complete Numerical Results}
\label{app:complete-results}

Figure~\ref{fig:absolute-metrics} reports the absolute error behind the percentage reductions in the main text. It also shows how the mean error changes with the computational budget.

\begin{figure}[H]
    \centering
    \begingroup
    \setlength{\fboxsep}{0pt}
    \fbox{\includegraphics[width=\dimexpr\textwidth-2\fboxrule\relax]{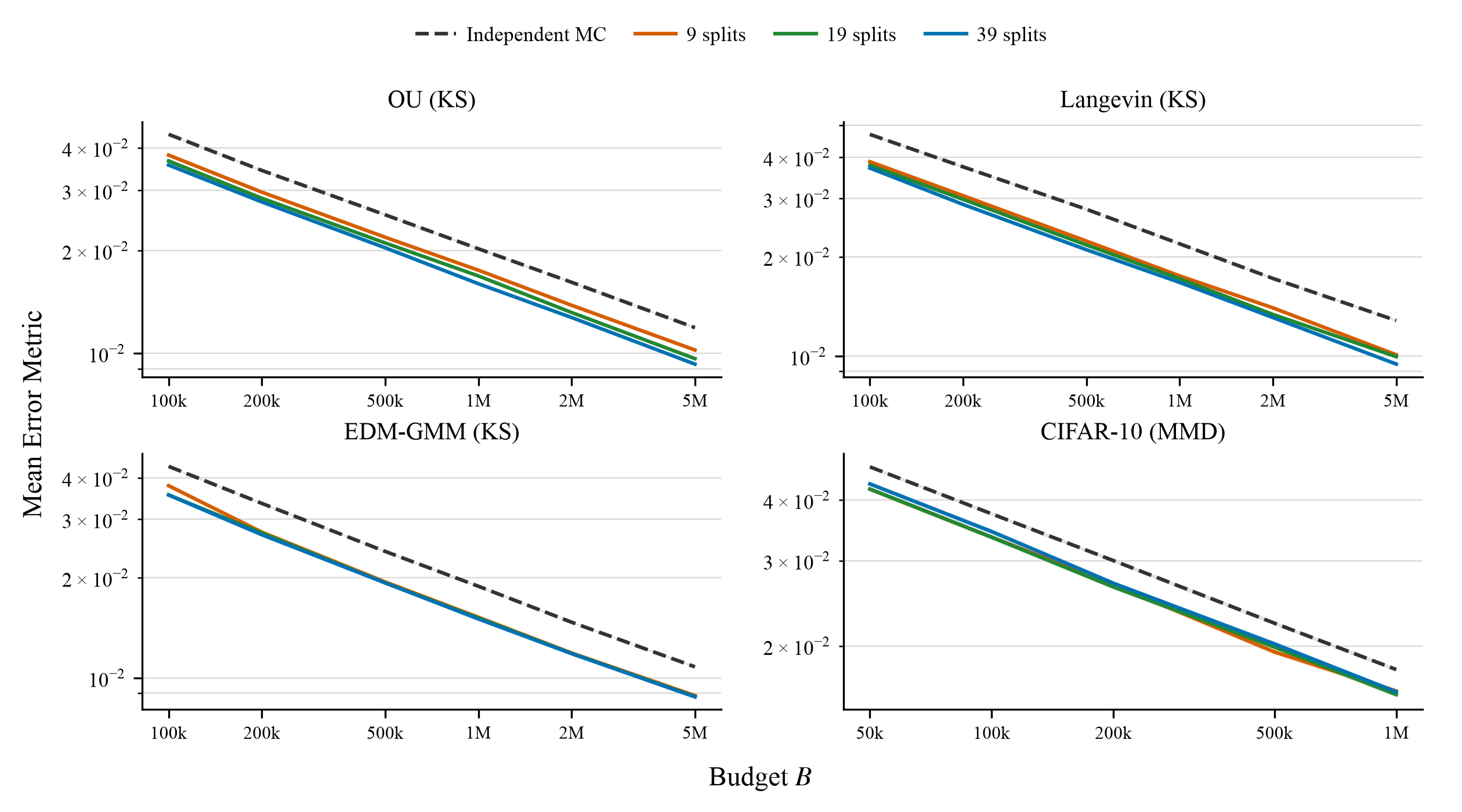}}
    \endgroup
    \caption{Absolute mean error versus budget for independent full-path simulation, the learned allocation, Uniform-$c$, and Learned-$c$. The first three panels show KS over $2{,}500$ repetitions, and the CIFAR-10 panel shows MMD over 50 repetitions. Both axes are logarithmic.}
    \label{fig:absolute-metrics}
\end{figure}

Tables~\ref{tab:complete-ou}-\ref{tab:complete-ddpm} report the percentage reduction in mean error relative to independent full-path simulation. Each cell gives the estimated reduction plus or minus the half-width of its two-sided 90\% normal confidence interval, computed by the delta method for the ratio of two independent sample means. In each budget column, the method with the lowest mean error is bold.

\begin{table}[H]
\centering
\scriptsize
\renewcommand{\arraystretch}{1.15}
\begin{tabular}{@{}llrrrrrr@{}}
\toprule
Split points & Allocation & 100k & 200k & 500k & 1M & 2M & 5M \\
\midrule
9 & Learned & \ResultCell{13.1}{0.9} & \ResultCell{13.6}{0.9} & \ResultCell{14.1}{0.9} & \ResultCell{13.4}{0.9} & \ResultCell{14.4}{0.9} & \ResultCell{14.0}{0.9} \\
 & Learned $c$ & \ResultCell{12.9}{0.9} & \ResultCell{12.1}{0.9} & \ResultCell{12.9}{0.9} & \ResultCell{12.5}{0.9} & \ResultCell{13.4}{0.9} & \ResultCell{13.3}{0.9} \\
 & Uniform ($c=1.1$) & \ResultCell{9.5}{1.0} & \ResultCell{8.6}{1.0} & \ResultCell{8.7}{1.0} & \ResultCell{8.6}{0.9} & \ResultCell{8.4}{1.0} & \ResultCell{8.8}{0.9} \\
 & Uniform ($c=1.15$) & \ResultCell{11.5}{1.0} & \ResultCell{12.0}{1.0} & \ResultCell{10.2}{1.0} & \ResultCell{9.8}{1.0} & \ResultCell{10.8}{0.9} & \ResultCell{10.5}{1.0} \\
 & Uniform ($c=1.25$) & \ResultCell{11.6}{1.0} & \ResultCell{11.2}{1.0} & \ResultCell{11.8}{1.0} & \ResultCell{9.9}{1.0} & \ResultCell{11.0}{1.0} & \ResultCell{11.2}{1.0} \\
 & Uniform ($c=1.5$) & \ResultCell{-12.1}{1.5} & \ResultCell{-12.6}{1.5} & \ResultCell{-11.4}{1.5} & \ResultCell{-13.8}{1.5} & \ResultCell{-11.3}{1.5} & \ResultCell{-11.0}{1.5} \\
\SplitRule
19 & Learned & \ResultCell{16.4}{0.9} & \ResultCell{17.1}{0.9} & \ResultCell{17.3}{0.9} & \ResultCell{16.7}{0.9} & \ResultCell{18.4}{0.9} & \ResultCell{18.8}{0.9} \\
 & Learned $c$ & \ResultCell{12.1}{0.9} & \ResultCell{12.4}{0.9} & \ResultCell{12.6}{0.9} & \ResultCell{13.0}{0.9} & \ResultCell{13.5}{0.9} & \ResultCell{13.7}{0.9} \\
 & Uniform ($c=1.1$) & \ResultCell{11.8}{1.0} & \ResultCell{11.3}{1.0} & \ResultCell{11.7}{1.0} & \ResultCell{10.5}{1.0} & \ResultCell{11.1}{1.0} & \ResultCell{11.2}{1.0} \\
 & Uniform ($c=1.15$) & \ResultCell{7.3}{1.1} & \ResultCell{8.3}{1.1} & \ResultCell{6.9}{1.2} & \ResultCell{6.6}{1.1} & \ResultCell{7.6}{1.1} & \ResultCell{6.6}{1.1} \\
 & Uniform ($c=1.2$) & \ResultCell{-4.0}{1.4} & \ResultCell{-5.9}{1.4} & \ResultCell{-5.4}{1.4} & \ResultCell{-4.9}{1.4} & \ResultCell{-5.0}{1.4} & \ResultCell{-5.2}{1.4} \\
\SplitRule
39 & Learned & \BestResultCell{18.7}{0.9} & \BestResultCell{19.1}{0.9} & \BestResultCell{20.0}{0.9} & \BestResultCell{21.0}{0.8} & \BestResultCell{21.1}{0.9} & \BestResultCell{21.8}{0.9} \\
 & Learned $c$ & \ResultCell{12.1}{0.9} & \ResultCell{12.0}{0.9} & \ResultCell{13.1}{0.9} & \ResultCell{12.4}{0.9} & \ResultCell{13.7}{0.9} & \ResultCell{13.2}{0.9} \\
 & Uniform ($c=1.05$) & \ResultCell{12.3}{1.0} & \ResultCell{10.9}{1.0} & \ResultCell{12.2}{1.0} & \ResultCell{10.8}{1.0} & \ResultCell{11.0}{1.0} & \ResultCell{11.4}{1.0} \\
 & Uniform ($c=1.1$) & \ResultCell{-9.9}{1.5} & \ResultCell{-8.9}{1.5} & \ResultCell{-8.8}{1.5} & \ResultCell{-9.7}{1.5} & \ResultCell{-9.5}{1.5} & \ResultCell{-9.5}{1.5} \\
\bottomrule
\end{tabular}
\caption{OU process: percentage reduction in mean KS with two-sided 90\% normal confidence intervals, at each budget, for the learned allocation, Uniform-$c$, and Learned-$c$.}
\label{tab:complete-ou}
\end{table}

\begin{table}[H]
\centering
\scriptsize
\renewcommand{\arraystretch}{1.15}
\begin{tabular}{@{}llrrrrrr@{}}
\toprule
Split points & Allocation & 100k & 200k & 500k & 1M & 2M & 5M \\
\midrule
9 & Learned & \ResultCell{17.3}{1.0} & \ResultCell{18.2}{1.0} & \ResultCell{19.9}{1.0} & \ResultCell{20.0}{0.9} & \ResultCell{18.5}{1.0} & \ResultCell{21.2}{0.9} \\
 & Learned $c$ & \ResultCell{16.1}{1.0} & \ResultCell{18.0}{1.0} & \ResultCell{20.0}{1.0} & \ResultCell{18.9}{1.0} & \ResultCell{18.4}{1.0} & \ResultCell{20.5}{0.9} \\
 & Uniform ($c=1.1$) & \ResultCell{10.6}{1.0} & \ResultCell{10.9}{1.0} & \ResultCell{11.5}{1.0} & \ResultCell{10.9}{1.0} & \ResultCell{11.0}{1.0} & \ResultCell{12.1}{1.0} \\
 & Uniform ($c=1.15$) & \ResultCell{13.9}{1.0} & \ResultCell{14.2}{1.0} & \ResultCell{16.2}{1.0} & \ResultCell{15.5}{1.0} & \ResultCell{14.7}{1.0} & \ResultCell{15.0}{1.0} \\
 & Uniform ($c=1.25$) & \ResultCell{19.6}{1.0} & \ResultCell{19.5}{1.0} & \ResultCell{20.7}{1.0} & \ResultCell{20.8}{0.9} & \ResultCell{20.7}{0.9} & \ResultCell{20.7}{1.0} \\
 & Uniform ($c=1.5$) & \ResultCell{24.9}{1.0} & \ResultCell{25.0}{1.0} & \ResultCell{25.3}{1.0} & \ResultCell{24.0}{1.0} & \ResultCell{23.5}{1.0} & \ResultCell{25.1}{1.0} \\
\SplitRule
19 & Learned & \ResultCell{19.4}{1.0} & \ResultCell{20.3}{0.9} & \ResultCell{22.0}{1.0} & \ResultCell{21.6}{1.0} & \ResultCell{22.2}{0.9} & \ResultCell{22.3}{0.9} \\
 & Learned $c$ & \ResultCell{17.1}{1.0} & \ResultCell{18.1}{1.0} & \ResultCell{19.2}{1.0} & \ResultCell{19.1}{1.0} & \ResultCell{19.2}{1.0} & \ResultCell{21.0}{0.9} \\
 & Uniform ($c=1.1$) & \ResultCell{18.2}{1.0} & \ResultCell{18.7}{1.0} & \ResultCell{19.7}{1.0} & \ResultCell{18.5}{1.0} & \ResultCell{19.0}{1.0} & \ResultCell{20.1}{0.9} \\
 & Uniform ($c=1.15$) & \ResultCell{24.0}{0.9} & \ResultCell{23.0}{0.9} & \ResultCell{24.5}{0.9} & \ResultCell{23.7}{0.9} & \ResultCell{23.5}{0.9} & \ResultCell{24.0}{0.9} \\
 & Uniform ($c=1.2$) & \ResultCell{25.0}{0.9} & \ResultCell{25.6}{0.9} & \ResultCell{26.9}{0.9} & \ResultCell{25.0}{0.9} & \ResultCell{24.7}{1.0} & \ResultCell{26.1}{0.9} \\
\SplitRule
39 & Learned & \ResultCell{20.9}{1.0} & \ResultCell{23.0}{0.9} & \ResultCell{24.6}{1.0} & \ResultCell{23.6}{0.9} & \ResultCell{23.8}{0.9} & \ResultCell{26.2}{0.9} \\
 & Learned $c$ & \ResultCell{16.2}{1.0} & \ResultCell{17.6}{1.0} & \ResultCell{19.7}{1.0} & \ResultCell{20.3}{1.0} & \ResultCell{19.2}{1.0} & \ResultCell{21.0}{0.9} \\
 & Uniform ($c=1.05$) & \ResultCell{19.3}{1.0} & \ResultCell{19.7}{0.9} & \ResultCell{20.5}{1.0} & \ResultCell{20.3}{1.0} & \ResultCell{19.6}{0.9} & \ResultCell{21.1}{0.9} \\
 & Uniform ($c=1.1$) & \BestResultCell{25.4}{1.0} & \BestResultCell{26.2}{0.9} & \BestResultCell{27.1}{1.0} & \BestResultCell{26.2}{1.0} & \BestResultCell{26.2}{0.9} & \BestResultCell{26.5}{0.9} \\
\bottomrule
\end{tabular}
\caption{Overdamped Langevin: percentage reduction in mean KS with two-sided 90\% normal confidence intervals, at each budget, for the learned allocation, Uniform-$c$, and Learned-$c$.}
\label{tab:complete-langevin}
\end{table}

\clearpage
\begin{table}[H]
\centering
\scriptsize
\renewcommand{\arraystretch}{1.15}
\begin{tabular}{@{}llrrrrrr@{}}
\toprule
Split points & Allocation & 100k & 200k & 500k & 1M & 2M & 5M \\
\midrule
9 & Learned & \ResultCell{12.5}{0.9} & \ResultCell{18.0}{0.8} & \ResultCell{19.0}{0.7} & \ResultCell{19.3}{0.7} & \ResultCell{19.2}{0.6} & \ResultCell{18.0}{0.6} \\
 & Learned $c$ & \ResultCell{16.7}{0.8} & \ResultCell{18.2}{0.8} & \ResultCell{18.4}{0.7} & \ResultCell{19.4}{0.7} & \ResultCell{18.8}{0.6} & \ResultCell{17.5}{0.6} \\
 & Uniform ($c=1.1$) & \ResultCell{8.3}{0.9} & \ResultCell{8.5}{0.9} & \ResultCell{8.7}{0.8} & \ResultCell{8.0}{0.8} & \ResultCell{7.2}{0.7} & \ResultCell{6.6}{0.7} \\
 & Uniform ($c=1.15$) & \ResultCell{12.0}{0.8} & \ResultCell{12.1}{0.8} & \ResultCell{10.8}{0.8} & \ResultCell{11.2}{0.8} & \ResultCell{10.4}{0.7} & \ResultCell{9.4}{0.7} \\
 & Uniform ($c=1.25$) & \ResultCell{16.9}{0.8} & \ResultCell{16.9}{0.8} & \ResultCell{15.6}{0.7} & \ResultCell{15.7}{0.7} & \ResultCell{14.5}{0.7} & \ResultCell{13.3}{0.6} \\
 & Uniform ($c=1.5$) & \BestResultCell{22.8}{0.7} & \ResultCell{22.2}{0.7} & \ResultCell{20.3}{0.7} & \BestResultCell{20.3}{0.7} & \ResultCell{19.4}{0.6} & \ResultCell{17.5}{0.6} \\
\SplitRule
19 & Learned & \ResultCell{17.9}{0.8} & \ResultCell{18.5}{0.7} & \ResultCell{19.6}{0.7} & \ResultCell{19.7}{0.7} & \ResultCell{19.4}{0.6} & \ResultCell{18.6}{0.6} \\
 & Learned $c$ & \ResultCell{16.2}{0.8} & \ResultCell{17.2}{0.8} & \ResultCell{17.8}{0.7} & \ResultCell{18.8}{0.7} & \ResultCell{17.3}{0.6} & \ResultCell{17.0}{0.6} \\
 & Uniform ($c=1.1$) & \ResultCell{14.6}{0.8} & \ResultCell{15.5}{0.8} & \ResultCell{14.4}{0.7} & \ResultCell{13.8}{0.7} & \ResultCell{13.3}{0.7} & \ResultCell{11.8}{0.7} \\
 & Uniform ($c=1.15$) & \ResultCell{19.0}{0.8} & \ResultCell{18.4}{0.8} & \ResultCell{18.1}{0.7} & \ResultCell{17.6}{0.7} & \ResultCell{16.7}{0.7} & \ResultCell{15.0}{0.6} \\
 & Uniform ($c=1.2$) & \ResultCell{21.8}{0.7} & \ResultCell{21.1}{0.7} & \ResultCell{20.3}{0.7} & \ResultCell{19.6}{0.7} & \ResultCell{18.8}{0.6} & \ResultCell{17.1}{0.6} \\
\SplitRule
39 & Learned & \ResultCell{18.0}{0.8} & \ResultCell{19.5}{0.7} & \ResultCell{19.6}{0.7} & \ResultCell{20.3}{0.7} & \ResultCell{19.6}{0.6} & \BestResultCell{18.8}{0.6} \\
 & Learned $c$ & \ResultCell{15.9}{0.8} & \ResultCell{17.4}{0.8} & \ResultCell{18.5}{0.7} & \ResultCell{19.2}{0.7} & \ResultCell{18.3}{0.6} & \ResultCell{17.8}{0.6} \\
 & Uniform ($c=1.05$) & \ResultCell{15.1}{0.8} & \ResultCell{15.4}{0.8} & \ResultCell{14.8}{0.7} & \ResultCell{14.1}{0.7} & \ResultCell{13.6}{0.7} & \ResultCell{12.4}{0.6} \\
 & Uniform ($c=1.1$) & \ResultCell{21.5}{0.7} & \BestResultCell{22.4}{0.7} & \BestResultCell{21.1}{0.7} & \ResultCell{20.2}{0.7} & \BestResultCell{19.7}{0.6} & \ResultCell{17.3}{0.6} \\
\bottomrule
\end{tabular}
\caption{EDM Gaussian mixture: percentage reduction in mean KS with two-sided 90\% normal confidence intervals, at each budget, for the learned allocation, Uniform-$c$, and Learned-$c$.}
\label{tab:complete-edm}
\end{table}

\clearpage
\begin{table}[H]
\centering
\scriptsize
\renewcommand{\arraystretch}{1.15}
\begin{tabular}{llrrrrr}
\toprule
Split points & Allocation & 50k & 100k & 200k & 500k & 1M \\
\midrule
9 & Learned & \BestResultCell{10.0}{1.1} & \BestResultCell{10.5}{1.1} & \ResultCell{10.7}{1.5} & \BestResultCell{12.8}{1.4} & \ResultCell{9.7}{1.5} \\
 & Learned $c$ & \ResultCell{9.1}{1.3} & \ResultCell{10.0}{1.6} & \ResultCell{10.7}{1.6} & \ResultCell{11.6}{1.3} & \ResultCell{10.6}{1.5} \\
\SplitRule
19 & Learned & \ResultCell{10.0}{1.2} & \ResultCell{10.5}{1.3} & \BestResultCell{11.8}{1.2} & \ResultCell{10.7}{1.3} & \ResultCell{11.2}{1.5} \\
 & Learned $c$ & \ResultCell{8.3}{1.3} & \ResultCell{8.8}{1.3} & \ResultCell{11.1}{1.4} & \ResultCell{11.0}{1.3} & \BestResultCell{11.3}{1.3} \\
\SplitRule
39 & Learned & \ResultCell{7.8}{1.2} & \ResultCell{8.1}{1.5} & \ResultCell{10.2}{1.3} & \ResultCell{9.4}{1.4} & \ResultCell{10.0}{1.2} \\
 & Learned $c$ & \ResultCell{5.6}{1.2} & \ResultCell{8.3}{1.2} & \ResultCell{9.9}{1.2} & \ResultCell{10.0}{1.5} & \ResultCell{10.0}{1.2} \\
\bottomrule
\end{tabular}
\caption{CIFAR-10 DDPM: percentage reduction in mean MMD with two-sided 90\% normal confidence intervals, at each budget, for the learned allocation and Learned-$c$.}
\label{tab:complete-ddpm}
\end{table}

\FloatBarrier

\stopcontents[appendices]
\end{document}